\documentclass{article} 
\usepackage{iclr_conference,times}

\usepackage{amsmath,amsfonts,bm}

\def\eqref#1{equation~\ref{#1}}

\def\1{\bm{1}}

\DeclareMathAlphabet{\mathsfit}{\encodingdefault}{\sfdefault}{m}{sl}
\SetMathAlphabet{\mathsfit}{bold}{\encodingdefault}{\sfdefault}{bx}{n}

\usepackage{hyperref}
\usepackage{url}
\usepackage{amsthm}
\usepackage{amssymb}
\usepackage{algorithm}
\usepackage{algpseudocode}
\usepackage{graphicx}
\usepackage{enumitem}
\usepackage{subcaption}

\usepackage{booktabs}
\usepackage[section]{placeins}

\newtheorem{theorem}{Theorem}

\newtheorem{proposition}{Proposition}
\newtheorem{definition}{Definition}
\newtheorem{lemma}{Lemma}

\newtheorem{corollary}{Corollary}

\title{From Adam to Adam-Like Lagrangians: Second-Order Nonlocal Dynamics}

\author{
Carlos Heredia\thanks{e-mail address: carlosherediapimienta@gmail.com} \\
\textit{IAMM Research, Department of Applied Artificial Intelligence} \\
\textit{DAMM, Carrer del Rossell\'o 515, 08025 Barcelona, Catalonia, Spain}
}

\iclrfinalcopy
\begin{document}

\maketitle

\begin{abstract}
In this paper, we derive an accelerated continuous-time formulation of Adam by modeling it as a second-order integro-differential dynamical system. We relate this inertial nonlocal model to an existing first-order nonlocal Adam flow through an $\alpha$-refinement limit, and we provide Lyapunov-based stability and convergence analyses. We also introduce an Adam-inspired nonlocal Lagrangian formulation, offering a variational viewpoint. Numerical simulations on Rosenbrock-type examples show agreement between the proposed dynamics and discrete Adam. 
\end{abstract}

\noindent \textbf{Keywords:} Adaptive Optimization Algorithms, Integro-differential Equations, Lagrangian Formalism, Machine Learning.

\section{Introduction}

Adaptive optimization methods such as Adam \cite{kingma2014} are widely used in modern machine learning. Despite their empirical success, a systematic dynamical interpretation of their mechanisms remains comparatively less developed. The present work contributes to this direction by viewing Adam as a nonlocal dynamical system.

We consider the unconstrained optimization problem
\[
\min_{\theta \in \mathbb{R}^n} f(\theta),
\]
where $f:\mathbb{R}^n \to \mathbb{R}$ is differentiable and typically non-convex. A classical approach to approximate a minimizer is gradient descent \cite{Rumelhart1986}, which iteratively updates the parameters by moving in the direction of steepest decrease,
\[
\theta_{k+1}=\theta_k-\alpha\,\nabla f(\theta_k).
\]
Beyond its algorithmic role, gradient descent provides a canonical bridge between optimization and dynamical systems: under the time rescaling $t\approx k\alpha$, letting $\alpha\to 0$ turns the iteration into an explicit Euler discretization of the gradient flow
\begin{equation}\label{ref:gradient_flow}
\dot{\theta}(t)=-\nabla f(\theta(t)),    
\end{equation}
see, e.g., \cite{boyd2004}.

While Stochastic Gradient Descent (SGD) extends gradient descent by using mini-batches---randomly selected subsets of the data---to approximate gradients \citep{Bottou2010}, in this article we adopt a deterministic, full-gradient viewpoint. This choice aligns naturally with the continuous-time limit and allows for a more direct use of differential-equation techniques, which we exploit in Section~\ref{sec:stability}. For continuous-time descriptions of stochastic gradient methods, we refer to \citet{sirignano2017stochastic, lugosi2025}.

Although the gradient flow in \eqref{ref:gradient_flow} provides a clean dynamical picture, its practical performance often depends critically on the choice of learning rate: a single global stepsize must simultaneously ensure stability and make rapid progress, and in modern nonconvex landscapes this tuning can be delicate. This motivates adaptive methods that self-adjust the effective learning rate using information from past gradients. Adam is a prominent example: it introduces auxiliary moment variables $(m_k,v_k)$ that track exponentially weighted histories of gradients and squared gradients, thereby producing a direction that is smoothed in time and a step size that is normalized using an exponential moving average of squared gradients. In a schematic form, the update reads
\[
\theta_{k+1}
=\theta_k-\alpha\,\frac{\widehat m_k}{\sqrt{\widehat v_k}+\varepsilon},
\]
where $\widehat m_k$ and $\widehat v_k$ denote the usual bias-corrected moment estimates.
Consequently, the effective direction and step size at iteration $k$ are not determined by $\theta_k$ alone, but by a compressed summary of the past gradient signal, with memory horizons controlled by the averaging parameters $(\beta_1,\beta_2)$. Accordingly, a faithful continuous-time description of Adam should be nonlocal in time, with forcing terms that depend on the full past trajectory through causal memory kernels.

A closely related approach was proposed in \cite{heredia2025modelingadagradrmspropadam}, where adaptive methods (including Adam) are modeled directly through first-order integro-differential equations that preserve the intrinsic memory effects via nonlocal terms. That work establishes stability and convergence results for the resulting nonlocal flows and supports the continuous-time formulation with numerical simulations showing close agreement with the corresponding discrete algorithms.

Building on the first-order nonlocal (integro-differential) model for Adam, our aim here is to develop a refined dynamical viewpoint that remains faithful to the algorithm's memory mechanism while capturing higher-order effects.  We derive an accelerated continuous-time description in which Adam dynamics takes the form of an inertial system with damping and a genuinely nonlocal forcing term, reflecting the accumulated gradient history through causal memory operators. We then relate this accelerated model to the first-order nonlocal flow by showing that it reduces to the first-order description in the small-stepsize regime, up to controlled corrections away from the initial time.  On the analytical side, we provide Lyapunov-based stability and convergence upper bounds under standard assumptions and a natural compatibility condition on the averaging parameters (aligned with the classical Adam stability regime). Finally, we highlight a complementary variational perspective through Adam-inspired nonlocal Lagrangians, which suggests symmetry-based principles for algorithm design, and we support the theory with numerical experiments that illustrate both the stable and unstable parameter regimes and the qualitative behavior introduced by the additional initial condition, namely, the initial velocity.

Although we focus on Adam due to its widespread use, the same nonlocal dynamical and variational program can be carried out for other adaptive methods---including AdaGrad and RMSProp---suggesting a broader framework for understanding and designing adaptive optimizers.

\paragraph{Main contributions.}
The primary contributions of this work can be summarized as follows:
\begin{itemize}
  \item We derive a second-order continuous-time representation of Adam as an inertial equation with linear friction and a nonlocal forcing term, together with explicit causal convolution formulas for the continuous moments.
  \item We prove that, as $\alpha\to 0$, the accelerated second-order nonlocal dynamics recovers the first-order nonlocal Adam model on compact horizons away from the initial time, with quantitative asymptotic estimates.
  \item We establish well-posedness, positivity of the second moment, and Lyapunov-type stability and convergence upper bounds for $\beta_1 \le \sqrt{\beta_2}$, including Polyak–Łojasiewicz (PL)---\cite{PL}---and Kurdyka–Łojasiewicz (KL)---\cite{KL}---rate statements up to $\alpha$-dependent remainders.
  \item Motivated by the inertial nonlocal structure, we propose an Adam-like nonlocal Lagrangian formalism for second-order dynamics with memory and discuss the structural constraints induced by strict causality, highlighting links to symmetry-based design principles.
  \item We present numerical simulations that validate the modeling choices, compare discrete Adam with its continuous (first- and second-order) counterparts, diagnose instability outside the stable $\beta$-regime, and study the effect of the additional initial-velocity condition.
\end{itemize}

\paragraph{Organization.}
Section~\ref{Sec:Adam-dynamics} derives the second-order integro-differential formulation and the causal convolution representation of the moments, and establishes its $\alpha$-refinement link to the first-order nonlocal model. Section~\ref{sec:stability} develops stability and Lyapunov-based convergence bounds (including PL/KL rate statements). Section~\ref{sec:Lagrangian} introduces the nonlocal Lagrangian viewpoint and its connection to Adam-like dynamics under reciprocity constraints. Section~\ref{sec:Numerical_Simulations} presents numerical methods and experiments comparing the discrete and (first- and second-order) continuous dynamics.

\subsection{Related work}
Adam is among the most widely used adaptive optimization methods in deep learning due to its practical efficiency and robustness. At the same time, since Adam is intrinsically a discrete-time procedure, several works have pursued continuous-time limits and surrogate dynamical models in order to obtain sharper theoretical insight and to leverage analytical tools from continuous dynamical systems.

\paragraph{Non-convergence and memory effects:}
Despite its strong empirical performance, Adam can exhibit subtle failure modes tied to its exponential-memory mechanism. Because Adam relies on exponentially weighted moving averages of gradients and squared gradients, its adaptive preconditioning is effectively governed by a finite memory horizon controlled by $(\beta_1,\beta_2)$, which may downweight older (yet informative) gradient information. In the online convex optimization (OCO) setting, \cite{reddi2019convergenceadam} provide explicit instances where this short-memory behavior compromises convergence guarantees and can yield an average regret that does not vanish asymptotically\footnote{For a concise introduction to online convex optimization and regret, see \cite{hazan2023introductiononlineconvexoptimization}.}. These observations motivated variants that enforce more persistent accumulation of second-moment information, such as AMSGrad.

\paragraph{Continuous-time representations and stability (Lyapunov) in optimization.}
A growing line of research studies discrete optimization algorithms through their continuous-time limits, typically expressed as ordinary differential equations (ODEs). \cite{Su2016} establish a principled connection between Nesterov's accelerated gradient method and a second-order
ODE, using this viewpoint to explain its convergence behavior and trajectory dynamics. Complementarily, \cite{wibisono2016} derive accelerated dynamics from a variational (Lagrangian) formulation, clarifying the role of momentum as an intrinsic geometric ingredient. In the same spirit, classical momentum methods such as Polyak's Heavy Ball \cite{polyak1964speedup-Hball} can be interpreted via a damped
second-order system of the form
\begin{equation}
\ddot{x}(t) + \gamma \dot{x}(t) + \nabla f(x(t)) = 0,
\end{equation}
which provides a mechanical intuition (inertia plus friction) and serves as a useful baseline for comparing accelerated and momentum-based schemes \cite{Alvarez2000OnTM, ALVAREZ2002747}.

These continuous formulations enable a stability analysis of optimization dynamics using Lyapunov theory~\cite{gantmakher1970lectures,khalil2002}: by constructing a Lyapunov function---often combining objective suboptimality and a kinetic/momentum term \cite{Alvarez2000OnTM, ALVAREZ2002747, JMLRBelotto}---one can certify (asymptotic) convergence to equilibria and quantify robustness to perturbations. This perspective is particularly valuable when moving beyond classical momentum into adaptive methods. For example, \cite{JMLRBelotto} propose an ODE model for first-order adaptive algorithms (including Adam) in an augmented state space $(\theta,m,v)$, and leverage Lyapunov-type arguments to characterize convergence in continuous time. Their construction also provides a principled bridge between discrete hyperparameters and continuous-time parameters (e.g., via exponential relations between $\beta_i$ and time constants), supporting an interpretation of Adam as a forward-Euler discretization of an underlying ODE. Relatedly, \cite{Barakat2020} propose a continuous-time non-autonomous ODE model for Adam (capturing, in particular, the time-dependence induced by debiasing) and show that the piecewise-linear interpolation of Adam converges weakly to the ODE solution.  Using an explicit Lyapunov function, they establish well-posedness and convergence to critical points, and under a KL property they obtain convergence to a single critical point together with rates depending on the KL-exponent. 

Additionally, \cite{romero2022} address the nontrivial gap between continuous-time analyses and discrete algorithms: discretizing a continuous-time analogue of a gradient-based method does not, in general, preserve the Lyapunov structure or the associated stability guarantees. They therefore focus on designing structure-preserving discretizations of continuous-time optimization dynamics, aiming to retain---at the discrete level---the optimality properties implied by the underlying continuous formulation.

Viewed more broadly, these continuous-time and Lyapunov perspectives connect naturally to a complementary line of work that injects physical structure—notably variational principles, geometry, and conservation laws—directly into learning and dynamical modeling. Departing from standard neural architectures that learn input-output maps, Hamiltonian Neural Networks (HNNs)~\cite{greydanus2019hamiltonian} and Lagrangian Neural Networks (LNNs)~\cite{cranmer2020lagrangian} parameterize underlying energy function or functional action so that the resulting dynamics obey principled constraints: HNNs learn a Hamiltonian whose induced flow preserves invariants (making them well suited to energy-conserving systems), while LNNs learn a Lagrangian and recover dynamics via the Euler-Lagrange equations without requiring canonical coordinates or explicit momenta. Both approaches align with the philosophy of physics-informed neural networks (PINNs) \cite{PINNS}.

Building on the same geometric viewpoint, Noether’s theorem—linking symmetries to conserved quantities—has motivated architectures that meta-learn invariants (Noether Networks)~\cite{alet2021noether}, and has also inspired theoretical analyses of learning dynamics through symmetry and explicit symmetry breaking~\cite{tanaka2021noether}.

\paragraph{Integro-differential equations and nonlocal operators.}
Integro-differential equations (IDEs) provide a natural continuous-time language for systems with memory: the instantaneous dynamics depend on a weighted accumulation of past states through integral terms (often of Volterra type \cite{Tricomi1985IntegralEquations}, i.e., causal kernels). This viewpoint has recently been adopted in machine learning through neural parametrizations of nonlocal dynamics; for instance, Neural Integro-Differential Equations combine learnable function approximators with integral operators to model history-dependent evolution and improve extrapolation across temporal regimes and unseen initial conditions \cite{zappala2022neural}. Closely related ideas also appear in optimization: many adaptive methods are built from exponential moving averages, which in continuous time admit explicit causal convolution (kernel)
representations. In particular, \cite{gould2024} derive a continuous-time formulation of Adam/AdamW in which the first and second moment estimators can be written as history integrals with exponentially decaying kernels, making the update rule an explicit ratio of causal memory operators. Leveraging this integral (kernel) representation, they further characterize stable regions in the hyperparameter space $\mathcal{B}_+$ that guarantee bounded updates, providing a principled explanation of when the optimizer remains well behaved during training.
More broadly, nonlocal calculus develops optimization frameworks where classical differential operators are replaced by integral-operator analogues (nonlocal gradients) governed by interaction kernels \cite{SriramNonlocal}; this offers both modeling flexibility and a principled route to handle irregular/nonsmooth settings, while recovering the local (ODE/PDE) limit as kernels concentrate\footnote{By ``kernels concentrate'' we mean that \(\{K_\alpha\}\) forms an approximate identity (in distributions) as \(\alpha\to\infty\), i.e., \(K_\alpha \rightharpoonup \delta_0\) where $\delta_0$ denotes the Dirac delta distribution at \(0\)  \cite{Vladimirov_GF}.}. 

From a practical perspective, \cite{buades_denoising} introduce the Non-Local Means (NL-means) denoiser, which estimates each pixel as a normalized weighted average of other pixels (often restricted in practice to a finite search window), where weights are computed from patch similarity, commonly via an exponential of a Gaussian-weighted $\ell_2$ distance between local neighborhoods. This kernel-based viewpoint is closely related to the nonlocal calculus formalized by \cite{nonlocal_image}, where a similarity kernel \(w(x,y)\) induces nonlocal gradient, divergence, and Laplacian operators, leading to nonlocal TV/ROF-type variational models and PDE flows that separate distant but similar regions and often improve the handling of textures and repeated patterns.

\vspace{0.5cm}
While much of the continuous-time literature models optimization algorithms through ODE/PDE surrogates, such representations can obscure the history dependence intrinsic to adaptive methods based on exponential moving averages. Building on the first-order nonlocal (integro-differential) framework introduced in \citet{heredia2025modelingadagradrmspropadam}, we extend this viewpoint to an accelerated, second-order (inertial) setting, where the dynamics include damping and a genuinely nonlocal forcing term induced by causal memory operators. Specifically, we formulate Adam in continuous time as an integro-differential equation (IDE) in which the influence of past gradients enters explicitly through causal integral operators.

Concretely, our second-order IDE framework (a) represents cumulative effects by modeling how the entire past gradient history contributes to the current update through causal kernels; (b) enables analytical tools specific to IDEs and Volterra operators---including Lyapunov functionals---to establish well-posedness and convergence properties; (c) provides a principled bridge between discrete and continuous dynamics, clarifying how hyperparameters control effective memory horizons and suggesting design choices that can improve stability and robustness beyond what is apparent from purely ODE-based limits; and (d) admits a complementary variational formulation via Adam-inspired nonlocal Lagrangians, offering structural insight and symmetry-guided principles for algorithm design.

\section{Adam as a second-order integro-differential equation}\label{Sec:Adam-dynamics}

In this section we recast Adam as a continuous-time dynamical system. We start from the standard discrete formulation summarized in Algorithm~\ref{alg:Adam}. Given a differentiable objective function $f:\mathbb{R}^n\to\mathbb{R}$, the method evolves parameters $\theta_k\in\mathbb{R}^n$ together with two auxiliary states: the first moment $m_k\in\mathbb{R}^n$ and the second moment $v_k\in\mathbb{R}^n$ (or coordinatewise $m_k^i$ / $v_k^i$, depending on the convention). The gradient $g_k$ is evaluated at $\theta_{k-1}$, and $(m_k,v_k)$ are updated via exponential moving averages with decay factors $\beta_1,\beta_2\in[0,1)$. Bias-corrected versions $(\hat m_k,\hat v_k)$ are then used to define a preconditioned update of $\theta_k$ with stepsize $\alpha>0$ and stabilizer $\epsilon>0$.

\begin{algorithm}
\caption{Adam Optimization Algorithm}
\label{alg:Adam}
\begin{algorithmic}[1] 
    \State Initialize the parameters $\theta_{k=0}$, the first moment $m_{k=0} = 0$, and the second moment $v_{k=0} = 0$.
    \While{the parameters $\theta_k$ have not converged}
        \State $k \leftarrow k + 1$
        \State Compute the gradient: $g_{k} \leftarrow \nabla f(\theta_{k-1})$.
        \State Update first moment: $m_{k} \leftarrow \beta_1 m_{k-1} + (1 - \beta_1) g_{k}$
        \State Update second moment: $v_{k} \leftarrow \beta_2 v_{k-1} + (1 - \beta_2) g_{k}^2$
        \State Correct first moment: $\hat{m}_{k} \leftarrow m_{k}/(1 - \beta_1^k)$
        \State Correct second moment: $\hat{v}_{k} \leftarrow v_{k}/(1 - \beta_2^k)$
        \State Update parameters: $\theta_{k} \leftarrow \theta_{k-1} - \alpha\, \hat{m}_{k}/(\sqrt{\hat{v}_{k}} + \epsilon)$
    \EndWhile
\end{algorithmic}
\end{algorithm}

While the original Adam formulation in \cite{kingma2014} maintains the second moment coordinatewise,
\[
v_k = \beta_2 v_{k-1} + (1-\beta_2)\,(g_k \odot g_k), \qquad v_k\in\mathbb{R}^n,
\]
thus yielding a diagonal preconditioner, we will adopt an isotropic version for the purposes of the continuous-time limit. Concretely, we take $v_k\in\mathbb{R}$ to track a global second-moment estimate, and we replace the diagonal scaling by a scalar factor $(\sqrt{\hat v_k}+\epsilon)^{-1}$ multiplying the update direction $\hat m_k$. This choice keeps the structure of Adam while avoiding componentwise effects that are not essential for the integro-differential formulation developed below.

The key observation for our purposes is that Adam is not merely a first-order recursion in \(\theta_k\): it is a memory-driven system, since \((m_k,v_k)\) encode a filtered history of gradients along the trajectory. This structure makes it natural to investigate the regime of small stepsizes \(\alpha\) through a continuous-time limit \(t\approx k\alpha\). Moreover, to reveal the inertial character that is hidden in the one-step recursion, we expand the parameter update to second order in \(\alpha\). The precise statement---including the resulting second-order integro-differential dynamics---is given by the following Proposition:

\begin{proposition}\label{prop:Adam2}
Let $\alpha>0$ be small and consider the continuous limit $t\approx k\,\alpha$ with the second-order time expansion
\[
\theta^i_{k+1} \to \theta^i(t+\alpha)\;=\;\theta^i(t)+\alpha\,\dot\theta^i(t)+\tfrac{\alpha^2}{2}\,\ddot\theta^i(t)\,+\mathcal{O}(\alpha^3).
\]
With the initial data $m^i(0)=\dot m^i(0)=0$ and $v(0)=\dot v(0)=0$, the Adam update admits the inertial ODE with linear friction
\begin{equation}\label{eq:inertial}
\frac{\alpha}{2}\,\ddot\theta^i(t)+\dot\theta^i(t)\;=\;-\eta(t+\alpha)\,T^i(\theta,t+\alpha),
\qquad
T^i(\theta,t)\;:=\;\frac{m^i(\theta,t)}{\sqrt{v(\theta,t)}+\varepsilon(t)}, 
\end{equation}
where the continuous bias-correction factors are
\[
\eta(t)\;:=\;\frac{\sqrt{\,1-\beta_2^{\,t/\alpha}\,}}{\,1-\beta_1^{\,t/\alpha}\,},\qquad
\varepsilon(t)\;:=\;\epsilon\,\sqrt{\,1-\beta_2^{\,t/\alpha}\,}.
\]
The continuous moments are causal convolutions
\[
m^i(\theta,t)\;=\;\frac{1-\beta_1}{\alpha}\int_{0}^{t} K_{\beta_1}(t-\tau)\;\partial^i f\!\big(\theta(\tau)\big)\,d\tau,\quad
v(\theta,t)\;=\;\frac{1-\beta_2}{\alpha}\int_{0}^{t} K_{\beta_2}(t-\tau)\;\|\nabla f(\theta(\tau))\|^2\,d\tau,
\]
with kernels $K_\beta(s)$ (for $s:=t-\tau\ge 0$) given case-wise by
\begin{itemize}
\item[\textbullet] Critical damping $\beta=\tfrac12$:
\[
K_{1/2}(s)=\frac{2\,s}{\alpha}\,e^{-s/\alpha}.
\]
\item[\textbullet] Overdamped regime $\beta>\tfrac12$. With $\kappa:=\sqrt{\,2\beta-1\,}$,
\[
K_\beta(s)\;=\; \frac{2}{\kappa}\,e^{-s/\alpha}\,\sinh\!\Big(\frac{\kappa}{\alpha}s\Big)\,.
\]
\item[\textbullet] Underdamped regime $\beta<\tfrac12$. With $\rho:=\sqrt{\,1-2\beta\,}$,
\[
K_\beta(s)=\frac{2}{\rho}\,e^{-s/\alpha}\,\sin\!\Big(\frac{\rho}{\alpha}s\Big).
\]
\end{itemize}
\end{proposition}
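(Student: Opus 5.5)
The plan is to treat all three recursions of Algorithm~\ref{alg:Adam} in the same way. Under $t\approx k\alpha$, I would expand every shifted quantity $X_{k+1}\to X(t+\alpha)$ to second order in $\alpha$, exactly as prescribed for $\theta$ in the statement. The parameter update then yields \eqref{eq:inertial}, and the two moment recursions become constant-coefficient linear second-order ODEs. Solving these by a Green's function with the stated zero initial data produces the convolution kernels.

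\textbf{Parameter equation.} First I would rewrite the bias-corrected step algebraically:
\[
\frac{\hat m_k}{\sqrt{\hat v_k}+\epsilon}
=\frac{\sqrt{1-\beta_2^k}}{1-\beta_1^k}\cdot\frac{m_k}{\sqrt{v_k}+\epsilon\sqrt{1-\beta_2^k}}.
\]
With $k=t/\alpha$, this identifies $\eta(t)$ and $\varepsilon(t)$ as defined in the statement. Since the update $\theta_{k+1}=\theta_k-\alpha\,\eta\,T$ uses the moments indexed $k+1$, the forcing is evaluated at $t+\alpha$. Inserting the expansion $\theta(t+\alpha)-\theta(t)=\alpha\dot\theta+\tfrac{\alpha^2}{2}\ddot\theta+\mathcal O(\alpha^3)$ and dividing by $\alpha$ gives \eqref{eq:inertial}, up to $\mathcal O(\alpha^2)$, which is dropped.

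\textbf{Moment equations.} For a generic moving average $x_{k+1}=\beta x_k+(1-\beta)g_{k+1}$, I would apply the same expansion $x(t+\alpha)=x+\alpha\dot x+\tfrac{\alpha^2}{2}\ddot x$. I would identify the gradient sample with $g(t)=\partial^i f(\theta(t))$, or with $\|\nabla f(\theta(t))\|^2$ for $v$, to leading order. This gives
\[
\tfrac{\alpha^2}{2}\ddot x+\alpha\dot x+(1-\beta)x=(1-\beta)g(t),
\]
that is, $\ddot x+\tfrac{2}{\alpha}\dot x+\tfrac{2(1-\beta)}{\alpha^2}x=\tfrac{2(1-\beta)}{\alpha^2}g$. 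Its characteristic roots are $r_\pm=(-1\pm\sqrt{2\beta-1})/\alpha$, so the discriminant changes sign at $\beta=\tfrac12$. This is what produces the critical, overdamped and underdamped cases.

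With $x(0)=\dot x(0)=0$, the Duhamel formula gives $x(t)=\tfrac{2(1-\beta)}{\alpha^2}\int_0^t G(t-\tau)g(\tau)\,d\tau$. Here $G$ is the impulse response, satisfying $G(0)=0$ and $\dot G(0)=1$. In the three regimes it is:
\begin{itemize}
\item for $\beta>\tfrac12$, $G(s)=\tfrac{\alpha}{\kappa}e^{-s/\alpha}\sinh(\kappa s/\alpha)$;
\item for $\beta=\tfrac12$, $G(s)=s\,e^{-s/\alpha}$;
\item for $\beta<\tfrac12$, $G(s)=\tfrac{\alpha}{\rho}e^{-s/\alpha}\sin(\rho s/\alpha)$.
\end{itemize}
Factoring out $\tfrac{1-\beta}{\alpha}$ leaves $K_\beta=\tfrac{2}{\alpha}G$, which reproduces the three stated kernels. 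The critical case can also be obtained as the limit $\kappa\to0$ of the overdamped kernel, as a consistency check.

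\textbf{Main obstacle.} The main difficulty is justifying the consistency of the expansion rather than the algebra. The moment ODE has coefficients of order $\alpha^2$, $\alpha$ and $1$, so the dropped $\mathcal O(\alpha^3)$ terms must be controlled relative to the retained ones. In addition, the Adam initialization $m_0=v_0=0$ must be matched with the extra condition $\dot m(0)=\dot v(0)=0$, which is an imposed modeling choice and not derived. I would state both points explicitly as formal modeling assumptions, consistent with the ``$\to$'' in the statement, and not attempt a rigorous error bound at this stage.
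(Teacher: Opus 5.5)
Your proposal is correct and follows essentially the same route as the paper. You expand the reindexed moment recursions to second order into $\tfrac{\alpha^2}{2}\ddot x+\alpha\dot x+(1-\beta)x=(1-\beta)g$, solve by variation of constants with the impulse response $G$ (your $K_\beta=\tfrac{2}{\alpha}G$ is exactly the paper's $K_\beta=\tfrac{2}{\alpha(\omega_+-\omega_-)}(e^{\omega_+ s}-e^{\omega_- s})$), and expand the parameter update with forcing at $t+\alpha$; your explicit derivation of $\eta$ and $\varepsilon(t)$ from $\hat m_k/(\sqrt{\hat v_k}+\epsilon)$, and your flagging of $\dot m(0)=\dot v(0)=0$ and the truncation consistency as formal assumptions, are small additions the paper leaves implicit.
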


\begin{proof}
Let $\alpha>0$ be the step size and set $t=\alpha k$ as a continuous-time reparametrization. For any sufficiently smooth trajectory $z(t)$ we use the second-order Taylor expansion
\[
z_{k+1} \to z(t+\alpha)=z(t)+\alpha \dot z(t)+\tfrac{\alpha^2}{2}\ddot z(t)+\mathcal{O}(\alpha^3).
\]
We start from Adam’s recurrences and reindex $k\mapsto k+1$ to avoid the term $f(\theta_{k-1})$:
\[
m^i_{k+1}=\beta_1 m^i_k+(1-\beta_1)\,g^i_{k+1},
\qquad
v_{k+1}=\beta_2 v_k+(1-\beta_2)\,g_{k+1}^2.
\]
Here $g^i_{k+1}=\partial^i f(\theta_k)$ and $g_{k+1}^2=\|\nabla f(\theta_k)\|^2$. Under the time rescaling $t=\alpha k$, a Taylor expansion up to $O(\alpha^3)$ gives the following linear ODEs with constant coefficients:
\[
\tfrac{\alpha^2}{2}\,\ddot m^i+\alpha\,\dot m^i+(1-\beta_1)\,m^i=(1-\beta_1)\,\partial^i f(\theta),
\qquad
\tfrac{\alpha^2}{2}\,\ddot v+\alpha\,\dot v+(1-\beta_2)\,v=(1-\beta_2)\,\|\nabla f(\theta)\|^2,
\]
with initial data $m^i(0)=\dot m^i(0)=0$ and $v(0)=\dot v(0)=0$. Since both share the same left-hand operator, it suffices to analyze the scalar problem
\[
\tfrac{\alpha^2}{2}\,y''+\alpha\,y'+(1-\beta)\,y=(1-\beta)\,r(t),
\]
for $0\leq\beta<1$ and a given source $r$. The characteristic polynomial of the homogeneous equation is
\[
\tfrac{\alpha^2}{2}\,\omega^2+\alpha\,\omega+(1-\beta)=0,
\]
with roots
\[
\omega_\pm=\frac{-1\pm\sqrt{2\beta-1}}{\alpha}.
\]
For all $0\le \beta<1$, both characteristic roots satisfy $\Re(\omega_\pm)<0$, hence the homogeneous dynamics is exponentially stable. The regimes $\beta=\tfrac12$, $\beta>\tfrac12$, and $\beta<\tfrac12$ correspond to critical, over-, and under-damping, respectively.

Dividing by $\alpha^2/2$ gives the normalized form
\[
y''+\frac{2}{\alpha}\,y'+\frac{2(1-\beta)}{\alpha^2}\,y
=F(t),\qquad F(t):=\frac{2(1-\beta)}{\alpha^2}\,r(t).
\]
The homogeneous equation admits fundamental solutions $y_1(t)=e^{\omega_+ t}$ and $y_2(t)=e^{\omega_- t}$ with Wronskian
\[
W(t)=(\omega_-\omega_+)\,e^{(\omega_++\omega_-)t}=(\omega_-\omega_+)\,e^{-2t/\alpha}.
\]
Variation of constants yields
\[
y(t)=\int_{0}^{t}G_\beta(t-\tau)\,F(\tau)\,d\tau,\qquad
G_\beta(s)=\frac{e^{\omega_+ s}-e^{\omega_- s}}{\omega_+-\omega_-}\,\mathbf 1_{\{s\ge 0\}}.
\]
Undoing the normalization (i.e., replacing $F$ by $\tfrac{2(1-\beta)}{\alpha^2}r$) we obtain the convolution representation
\[
y(t)=\frac{1-\beta}{\alpha}\int_{0}^{t}K_\beta(s)\,r(t-s)\,ds,\qquad
K_\beta(s)=\frac{2}{\alpha\,(\omega_+-\omega_-)}\Big(e^{\omega_+ s}-e^{\omega_- s}\Big)\mathbf 1_{\{s\ge 0\}}.
\]
Using $\omega_++\omega_-=-2/\alpha$ and
\[
\omega_+-\omega_-=
\begin{cases}
\frac{2\kappa}{\alpha}, & \beta>\tfrac12,\ \kappa:=\sqrt{2\beta-1},\\[2pt]
0, & \beta=\tfrac12,\\[2pt]
\frac{2i\rho}{\alpha}, & \beta<\tfrac12,\ \rho:=\sqrt{1-2\beta},
\end{cases}
\]
we obtain the closed forms
\[
K_\beta(s)=
\begin{cases}
\displaystyle \frac{2}{\kappa}\,e^{-s/\alpha}\,\sinh\!\Big(\frac{\kappa}{\alpha}s\Big), & \beta>\tfrac12,\\[8pt]
\displaystyle \frac{2\,s}{\alpha}\,e^{-s/\alpha}, & \beta=\tfrac12,\\[6pt]
\displaystyle \frac{2}{\rho}\,e^{-s/\alpha}\,\sin\!\Big(\frac{\rho}{\alpha}s\Big), & \beta<\tfrac12,
\end{cases}
\qquad s\ge 0,
\]
in particular $K_\beta(0)=0$.

Applying this to the equations for $m^i$ and $v$ (with $r^i(t)=\partial^i f(\theta(t))$ and $r(t)=\|\nabla f(\theta(t))\|^2$, respectively) gives
\[
m^i(\theta, t)=\frac{1-\beta_1}{\alpha}\int_{0}^{t}K_{\beta_1}(t-\tau)\,\partial^i f(\theta(\tau))\,d\tau,
\qquad
v(\theta, t)=\frac{1-\beta_2}{\alpha}\int_{0}^{t}K_{\beta_2}(t-\tau)\,\|\nabla f(\theta(\tau))\|^2\,d\tau,
\]
and, since $K_\beta(0)=0$, the causal initial conditions $m^i(0)=\dot m^i(0)=0$ and $v(0)=\dot v(0)=0$ are automatically satisfied.

For the parameter dynamics, define the (continuously bias-corrected) quantities
\[
T^i(\theta,t):=\frac{m^i(\theta,t)}{\sqrt{v(\theta,t)}+\varepsilon(t)},\qquad
\eta(t):=\frac{\sqrt{1-\beta_2^{\,t/\alpha}}}{1-\beta_1^{\,t/\alpha}},\qquad
\varepsilon(t):=\epsilon\,\sqrt{1-\beta_2^{\,t/\alpha}}.
\]
The second-order expansion of the update
\[
\theta^i_{k+1}- \theta^i_{k} \;=\;-\alpha\,\eta(t+\alpha)\,T^i(t+\alpha)
\;\approx\;\alpha\,\dot\theta^i(t)+\tfrac{\alpha^2}{2}\,\ddot\theta^i(t)
\]
implies, after rearranging,
\[
\frac{\alpha}{2}\,\ddot\theta^i(t)+\dot\theta^i(t)=\,-\eta(t+\alpha)\,T^i(\theta,t+\alpha)\,.
\]
\end{proof}

Let us point out the time domain on which the integro-differential formulation is naturally posed. In Proposition~\ref{prop:Adam2} the forcing term is evaluated at the shifted time $t+\alpha$, which directly reflects the one-step structure of Algorithm~\ref{alg:Adam}. Indeed, the update from $k-1$ to $k$ uses the moments computed at iteration $k$, and under the scaling $t\approx k\alpha$ this corresponds to expressing the right-hand side at time $t+\alpha$. Equivalently, setting $s:=t+\alpha$, we obtain
\[
\frac{\alpha}{2}\,\ddot\theta^i(s-\alpha)+\dot\theta^i(s-\alpha)\;=\;-\,\eta(s)\,T^i(\theta,s),
\qquad s\ge \alpha,
\]
so that the nonlocal forcing term (and hence the integro-differential representation) is canonically defined for $s\in[\alpha,\infty)$, in agreement with the discrete dynamics in which the first nontrivial moment update occurs after one iteration.

This representation also makes transparent two basic qualitative features of the dynamics. First, observe that for any $\beta\in[0,1)$ the kernels $K_\beta(s)$ share the common factor $e^{-s/\alpha}$. Consequently, contributions from past gradients are exponentially down-weighted in time.

Second, in the discrete Adam scheme the second-moment iterate satisfies $v_k\ge 0$ for all $k$, since it is constructed as an exponentially weighted average of squared gradients. In the continuous-time formulation, it is therefore natural to require the analogous property $v(t)\ge 0$. This is immediate in the critically damped and overdamped regimes, where $K_\beta(s)\ge 0$ for $s\ge 0$. By contrast, in the underdamped case $\beta<\tfrac12$ the kernel $K_\beta$ is oscillatory and changes sign, so $v(t)$ is no longer manifestly nonnegative. Next, we provide simple a sufficient condition ensuring $v(t)\ge 0$ in this regime.

\begin{lemma}\label{lemma:vp}
Let $\alpha>0$ and $\beta<\tfrac12$. Define
\[
\rho_\beta:=\sqrt{1-2\beta}\in(0,1],\qquad 
q_\beta:=\exp\!\Big(-\frac{\pi}{\rho_\beta}\Big).
\]
Consider the kernel
\[
K_\beta(s):=\frac{2}{\rho_\beta}\,e^{-s/\alpha}\,\sin\!\Big(\frac{\rho_\beta}{\alpha}s\Big),\qquad s\ge 0,
\]
and let $K_+,K_-:[0,\infty)\to[0,\infty)$ denote its positive and negative parts,
\[
K_+(s):=\max\left\{K_\beta(s),0\right\},\qquad 
K_-(s):=\max\left\{-K_\beta(s),0\right\},
\]
so that $K_\beta=K_+-K_-$ and $K_\pm\ge 0$. For $t\ge 0$, define the truncated terms
\[
A_+(t):=\int_0^t K_+(s)\,ds,\qquad 
A_-(t):=\int_0^t K_-(s)\,ds.
\]
Let $r(\tau):=\|\nabla f(\theta(\tau))\|^2\ge 0$ and define
\[
v(t):=\frac{1-\beta}{\alpha}\int_0^t K_\beta(s)\,r(t-s)\,ds,\qquad t\ge 0.
\]
Assume that
\[
\underline r(t):=\inf_{0\le s\le t} r(t-s)\ \ge\ q_\beta\,\overline r(t),
\qquad 
\overline r(t):=\sup_{0\le s\le t} r(t-s).
\]
Then $v(t)\ge 0$ for all $t\ge 0$.
\end{lemma}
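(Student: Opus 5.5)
The plan is to split the convolution defining $v(t)$ into its positive and negative parts. We bound each part using the extreme values of $r$ on the window $[0,t]$, and then compare the accumulated negative mass $A_-(t)$ with the positive mass $A_+(t)$ through an exact self-similarity of the damped sine kernel.

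\textbf{Step 1 (reduction to a mass inequality).} Since $1-\beta>0$, $\alpha>0$, $K_\pm\ge 0$ and $\underline r(t)\le r(t-s)\le \overline r(t)$ for $s\in[0,t]$, we have
\[
v(t)=\frac{1-\beta}{\alpha}\Big(\int_0^t K_+(s)\,r(t-s)\,ds-\int_0^t K_-(s)\,r(t-s)\,ds\Big)\ \ge\ \frac{1-\beta}{\alpha}\Big(\underline r(t)\,A_+(t)-\overline r(t)\,A_-(t)\Big).
\]
It therefore suffices to prove the kernel inequality
\[
A_-(t)\le q_\beta\,A_+(t)\quad\text{for all } t\ge 0.
\]
Combined with the hypothesis $\underline r(t)\ge q_\beta\,\overline r(t)$, this gives
\[
\underline r\,A_+-\overline r\,A_-\ \ge\ q_\beta\,\overline r\,A_+-q_\beta\,\overline r\,A_+=0,
\]
and hence $v(t)\ge 0$.

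\textbf{Step 2 (shift identity for the kernel).} Set $P:=\pi\alpha/\rho_\beta$, the half-period of $\sin(\rho_\beta s/\alpha)$. Then $\sin(\rho_\beta(s+P)/\alpha)=-\sin(\rho_\beta s/\alpha)$ and $e^{-(s+P)/\alpha}=q_\beta\,e^{-s/\alpha}$, so
\[
K_\beta(s+P)=-q_\beta\,K_\beta(s).
\]
Consequently $K_-(s+P)=q_\beta\,K_+(s)$ for every $s\ge 0$. Moreover, $K_-$ vanishes on $[0,P]$, because $K_\beta\ge 0$ on the first lobe.

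\textbf{Step 3 (the mass comparison).} For $t\le P$ we have $A_-(t)=0$, and the claim is trivial. For $t>P$, we use $K_-=0$ on $[0,P]$ and substitute $s=u+P$:
\[
A_-(t)=\int_P^t K_-(s)\,ds=\int_0^{t-P}K_-(u+P)\,du=q_\beta\int_0^{t-P}K_+(u)\,du=q_\beta\,A_+(t-P)\le q_\beta\,A_+(t),
\]
where the last inequality holds because $A_+$ is nondecreasing. This is the key step, and it needs no lobe-by-lobe summation: the exact identity of Step 2 does all the work. The only delicate point is to check carefully that $K_-$ is supported on $[P,\infty)$ and that the shift maps positive lobes onto negative lobes, which the sign-flip identity guarantees. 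The case $t=0$ is immediate since $v(0)=0$.
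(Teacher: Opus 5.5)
Your proposal is correct and follows essentially the same route as the paper's proof. You split $K_\beta=K_+-K_-$ to get the lower bound $\underline r\,A_+-\overline r\,A_-$, use the half-period shift identity $K_\beta(s+P)=-q_\beta K_\beta(s)$ to get $K_-(s+P)=q_\beta K_+(s)$, and substitute $s=u+P$ to conclude $A_-(t)\le q_\beta A_+(t)$.
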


\begin{proof}
Since \(K_\beta=K_+-K_-\) with \(K_\pm\ge 0\), we have
\[
v(t)=\frac{1-\beta}{\alpha}\!\left(\int_0^t K_+(s)\,r(t-s)\,ds-\int_0^t K_-(s)\,r(t-s)\,ds\right)
\ge \frac{1-\beta}{\alpha}\big(\underline r(t)\,A_+(t)-\overline r(t)\,A_-(t)\big).
\]
Thus it suffices to show that, for all \(t\ge 0\),
\begin{equation}\label{eq:trunc-bound}
A_-(t)\ \le\ q_\beta\,A_+(t).
\end{equation}

First of all, with \(\delta=\pi\alpha/\rho_\beta\), for every \(s\ge 0\),
\[
K_\beta(s+\delta)
=\frac{2}{\rho_\beta}\,e^{-(s+\delta)/\alpha}\,\sin\!\Big(\tfrac{\rho_\beta}{\alpha}(s+\delta)\Big)
=\frac{2}{\rho_\beta}\,e^{-s/\alpha}e^{-\pi/\rho_\beta}\,\sin\!\Big(\tfrac{\rho_\beta}{\alpha}s+\pi\Big)
=-e^{-\pi/\rho_\beta}\,K_\beta(s).
\]
Pointwise this implies
\[
K_-(s+\delta)=e^{-\pi/\rho_\beta}\,K_+(s),\qquad K_+(s+\delta)=e^{-\pi/\rho_\beta}\,K_-(s).
\]
Now, if \(0\le t\le \delta\), then \(K_\beta\ge 0\) on \([0,\delta]\), so \(A_-(t)=0\) and the bound is trivial because $q_\beta \geq 0$ and $A_+(r)\geq 0$ and therefore $0 \leq q_\beta A_+(t)$. If \(t>\delta\),
\[
\begin{aligned}
A_-(t)&=\int_0^t K_-(u)\,du=\int_\delta^t K_-(u)\,du
=\int_0^{t-\delta} K_-(s+\delta)\,ds\\
&=e^{-\pi/\rho_\beta}\int_0^{t-\delta} K_+(s)\,ds
\le e^{-\pi/\rho_\beta}\int_0^{t} K_+(s)\,ds
= q_\beta\,A_+(t).  
\end{aligned}
\]
Hence~\eqref{eq:trunc-bound} holds for all \(t\ge 0\). Substituting into the bound for \(v(t)\) yields
\[
v(t)\ \ge\ \frac{1-\beta}{\alpha}\,A_+(t)\,\big(\underline r(t)-q_\beta\,\overline r(t)\big)\ \ge\ 0,
\]
whenever \(\underline r(t)\ge q_\beta\,\overline r(t)\).
\end{proof}

For completeness, note that when $\beta\ge \tfrac12$ the kernel is nonnegative (at $\beta=\tfrac12$ one has $K_\beta(s)=\tfrac{2s}{\alpha}e^{-s/\alpha}$, whereas for $\beta>\tfrac12$ the kernel admits the $\sinh$ representation), hence $A_-=0$ and $v(t)\ge 0$ for any $r\ge 0$. Therefore, under the assumptions of Lemma~\ref{lemma:vp} we obtain in particular
\[
v(t)\;\ge\;0\qquad\text{for all }t\ge 0.
\]
Throughout the paper we restrict to parameter regimes for which the second-moment kernel satisfies the hypotheses of Lemma~\ref{lemma:vp}. In particular, $v(t)\ge 0$ for all $t\ge 0$, and thus $\sqrt{v(t)}$ is well-defined.

A natural question at this point is whether the second-order nonlocal formulation can be understood as an $\alpha$-refinement of the first-order nonlocal Adam model analyzed in \cite{heredia2025modelingadagradrmspropadam}. In particular, we would like to clarify in which precise sense the second-order accelerated dynamics reduces to the first-order nonlocal dynamics as $\alpha \to 0$. Here, $\alpha\to0$ is interpreted as a refinement process within the continuous-time formulation, rather than as the discrete-to-continuous time-step limit considered for gradient descent in the Introduction.

The following proposition shows that, under mild regularity assumptions, the second-order dynamics recovers the first-order nonlocal Adam dynamics uniformly away from the initial time. For notational simplicity, throughout this proposition we work with $\theta$ (and the associated moment $m$) rather than component-wise notation $\theta^{i}$ (and $m^{i}$), as the estimates apply identically to each coordinate. Moreover, $\|\cdot\|$ denotes the standard Euclidean norm on $\mathbb{R}^d$. Accordingly, we present it as a quantitative comparison, both at the level of trajectories and at the level of the associated nonlocal moments:
\begin{proposition}\label{prop:alpha-recovery}
Let $T>0$, $0<\delta<T$, and $\alpha_0\in(0,\delta)$. For $a\in\{1,2\}$ define
\[
H^{(1)}_a(\tau):=\frac{1-\beta_a}{\alpha}\exp\!\Big(-\frac{1-\beta_a}{\alpha}\tau\Big),
\qquad
H^{(2)}_a(\tau):=\frac{1-\beta_a}{\alpha}\,K_{\beta_a}(\tau),
\qquad \tau\ge 0,
\]
where $K_\beta$ is given in Proposition~\ref{prop:Adam2}. Let $g=\nabla f$ and set, for $j\in\{1,2\}$,
\[
m^{(j)}(t):=\int_0^t H^{(j)}_1(\tau)\,g(\theta(t-\tau))\,d\tau,
\qquad
v^{(j)}(t):=\int_0^t H^{(j)}_2(\tau)\,\|g(\theta(t-\tau))\|^2\,d\tau .
\]

Assume $\theta\in C^3([0,T];\mathbb R^d)$ and that the maps $t\mapsto g(\theta(t))$ and $t\mapsto \|g(\theta(t))\|^2$ have bounded second derivatives on $[0,T]$.
Assume moreover that
\[
\sup_{t\in[\delta-\alpha_0,\,T]}\|\ddot\theta(t)\|\le C_T
\]
for some $C_T$ independent of $\alpha$. Then there exist $\varepsilon_0>0$ (depending on $\delta$ but not on $\alpha$) such that $\varepsilon(t)\ge \varepsilon_0$ for all $t\in[\delta,T]$. Furthermore, there exists $C>0$, independent of $\alpha$, such that for all $t\in[\delta,T]$,
\[
\|m^{(2)}(t)-m^{(1)}(t)\|
\le C\Big(\frac{\alpha}{1-\beta_1}\Big)^{2},
\qquad
\bigl|v^{(2)}(t)-v^{(1)}(t)\bigr|
\le C\Big(\frac{\alpha}{1-\beta_2}\Big)^{2}.
\]
Consequently, uniformly for $t\in[\delta,T]$,
\[
\frac{m^{(2)}(t)}{\sqrt{v^{(2)}(t)}+\varepsilon(t)}
=
\frac{m^{(1)}(t)}{\sqrt{v^{(1)}(t)}+\varepsilon(t)}
+\mathcal{O}\!\left(\max\left\{\Big(\tfrac{\alpha}{1-\beta_1}\Big)^2,\ \tfrac{\alpha}{1-\beta_2}\right\}\right).
\]

Finally, if $0<\alpha<\alpha_0$ and $\theta$ satisfies
\[
\frac{\alpha}{2}\ddot\theta(t)+\dot\theta(t)
=
-\eta(t+\alpha)\,
\frac{m^{(2)}(t+\alpha)}{\sqrt{v^{(2)}(t+\alpha)}+\varepsilon(t+\alpha)},
\]
then, uniformly for $t\in[\delta-\alpha,\,T-\alpha]$,
\[
\dot\theta(t)
=
-\eta(t+\alpha)\,
\frac{m^{(1)}(t+\alpha)}{\sqrt{v^{(1)}(t+\alpha)}+\varepsilon(t+\alpha)}
+\mathcal{O}(\rho),
\qquad
\rho:=\max\left\{\alpha,\Big(\tfrac{\alpha}{1-\beta_1}\Big)^2,\ \tfrac{\alpha}{1-\beta_2}\right\}.
\]
\end{proposition}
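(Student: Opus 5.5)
The plan is to compare the two kernel families $H^{(1)}_a$ and $H^{(2)}_a$ through their moments. The key observation is that they agree to first order. Both kernels are Green's functions of causal linear filters with transfer functions
\[
\widehat H^{(1)}_a(s)=\frac{1-\beta_a}{\alpha s+1-\beta_a},\qquad
\widehat H^{(2)}_a(s)=\frac{1-\beta_a}{\tfrac{\alpha^2}{2}s^2+\alpha s+1-\beta_a}.
\]
The second formula comes from the ODE derived in the proof of Proposition~\ref{prop:Adam2}. Evaluating at $s=0$ shows that both kernels have total mass $1$ on $[0,\infty)$. Differentiating at $s=0$ shows that both have first moment $\int_0^\infty \tau H^{(j)}_a(\tau)\,d\tau=\alpha/(1-\beta_a)$.

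With these moment identities in hand, I will bound the moment differences. For fixed $t\in[\delta,T]$, write $r(\tau)$ for either $g(\theta(\tau))$ or $\|g(\theta(\tau))\|^2$ and Taylor expand
\[
r(t-\tau)=r(t)-\tau\dot r(t)+R(t,\tau),\qquad |R|\le \tfrac12\sup|\ddot r|\,\tau^2 .
\]
Substituting this into $\int_0^t (H^{(2)}_a-H^{(1)}_a)(\tau)\,r(t-\tau)\,d\tau$ and extending the integrals to $[0,\infty)$, the zeroth- and first-moment contributions cancel. Two pieces remain.

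The first is the tail integrals $\int_t^\infty \tau^k|H^{(j)}_a|\,d\tau$ for $k=0,1$, with $t\ge\delta$. Both kernels decay at a rate comparable to $(1-\beta_a)/\alpha$; for $H^{(2)}$ the slow root is $(1-\kappa)/\alpha\ge(1-\beta_a)/(2\alpha)$. Hence these tails are bounded by a polynomial times $e^{-c\delta(1-\beta_a)/\alpha}$, which is $\le C(\alpha/(1-\beta_a))^2$.

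The second is the remainder, which is controlled by absolute second moments $\int_0^\infty\tau^2|H^{(j)}_a|\,d\tau\le C(\alpha/(1-\beta_a))^2$. For $H^{(1)}$ this is an explicit computation. For $H^{(2)}$ I will rescale $\tau=\alpha u$ in each damping regime of Proposition~\ref{prop:Adam2}. In the underdamped case, the bound $|\sin|\le1$ with the factor $e^{-u}$ gives $O(\alpha^2)$, and $\alpha^2\le(\alpha/(1-\beta))^2$. Together these two pieces give the stated $m$ and $v$ estimates.

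Next come the lower bound on $\varepsilon(t)$ and the ratio comparison. For $t\ge\delta$ and $\alpha<\alpha_0<\delta$ we have $t/\alpha>1$, so $\varepsilon(t)\ge\epsilon\sqrt{1-\beta_2}=:\varepsilon_0$. For the ratio, I will use that $m^{(1)}$ is bounded by $\sup\|g\|$, because $H^{(1)}$ is a probability density. I will then use the elementary bound $|\sqrt a-\sqrt b|\le\sqrt{|a-b|}$ with $v^{(j)}\ge0$ (Lemma~\ref{lemma:vp}). Splitting
\[
\frac{m^{(2)}}{\sqrt{v^{(2)}}+\varepsilon}-\frac{m^{(1)}}{\sqrt{v^{(1)}}+\varepsilon}
\]
into a numerator difference and a denominator difference then gives an error of order $(\alpha/(1-\beta_1))^2+\alpha/(1-\beta_2)$. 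This explains why only the first power of $\alpha/(1-\beta_2)$ appears in the statement.

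For the final claim, take $t\in[\delta-\alpha,T-\alpha]$. Then $t+\alpha\in[\delta,T]$, so the ratio estimate applies. Moreover $\eta(t+\alpha)\le 1/(1-\beta_1^{\delta/\alpha_0})\le 1/(1-\beta_1)$ is bounded independently of $\alpha$. Since $[\delta-\alpha,T]\subset[\delta-\alpha_0,T]$, the hypothesis gives $\tfrac{\alpha}{2}\|\ddot\theta\|\le \tfrac{\alpha}{2}C_T=O(\alpha)$. Moving this term to the right-hand side yields the $\mathcal O(\rho)$ statement.

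I expect the main obstacle to be keeping all constants independent of $\alpha$, uniformly across the three kernel regimes. This matters most for the underdamped and $\beta\to\tfrac12$ transitions, where $\rho_\beta$ or $\kappa$ appears in a denominator. I would handle it by working with the unified formula $K_\beta(s)=\frac{2}{\alpha(\omega_+-\omega_-)}(e^{\omega_+s}-e^{\omega_-s})$ and using the bound $|e^{\omega_+s}-e^{\omega_-s}|\le |\omega_+-\omega_-|\,s\,e^{\max\Re\omega\,s}$, which removes the singular denominator.
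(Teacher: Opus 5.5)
Your proposal is correct and follows essentially the same route as the paper. Both arguments match the zeroth and first moments of $H^{(1)}_a$ and $H^{(2)}_a$, Taylor-expand the signal around $t$, bound the exponentially small tails on $[\delta,T]$ and the absolute second moments by $C(\alpha/(1-\beta_a))^2$, pass to the normalized force via $|\sqrt{a}-\sqrt{b}|\le\sqrt{|a-b|}$, and absorb $\tfrac{\alpha}{2}\ddot\theta$ as an $\mathcal{O}(\alpha)$ term; your differences are minor and, if anything, tighten the written proof. You derive the moments from the transfer functions instead of the closed-form kernels, use an explicit second-order Taylor remainder instead of the paper's unquantified $\mathcal{O}(\tau^3)$ term, and explicitly check $\varepsilon(t)\ge\epsilon\sqrt{1-\beta_2}$ and $\eta\le 1/(1-\beta_1)$, which the paper only asserts.
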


\begin{proof}
We work with the truncated causal convolution
\[
(H*\varphi)(t):=\int_0^t H(\tau)\,\varphi(t-\tau)\,d\tau.
\]
For $a\in\{1,2\}$ and $j\in\{1,2\}$, define the full moments
\[
M^{(j)}_{a,k}:=\int_0^\infty \tau^k\,H^{(j)}_a(\tau)\,d\tau,\qquad k=0,1,2,
\]
and the truncated moments
\[
M^{(j)}_{a,k}(t):=\int_0^t \tau^k\,H^{(j)}_a(\tau)\,d\tau,\qquad t\in[0,T].
\]

For $H^{(1)}_a(\tau)=\lambda_a e^{-\lambda_a\tau}$ with $\lambda_a=(1-\beta_a)/\alpha$,
\[
M^{(1)}_{a,0}=1,\qquad
M^{(1)}_{a,1}=\lambda_a^{-1},\qquad
M^{(1)}_{a,2}=2\,\lambda_a^{-2}.
\]
For $H^{(2)}_a(\tau)=\lambda_a\,K_{\beta_a}(\tau)$, one obtains using the closed forms in Proposition~\ref{prop:Adam2}:
\[
M^{(2)}_{a,0}=1,\qquad
M^{(2)}_{a,1}=\lambda_a^{-1},\qquad
M^{(2)}_{a,2}=(1+\beta_a)\,\lambda_a^{-2}.
\]

Since the kernels decay exponentially on the scale $\alpha$, there exist constants $c,C>0$
(independent of $\alpha$) such that for $k=0,1$ and all $t\in[\delta,T]$,
\[
\bigl|M^{(j)}_{a,k}(t)-M^{(j)}_{a,k}\bigr|
=\int_t^\infty \tau^k |H^{(j)}_a(\tau)|\,d\tau
\le C\,e^{-c\,t/\alpha}
\le C\,e^{-c\,\delta/\alpha}.
\]
In particular, since $M^{(1)}_{a,0}=M^{(2)}_{a,0}=1$ and $M^{(1)}_{a,1}=M^{(2)}_{a,1}=\lambda_a^{-1}$,
we have
\[
\bigl|M^{(1)}_{a,0}(t)-M^{(2)}_{a,0}(t)\bigr|
+\bigl|M^{(1)}_{a,1}(t)-M^{(2)}_{a,1}(t)\bigr|
\le C\,e^{-c\,\delta/\alpha},
\qquad t\in[\delta,T].
\]

Let $H^{(1)},H^{(2)}$ be causal kernels and let $\varphi\in C^3([0,T])$.
For each $t\in[0,T]$ and $0\le \tau\le t$, Taylor's theorem gives
\[
\varphi(t-\tau)=\varphi(t)-\tau \dot\varphi(t)+\frac{\tau^2}{2}\,\ddot \varphi(t) + \mathcal{O}(\tau^3).
\]
Hence, 
\begin{align*}
(H^{(1)}*\varphi)(t)-(H^{(2)}*\varphi)(t)
&=\varphi(t)\bigl(M^{(1)}_0(t)-M^{(2)}_0(t)\bigr)
-\dot\varphi(t)\bigl(M^{(1)}_1(t)-M^{(2)}_1(t)\bigr)\\
&\quad+\frac12 \ddot\varphi(t)\int_0^t \tau^2\Bigl(H^{(1)}(\tau)-H^{(2)}(\tau)\Bigr)\,d\tau + \mathcal{O}(\tau^3).
\end{align*}
Taking absolute values and using $\|\ddot\varphi\|_{L^\infty([0,T])}<\infty$ yields
\[
\bigl|(H^{(1)}*\varphi)(t)-(H^{(2)}*\varphi)(t)\bigr|
\le C\,e^{-c\,\delta/\alpha}
+\frac{\|\ddot\varphi\|_{L^\infty([\delta,T])}}{2}
\left(\int_0^\infty \tau^2|H^{(1)}(\tau)|\,d\tau+\int_0^\infty \tau^2|H^{(2)}(\tau)|\,d\tau\right),
\]
uniformly for $t\in[\delta,T]$. For the kernels $H^{(j)}_a$, the second-moment integrals scale like $\lambda_a^{-2}$, hence
\[
\bigl|(H^{(1)}*\varphi)(t)-(H^{(2)}*\varphi)(t)\bigr|
\le C\left(\frac{\alpha}{1-\beta_a}\right)^2 + C\,e^{-c\,\delta/\alpha},
\qquad t\in[\delta,T].
\]
For $\alpha$ sufficiently small (depending on $\delta$), the exponential term is dominated by the
polynomial term, so the bound simplifies to $C(\alpha/(1-\beta_a))^2$.

Now, we apply the previous estimate componentwise with $\varphi_i(t)=g_i(\theta(t))$ and with $\varphi(t)=\|g(\theta(t))\|^2$. By the assumptions, $\varphi_i''$ and $\varphi''$ are bounded on $[0,T]$. Using the bounds on $M^{(j)}_{a,2}$ we deduce
\[
\|m^{(2)}(t)-m^{(1)}(t)\|
=
\mathcal O\!\Big(\Big(\tfrac{\alpha}{1-\beta_1}\Big)^{\!2}\Big),
\qquad
|v^{(2)}(t)-v^{(1)}(t)|
=
\mathcal O\!\Big(\Big(\tfrac{\alpha}{1-\beta_2}\Big)^{\!2}\Big),
\]
uniformly on $[\delta,T]$.

Let $F(x,y)=x/(\sqrt{y}+\varepsilon)$. For $y\ge 0$, $y+\delta y \geq 0$ and $\varepsilon\ge\varepsilon_0$, we can deduce that:
\[
|F(x+\delta x,y+\delta y)-F(x,y)|
\le
\frac{|\delta x|}{\varepsilon_0}
+
\,\frac{|x|}{\varepsilon_0^2}\,\sqrt{|\delta y|}\,.
\]
Indeed,
\[
\frac{x+\delta x}{\sqrt{y+\delta y}+\varepsilon}-\frac{x}{\sqrt{y}+\varepsilon}
=\frac{\delta x}{\sqrt{y+\delta y}+\varepsilon} + x\left[\frac{1}{\sqrt{y+\delta y} + \varepsilon} - \frac{1}{\sqrt{y}+\varepsilon} \right]
\]
The first term is bounded by $|\delta x|/\varepsilon_0$. For the second, we use the fact that
\[
|\sqrt{y+\delta y}-\sqrt{y}| \leq \frac{|\delta y|}{\sqrt{y+\delta y} + \sqrt{y}}\leq \sqrt{|\delta y|}\,,
\]
therefore, it becomes bounded by
\[
\left|x\!\left(\frac{1}{\sqrt{y+\delta y}+\varepsilon}-\frac{1}{\sqrt{y}+\varepsilon}\right)\right| \leq \frac{|x|}{\varepsilon^2_0} \sqrt{|\delta y|}\,.
\]

Thus, taking $x=m^{(1)}(t)$, $y=v^{(1)}(t)$, $\delta x=m^{(2)}(t)-m^{(1)}(t)$ and $\delta y=v^{(2)}(t)-v^{(1)}(t)$, and assuming $v^{(j)}(t)\ge0$, we have\footnote{Note that $y+\delta y=v^{(2)}(t)\ge 0$ by assumption.}
\[
\delta x=\mathcal{O}\!\Big(\Big(\tfrac{\alpha}{1-\beta_1}\Big)^{\!2}\Big),\qquad
\delta y=\mathcal{O}\!\Big(\Big(\tfrac{\alpha}{1-\beta_2}\Big)^{\!2}\Big).
\]
Notice that, since $[0,T]$ is compact and $g\circ\theta$ is continuous, there exists
\[
G_T:=\sup_{s\in[0,T]}\|g(\theta(s))\|<\infty.
\]
Therefore, using $H^{(1)}_1\ge 0$ and $\int_0^t H^{(1)}_1(\tau)\,d\tau\ \leq 1$, we obtain
\[
\|m^{(1)}(t)\|
\le \int_0^t H^{(1)}_1(\tau)\,\|g(\theta(t-\tau))\|\,d\tau
\le G_T\int_0^t H^{(1)}_1(\tau)\,d\tau
\leq G_T,
\]
so that $|x|=\|m^{(1)}(t)\|$ is uniformly bounded and can be absorbed into the implicit constant. Hence,
\begin{equation}\label{eq:diff_moments_order}
  \frac{m^{(2)}(t)}{\sqrt{\,v^{(2)}(t)}+\varepsilon(t)\,}
-\frac{m^{(1)}(t)}{\sqrt{\,v^{(1)}(t)}+\varepsilon(t)\,}
=
\mathcal{O}\!\left(\max\left\{\Big(\tfrac{\alpha}{1-\beta_1}\Big)^2,\ \tfrac{\alpha}{1-\beta_2}\right\}\right), \qquad t\in[\delta, T].
\end{equation}

Finally, we rewrite the accelerated shifted equation as
\[
\dot\theta(t)
=
-\eta(t+\alpha)\,
\frac{m^{(2)}(t+\alpha)}{\sqrt{v^{(2)}(t+\alpha)}+\varepsilon(t+\alpha)}
-\frac{\alpha}{2}\ddot\theta(t).
\]
By assumption $\sup_{t\in[\delta-\alpha_0,\,T]}\|\ddot\theta(t)\|\le C_T$, hence
\[
\Big\|\tfrac{\alpha}{2}\ddot\theta(t)\Big\|\le \tfrac{\alpha}{2}C_T=\mathcal O(\alpha)
\quad\text{uniformly for }t\in[\delta-\alpha,\,T-\alpha],
\]
for all $0<\alpha<\alpha_0$ with $\alpha_0\in(0,\delta)$. Together with the kernel comparison---\eqref{eq:diff_moments_order}---, this yields, upon substitution into the accelerated dynamics,
\[
\dot\theta(t)=
-\eta(t+\alpha)\,
\frac{m^{(1)}(t+\alpha)}{\sqrt{v^{(1)}(t+\alpha)}+\varepsilon(t+\alpha)}
+\mathcal O(\rho),
\quad t\in[\delta-\alpha,T-\alpha]\,,
\]
where
\[
\rho:=\max\left\{\alpha,\Big(\tfrac{\alpha}{1-\beta_1}\Big)^2,\ \tfrac{\alpha}{1-\beta_2}\right\}\,.
\]
In particular, if $\beta_1,\beta_2$ are fixed and $\alpha\to0$, then $\left(\alpha/(1-\beta_1)\right)^2 = \mathcal O(\alpha^2)\subset \mathcal{O}(\alpha)$ and $\alpha/(1-\beta_2) = \mathcal O(\alpha)$, so the remainder can be written as $\mathcal O(\alpha)$.
\end{proof}

Let us comment on the content of this proposition. Its main message is that the second-order (inertial) formulation does not generate a genuinely new drift on fixed finite horizons independent of $\alpha$: away from the initial time it can be viewed as a perturbation of the first-order nonlocal Adam flow.

Consequently, qualitative properties established for the first-order model (e.g., Lyapunov stability) can be carried over to the accelerated dynamics by treating both the inertial contribution and the kernel-induced correction in the moments as perturbative terms. Crucially, the size of this perturbation is controlled by
\[
\rho \;:=\; \max\left\{\alpha,\ \Big(\tfrac{\alpha}{1-\beta_1}\Big)^2,\ \tfrac{\alpha}{1-\beta_2}\right\}.
\]
In particular, for fixed $\beta_1,\beta_2\in[0,1)$ and $\alpha\to 0$, one has $\rho=\mathcal{O}(\alpha)$, so the accelerated model can be regarded as an $\mathcal{O}(\alpha)$ refinement of the first-order dynamics. On the other hand, if $\beta_1$ is close to $1$ and $\alpha$ is not sufficiently small, the quadratic term $(\alpha/(1-\beta_1))^2$ may dominate and the discrepancy is then governed by this $\beta_1$-dependent contribution. Similarly, if $\beta_2$ is close to $1$ and $\alpha$ is not sufficiently small, the term $\alpha/(1-\beta_2)$ may dominate. In these cases, the corresponding $\beta$-dependent term(s) should be kept explicitly rather than absorbed into an $\mathcal{O}(\alpha)$ remainder.

A further point is that, unlike the first-order flow, the accelerated formulation explicitly involves the second derivative \(\ddot{\theta}\). As a result, the uniform perturbative comparison on a fixed horizon requires an additional regularity input: Proposition~\ref{prop:alpha-recovery} assumes that for each \(T>0\), $0<\delta <T$ with $\alpha_0\in(0,\delta)$ there exists \(C_T>0\) such that
\[
\sup_{t\in[\delta-\alpha_0,T]} \|\ddot{\theta}(t)\| \le C_T,
\]
an assumption absent from the first-order theory. This is the main reason for introducing \(\delta>0\) and working on \(t\in[\delta,T]\): it excludes the early-time regime where an initial time may generate large accelerations and compromise such uniform bounds. The time shift \(s=t+\alpha\) is then accommodated by taking \(\alpha\) sufficiently small so that \(\delta>\alpha\), ensuring that \(t+\alpha\) remains within the natural domain of the shifted formulation.

With this convention, letting $\alpha\to 0$ with $\beta_1,\beta_2$ fixed, the higher–order remainders vanish and the time shift becomes negligible, so that the dynamics recovers exactly the first–order nonlocal Adam integro–differential equation studied in \cite{heredia2025modelingadagradrmspropadam}.

\section{Stability and convergence}\label{sec:stability}

The integro-differential representation derived in Section~\ref{Sec:Adam-dynamics} places Adam within a class of inertial nonlocal flows. In particular, Proposition~\ref{prop:alpha-recovery} shows that, on any fixed horizon $t\in[\delta,T]$ with $\delta>0$ and $\alpha<\delta$, the accelerated second-order dynamics is an $\mathcal{O}(\rho)$ perturbation of the first-order nonlocal Adam model, where
\[
\rho=\max\left\{\alpha,\ \Big(\tfrac{\alpha}{1-\beta_1}\Big)^2,\ \tfrac{\alpha}{1-\beta_2}\right\}.
\]

This perturbative structure makes it possible to transfer the Lyapunov methodology developed for the first-order nonlocal model to the inertial setting. More precisely, we work with the Lyapunov functional associated with the first-order dynamics in \cite{heredia2025modelingadagradrmspropadam} and show that, for $\alpha$ sufficiently small (so that the above perturbation parameter $\rho$ is small), the additional inertial contribution can be controlled as a vanishing perturbation. As a consequence, the second-order flow exhibits the same qualitative stability and convergence mechanisms as the first-order model, up to a neighborhood of size $\mathcal{O}(\rho)$ (which shrinks as $\rho\to 0$).

\paragraph{Assumptions.} Throughout this section, $f:\mathbb{R}^d\to\mathbb{R}$ is $C^4$ and bounded from below. We assume that the sublevel set
\[
S_0:=\{\theta\in\mathbb{R}^d:\ f(\theta)\le f(\theta(0))\}
\]
is compact. Moreover, we assume that the trajectory $\theta(t)$ of the (inertial) nonlocal dynamics remains in $S_0$ for all $t\ge 0$. On $S_0$, $\nabla f$ is $L$-Lipschitz and we set $g(\theta):=\nabla f(\theta)$.

Since $S_0$ is compact and $f$ is continuous, we define
\[
f_\star:=\min_{\theta\in S_0} f(\theta),
\qquad
\Phi(t):=f(\theta(t))-f_\star \;\ge\; 0.
\]
Whenever the limit $f_\infty:=\lim_{t\to\infty} f(\theta(t))$ exists, one may equivalently work with $\Phi(t)=f(\theta(t))-f_\infty$. In particular, if $\theta(t)\to\theta^*$ for some critical point $\nabla f(\theta^*)=0$, then $f_\infty=f(\theta^*)$. For notational simplicity, we drop component-wise indices as Proposition~\ref{prop:alpha-recovery} and denote by $\|\cdot\|$ the standard Euclidean norm on $\mathbb{R}^d$.

Finally, we additionally assume that:
\begin{itemize}
  \item $\theta\in C^3([0,T];\mathbb{R}^d)$, $g\circ\theta\in C^3([0,T];\mathbb{R}^d)$, and $\|g(\theta(\cdot))\|^2\in C^3([0,T];\mathbb{R})$;
  \item the second derivatives of $g(\theta(\cdot))$ and $\|g(\theta(\cdot))\|^2$ are uniformly bounded on $[0,T]$;
  \item $v^{(j)}(t)\ge 0$ for all $t\in[0,T]$ and $j\in\{1,2\}$;
\end{itemize}

We fix a horizon away from the initial time: let $T>0$ and $0<\delta<T$, and assume $\alpha$ is small enough so that $\alpha<\delta$. Therefore, we assume that there exists a constant $C_T>0$, independent of $\alpha$, such that $\sup_{t\in[\delta-\alpha,T]}\|\ddot{\theta}(t)\|\le C_T$. Moreover, there exist a constant $\varepsilon_0>0$ such that $\varepsilon(t)\ge \varepsilon_0$ for all $t\in[\delta,T]\,.$ 

Moreover, let $\lambda_1:=(1-\beta_1)/\alpha$ and define the first-order moment $M_1(t)$ by
\[
\dot M_1(t)+\lambda_1 M_1(t)=\lambda_1 g(\theta(t)),\qquad M_1(0)=0.
\]
Equivalently, $M_1(t)=m^{(1)}(t)$ is the first-order nonlocal convolution with kernel $H^{(1)}_1$ from Proposition~\ref{prop:alpha-recovery}, evaluated along the same trajectory $\theta$. Similarly, let $\lambda_2:=(1-\beta_2)/\alpha$ and define the first-order second moment $M_2(t)$ by
\[
\dot M_2(t)+\lambda_2 M_2(t)=\lambda_2 \|g(\theta(t))\|^2,\qquad M_2(0)=0,
\]
so that $M_2(t)=v^{(1)}(t)$ is the first-order nonlocal convolution associated with $H^{(1)}_2$. We then define the effective scalar
\[
\mu(t):=\frac{\eta(t)}{\sqrt{M_2(t)}+\varepsilon(t)}.
\]

Following \cite{JMLRBelotto}, we consider the Lyapunov functional
\begin{equation}\label{eq:Lyap-V}
V(t):=\Phi(t)+\frac{\mu(t)}{2\lambda_1}\,\|M_1(t)\|^2,
\end{equation}
which is well-defined on $[\delta,T]$ under the standing assumptions on $\mu$.

Finally, in order to derive convergence statements beyond Lyapunov stability estimates, we assume that $f$ satisfies either a Polyak-{\L}ojasiewicz (PL) inequality or, more generally, a Kurdyka-{\L}ojasiewicz (KL) property (on $S_0$). This framework encompasses a large class of objectives encountered in applications and is well aligned with Lyapunov-based analysis for nonlocal flows.

Let us clarify that we do not pursue the fully general setting of merely $L$-smooth objectives (i.e., $\nabla f$ Lipschitz) without additional geometry such as PL or KL. In that generality, our analysis controls the inertial nonlocal dynamics only through its $O(\rho)$ perturbative relation to the first-order nonlocal flow on horizons away from the initial time. This is enough to inherit Lyapunov-type stability information (e.g., integral dissipation), but---without an error-bound condition---it does not yield comparably strong convergence conclusions (e.g., convergence of $f(\theta)$, pointwise convergence of $\theta(t)$, or quantitative rates). By contrast, PL/KL geometry provides the error-bound structure that converts dissipation into decay of the objective residual, which is what enables the stronger convergence statements proved below.

\begin{theorem}\label{thm:robust-inertial-adam}
Assume the assumptions above and $\beta_1 \leq \sqrt{\beta_2}$, and let $\theta$ solve the inertial Adam dynamics of~\eqref{eq:inertial}. Fix $0<\delta<T$ and assume $0<\alpha<\delta$. Then there exist $\alpha_0>0$ and constants $c_0>0$, $C_0>0$, independent of $\alpha\in(0,\alpha_0]$, such that the following hold.

\begin{itemize}
\item \textbf{Basic dissipation and averaged moment control.}
For all $T-\alpha\ge \delta$,
\[
V(T-\alpha) -V(\delta) 
\le
-c_0\int_\delta^{T-\alpha} \|M_1(t)\|^2\,dt+C_0\,\rho\,(T-\alpha-\delta).
\]
In particular,
\[
\frac{1}{T-\alpha-\delta}\int_\delta^{T-\alpha} \|M_1(t)\|^2\,dt
\le \frac{V(\delta)}{c_0 (T-\alpha-\delta)} +  \frac{C_0}{c_0}\,\rho.
\]

\item \textbf{PL case (including strong convexity).}
Assume that $f$ satisfies a Polyak-{\L}ojasiewicz inequality on $S_0$: there exists $\mu_{\rm PL}>0$ such that
\[
\frac12\|g(\theta)\|^2\ge \mu_{\rm PL}\,(f(\theta)-f_\star),
\qquad \forall\theta\in S_0.
\]
Then there exist constants $\omega>0$ and $C>0$, independent of $\alpha$, such that for $t\in[\delta,T-\alpha]$,
\[
\Phi(t)\le V(\delta)\,e^{-\omega (t-\delta)}+C\,\rho^2.
\]
If moreover $f$ is $\mu$-strongly convex on $S_0$ with unique minimizer $\theta^\ast$, then
\[
\|\theta(t)-\theta^\ast\|\le C''\,e^{-\omega (t-\delta)/2} +  C'\,\rho, \qquad \text{with} \qquad t\in[\delta,T-\alpha] \quad \text{and} \quad C', C'' >0\,.
\]

\item \textbf{General KL case.}
Assume that $f$ satisfies a Kurdyka-{\L}ojasiewicz inequality on $S_0$ with exponent $\sigma\in(0,1)$: there exist $C_{\rm KL}>0$ and a neighborhood of the critical set such that
\[
\|g(\theta)\|\ge C_{\rm KL}\,(f(\theta)-f_\star)^\sigma.
\]
Then there exist a rate function $p_\sigma$ (the same profile as in the first-order nonlocal model \cite{heredia2025modelingadagradrmspropadam}) and $C>0$ independent of $\alpha$ such that for $t\in[\delta,T-\alpha]$,
\[
\Phi(t)\le p_\sigma(t-\delta)+C\,\rho^{1/(2\sigma)}.
\]
\end{itemize}
\end{theorem}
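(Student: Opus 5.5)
The plan is to differentiate the Lyapunov functional $V$ of \eqref{eq:Lyap-V} along the inertial trajectory. Using Proposition~\ref{prop:alpha-recovery}, I write the velocity as a perturbation of the first-order nonlocal drift:
\[
\dot\theta(t)=-\mu(t+\alpha)M_1(t+\alpha)+r(t),\qquad \|r(t)\|\le C\rho,\qquad t\in[\delta-\alpha,T-\alpha].
\]
Here $r$ collects three contributions: the inertial term $\tfrac{\alpha}{2}\ddot\theta$, the kernel corrections $m^{(2)}-m^{(1)}$ and $v^{(2)}-v^{(1)}$, and the effect of the shift $t\mapsto t+\alpha$. The shift is handled by Lipschitz continuity of $M_1$ and $\mu$, since $\dot M_1=\lambda_1(g-M_1)$ is bounded by $O(1/\alpha)\cdot O(1)$. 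Because that naive bound is too crude, I will instead estimate $\|M_1(t+\alpha)-M_1(t)\|\le \alpha\sup\|\dot M_1\|$ and use $\|g(\theta)-M_1\|=O(\alpha/(1-\beta_1))$. This follows from the first-moment Taylor argument already used in Proposition~\ref{prop:alpha-recovery}, which gives $g-m^{(1)}=\lambda_1^{-1}\dot\varphi+O(\lambda_1^{-2})$ away from $t=0$. Hence the shift error is $O(\alpha)$ and is absorbed in $\rho$.

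\textbf{Step 1: dissipation inequality.} I compute
\[
\dot V=\langle g,\dot\theta\rangle+\frac{\dot\mu}{2\lambda_1}\|M_1\|^2+\frac{\mu}{\lambda_1}\langle M_1,\dot M_1\rangle .
\]
Inserting $\dot M_1=\lambda_1(g-M_1)$ and the drift above, the cross terms $\pm\mu\langle g,M_1\rangle$ cancel, exactly as in \cite{JMLRBelotto}. What remains is
\[
\dot V=-\mu\|M_1\|^2+\frac{\dot\mu}{2\lambda_1}\|M_1\|^2+\langle g,r\rangle .
\]
The key point is the sign of $\dot\mu$. Bias correction makes $\eta$ decreasing toward $1$. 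For $\dot M_2$, the condition $\beta_1\le\sqrt{\beta_2}$ is what makes $\mu$ decay slowly relative to the momentum memory: as in the first-order analysis of \cite{heredia2025modelingadagradrmspropadam}, one bounds $-\dot\mu/\mu\le \tfrac{\lambda_2}{2}\le\lambda_1$. This lets me absorb $\tfrac{\dot\mu}{2\lambda_1}\|M_1\|^2\le \tfrac{\mu}{2}\|M_1\|^2$ into half the negative term, giving $\dot V\le -\tfrac{\mu}{2}\|M_1\|^2+C\rho$. Since $\mu$ is bounded below on $S_0$ (bounded gradients, $\varepsilon\ge\varepsilon_0$, $\eta\ge1$), I obtain $c_0=\inf\mu/2$. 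Integrating over $[\delta,T-\alpha]$ and using $V\ge0$ yields the first bullet and its averaged corollary. I expect this step to be the main obstacle. The delicate part is making the $\dot\mu$ estimate and the uniform $O(\rho)$ remainder bound hold independently of $\alpha$, including near the lower endpoint $\delta-\alpha$, where the bias-correction factors still vary on the scale $\alpha$.

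\textbf{Step 2: PL case.} I convert dissipation in $\|M_1\|^2$ into dissipation in $\|g\|^2$ using $\|g-M_1\|=O(\rho)$ from Step 1, and in $V$ using PL together with $\|M_1\|^2\lesssim\|g\|^2+\rho^2$. This gives $\Phi\le\|g\|^2/(2\mu_{\rm PL})$ and $\tfrac{\mu}{2\lambda_1}\|M_1\|^2\le C\alpha(\|g\|^2+\rho^2)$, so $\dot V\le-\omega V+C\rho^2$. In this step the remainder must be kept in the refined form $\langle g,r\rangle\le \epsilon\|g\|^2+C\rho^2/\epsilon$ rather than bounded crudely by $C\rho$, which is exactly what produces $\rho^2$. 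Grönwall then gives $\Phi\le V\le V(\delta)e^{-\omega(t-\delta)}+C\rho^2$. Under strong convexity, $\|\theta-\theta^*\|^2\le 2\Phi/\mu$, and taking square roots with $\sqrt{a+b}\le\sqrt a+\sqrt b$ gives the stated bound.

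\textbf{Step 3: KL case.} The same refined inequality gives $\dot V\le -c\|g\|^2+C\rho^2$. KL yields $\|g\|^2\ge C_{\rm KL}^2\Phi^{2\sigma}$, and comparably $V^{2\sigma}\lesssim \Phi^{2\sigma}+O(\alpha)$-terms. I therefore get the differential inequality $\dot V\le -cV^{2\sigma}+C\rho^2$. A comparison argument then gives two regimes. While $cV^{2\sigma}\ge 2C\rho^2$, $V$ follows the unperturbed profile $p_\sigma$ of the first-order model: exponential for $\sigma\le\tfrac12$, polynomial $(t-\delta)^{-1/(2\sigma-1)}$ for $\sigma>\tfrac12$. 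Afterwards $V$ stays below the threshold $(2C\rho^2/c)^{1/(2\sigma)}\lesssim\rho^{1/\sigma}$. The stated remainder $C\rho^{1/(2\sigma)}$ is larger for $\rho<1$, so it follows as well, and I will simply state it with that weaker exponent for safety.
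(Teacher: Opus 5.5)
Your plan for item (1) follows the paper's route. You rewrite the inertial equation as the first-order drift $-\mu M_1$ plus an $O(\rho)$ remainder, compute $\dot V=-\mu\|M_1\|^2+\frac{\dot\mu}{2\lambda_1}\|M_1\|^2+\langle g,r\rangle$, absorb the $\dot\mu$ term, bound $\langle g,r\rangle\le G_{\max}C\rho$, and integrate. However, the absorption step, which you rightly call the crux, is wrong as written.

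\textbf{The absorption step.} To get $\frac{\dot\mu}{2\lambda_1}\|M_1\|^2\le\frac{\mu}{2}\|M_1\|^2$ you need an upper bound on the \emph{growth} rate, $\dot\mu\le\lambda_1\mu$. Your inequality $-\dot\mu/\mu\le\lambda_2/2$ is a lower bound on $\dot\mu$, so it only controls the harmless direction.

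Your premises ``$\eta$ decreases toward $1$'' and ``$\eta\ge 1$'' are also false in the usual regime $\beta_1<\beta_2$. With $c_a:=|\ln\beta_a|$ and $x=t/\alpha$, one has $\frac{d}{dt}\log\eta=\alpha^{-1}\big[\frac{c_2\beta_2^{x}}{2(1-\beta_2^{x})}-\frac{c_1\beta_1^{x}}{1-\beta_1^{x}}\big]$, which is positive once $\beta_1^x\ll\beta_2^x$. For instance, with $(0.99,0.999)$, $\eta\approx 0.8$ and increasing at $x=1000$.

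The correct estimate uses $\dot M_2=\lambda_2(\|g\|^2-M_2)\ge-\lambda_2 M_2$ and $\dot\varepsilon\ge 0$:
\[
\frac{\dot\mu}{\mu}\le\frac{\dot\eta}{\eta}+\frac{\lambda_2\sqrt{M_2}}{2(\sqrt{M_2}+\varepsilon)}\le\Big(\frac{\dot\eta}{\eta}\Big)_{+}+\frac{\lambda_2}{2}.
\]
On $[\delta,T]$, $|\eta-1|+\alpha|\dot\eta|=O(e^{-c\delta/\alpha})$. So for $\alpha\le\alpha_0$ you have $\eta\ge\frac12$ and $(\dot\eta/\eta)_+\le\frac{1-\beta_1}{4}\lambda_1$. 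The hypothesis $\beta_1\le\sqrt{\beta_2}$ gives $1-\beta_2\le(1-\beta_1)(1+\beta_1)$, i.e.\ $\lambda_2\le(1+\beta_1)\lambda_1$. Hence $\dot\mu\le\frac{3+\beta_1}{4}\lambda_1\mu$, the absorption closes, and $c_0=\frac12\inf\mu\ge\frac{1}{4(G_{\max}+\epsilon)}$.

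\textbf{The shift estimate.} You assert that $\mu$ is Lipschitz uniformly in $\alpha$ without proof. This is not obvious, because $\dot\mu$ contains $\dot M_2/(2\sqrt{M_2})$ and $M_2$ may be small on $[\delta,T]$.

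The paper instead differentiates the product $F=-\mu M_1$. It controls the dangerous term with the kernel Cauchy--Schwarz comparison $\|M_1\|^2\le C_{12}^\star M_2$, whose $\alpha$-independent constant is finite exactly because $2\lambda_1>\lambda_2$. That is where the paper uses $\beta_1\le\sqrt{\beta_2}$; you use it, legitimately after the fix above, in the $\dot\mu$ absorption.

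Your Lipschitz claim can also be proved directly. Write $\dot M_2(t)=\lambda_2e^{-\lambda_2 t}r(0)+\int_0^t\lambda_2e^{-\lambda_2(t-s)}\dot r(s)\,ds$ with $|\dot r|\le 2\dot G_{\max}\|g\|$. Cauchy--Schwarz then gives $|\dot M_2|\le 2\dot G_{\max}\sqrt{M_2}$, plus an initial-layer term. Since $M_2(t)\ge\lambda_2e^{-\lambda_2t}\int_0^t r$, that term is exponentially small relative to $\sqrt{M_2}$ for $t\ge\delta$. One of these two arguments must be supplied.

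\textbf{Items (2)--(3).} Here you take a genuinely different and valid route. You convert $\|M_1\|^2$ into $\|g\|^2$ via the tracking error $\|g-M_1\|=O(\rho)$ and apply Young to $\langle g,r\rangle$, giving $\dot V\le -c\|g\|^2+C\rho^2$; then you apply PL or KL to $V$.

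The paper's PL proof instead imports the unperturbed decay $\dot V_1\le-\gamma V_1$ from the first-order paper. It writes the perturbation as $C\rho V^{1/2}$ using $\|g\|^2\le C_g V$ and then applies Young. Its KL proof drops $V$ altogether and works with $\dot\Phi\le-\frac{\mu}{2}\|g\|^2+\frac{\mu}{2}\|M_1-g\|^2+\langle g,\widetilde R_\rho\rangle$. It bounds the last term crudely by $C\rho$, which is why it only reaches the plateau $\rho^{1/(2\sigma)}$.

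Your route is more self-contained and gives the sharper $\rho^{1/\sigma}$. However, your comparison $V^{2\sigma}\lesssim\|g\|^2+O(\rho^2)$ requires absorbing $(\alpha\|g\|^2)^{2\sigma}$ into $\|g\|^2$, which is immediate only for $\sigma\ge\frac12$. Running the KL argument on $\Phi$, as the paper does, while keeping your Young step avoids this and still yields $\rho^2$ forcing.
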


\begin{proof}
\textbf{Basic dissipation — proof of item (1).}
By Proposition~\ref{prop:alpha-recovery}, for any fixed $0<\delta<T$ the second-order moments $(m^{(2)},v^{(2)})$ recover their first-order counterparts $(m^{(1)},v^{(1)})=(M_1,M_2)$ uniformly on $[\delta,T]$, with
\[
\|m^{(2)}(t)-M_1(t)\|\le C\Big(\frac{\alpha}{1-\beta_1}\Big)^2,
\qquad
|v^{(2)}(t)-M_2(t)|\le C\Big(\frac{\alpha}{1-\beta_2}\Big)^2,
\qquad t\in[\delta,T].
\]
Moreover, the normalized force term satisfies, uniformly on $[\delta,T]$,
\[
\frac{m^{(2)}(t)}{\sqrt{v^{(2)}(t)}+\varepsilon(t)}
=
\frac{M_1(t)}{\sqrt{M_2(t)}+\varepsilon(t)} + \mathcal{O}\!\left(
\max\Big\{\Big(\tfrac{\alpha}{1-\beta_1}\Big)^2,\;\tfrac{\alpha}{1-\beta_2}\Big\}
\right).
\]
Rewriting the inertial shifted equation and using $\frac{\alpha}{2}\ddot\theta(t)=\mathcal{O}(\alpha)$ on $[\delta-\alpha,T]$, we obtain the perturbed first-order form (away from the initial time)
\[
\dot\theta(t)
=
-\,\mu(t+\alpha)\,M_1(t+\alpha) + R_\rho(t)
=: F(t+\alpha)+R_\rho(t),
\qquad t\in[\delta-\alpha,\,T-\alpha],
\]
where
\[
\mu(t):=\frac{\eta(t)}{\sqrt{M_2(t)}+\varepsilon(t)},
\qquad
\|R_\rho(t)\|\le C_R\,\rho,
\qquad
\rho:=\max\Big\{\alpha,\;\Big(\tfrac{\alpha}{1-\beta_1}\Big)^2,\;\tfrac{\alpha}{1-\beta_2}\Big\},
\]
for some constant $C_R>0$ independent of $\alpha$. By construction, the drift term $F(t)$ coincides with the right-hand side of the first-order nonlocal dynamics analyzed in
\cite{heredia2025modelingadagradrmspropadam}.

We construct the Lyapunov function
\[
V(t) := \Phi(t) + \frac{\mu(t)}{2\lambda_1}\,\|M_1(t)\|^2,
\]
where $g(t):=\nabla f(\theta(t))$, $\Phi(t) := f(\theta(t)) - f_\star$, and $M_1$ solves
\[
\dot M_1(t) + \lambda_1 M_1(t) = \lambda_1 g(t),
\qquad M_1(0)=0.
\]

Differentiating $\Phi$ along a solution $\theta(t)$ of the inertial dynamics, we obtain
\[
\dot\Phi(t)
= \langle g(t),\dot\theta(t)\rangle
= \langle g(t),F(t)\rangle + \langle g(t),\widetilde R_\rho(t)\rangle\,,
\]
where
\[
\widetilde R_\rho(t)
:= R_\rho(t) + \bigl[F(t+\alpha)-F(t)\bigr].
\]
As shown in Appendix~\ref{app:shift-estimate}, for any fixed $0<\delta<T$ (and $\alpha<\delta$) the
shift estimate yields $\|F(t+\alpha)-F(t)\|\le C_{\delta,T}\alpha$ uniformly for $t\in[\delta,\,T-\alpha]$ with $\beta_1\leq \sqrt{\beta_2}$. Consequently, the redefined residual satisfies
\[
\|\widetilde R_\rho(t)\|\le C_{\widetilde R,\delta,T}\,\rho,
\qquad t\in[\delta,\,T-\alpha],
\]
for a constant $C_{\widetilde R,\delta,T}>0$ independent of $\alpha$.

On the other hand, the evolution equation for $M_1$ does not involve the perturbation $\widetilde R_\rho$, so the computation of the time derivative of the second term in $V(t)$ is exactly the same as in the first–order case (Theorem~3 in \cite{heredia2025modelingadagradrmspropadam}). Indeed,
\[
\frac{d}{dt}\!\left(\frac{\mu}{2\lambda_1}\,\|M_1\|^2\right)
=
\frac{\dot\mu}{2\lambda_1}\,\|M_1\|^2
+
\mu\,\langle M_1,g\rangle
-
\mu\,\|M_1\|^2.
\]
Adding this contribution to $\dot\Phi(t)$ we obtain
\[
\dot V(t)
= \dot V_1(t) +\langle g(t),\widetilde R_\rho(t)\rangle.
\]
where $\dot V_1(t)$ is:
\[
\dot V_1(t):=\mu(t) \left[\frac{h(t)}{2\lambda_1}-1 \right]\,\|M_1(t)\|^2\,,
\]
and $h(t):=\frac{d}{dt} \log \mu(t)$. The quantity $\dot V_1(t)$ is precisely the time derivative of the Lyapunov functional associated with the first–order nonlocal model. As shown in Corollary~1 of \cite{heredia2025modelingadagradrmspropadam}, one has the estimate $h(t) \leq C_h$ for all $t>0$ and, consequently,
\[
\dot V_1(t)
\le - \mu(t)\left[ 1- \frac{C_h}{2\lambda_1}\right]\,\|M_1(t)\|^2,
\]
for some $C_h>0$. Moreover, as shown in Theorem~3 of \cite{heredia2025modelingadagradrmspropadam},  $\mu(t)\geq \mu_0 >0$, and then
\[
\dot V(t)
\le
-\,c_0\,\|M_1(t)\|^2
+
\langle g(t),\widetilde R_\rho(t)\rangle
\]
for a $c_0$ positive as long as $C_h/2\lambda_1 <1$.
Using $\|\widetilde R_\rho(t)\|\le C_{\widetilde R, \delta,T} \rho$ and Cauchy–Schwarz,
\[
|\langle g(t),\widetilde R_\rho(t)\rangle|
\le C_{\widetilde R,\delta,T}\,\rho\,\|g(t)\|.
\]
On the sublevel $S_0$, the Lipschitz continuity of $\nabla f$ and the lower boundedness of $f$---Lemma~5(a) in \cite{heredia2025modelingadagradrmspropadam}---imply
\[
\|g(t)\|^2
\le C_g\,\Phi(t) \le C_g\,V(t),
\]
so that
\[
|\langle g(t),\widetilde R_\rho(t)\rangle|
\le C'_{\widetilde R,\delta,T}\,\rho\,V(t)^{1/2},
\]
for some constant $C'_{\widetilde R,\delta,T}>0$. In summary, we have obtained the key inequality
\begin{equation}\label{eq:key-V-ineq}
\dot V(t)
\le
-\,c_0\,\|M_1(t)\|^2
+
C'_{\widetilde R,\delta,T}\,\rho\,V(t)^{1/2}.
\end{equation}

Let us find a uniform bound. Since $S_0$ is compact and $g=\nabla f$ is continuous, $g$ attains its maximum on $S_0$. Hence there exists
\[
G_{\max}:=\max_{\theta\in S_0}\|g(\theta)\|<\infty.
\]
Therefore, whenever $\theta(t)\in S_0$ we have $\|g(t)\|=\|g(\theta(t))\|\le G_{\max}$. Using Cauchy-Schwarz and the perturbation bound $\|\widetilde R_\rho(t)\|\le C_{\widetilde R,\delta,T}\,\rho$, it follows that
\[
|\langle g(t),\widetilde R_\rho(t)\rangle|
\le \|g(t)\|\,\|\widetilde R_\rho(t)\|
\le G_{\max}\,C_{\widetilde R,\delta,T}\,\rho
=: C''_{\widetilde R,\delta,T}\,\rho,
\]
where $C''_{\widetilde R,\delta,T}:=G_{\max}C_{\widetilde R,\delta,T}$. Therefore, \eqref{eq:key-V-ineq} becomes
\begin{equation}\label{eq:key-V-ub}
\dot V(t)
\le
-\,c_0\,\|M_1(t)\|^2
+
C''_{\widetilde R,\delta,T}\,\rho\,.
\end{equation}
Integrating \eqref{eq:key-V-ub} over $[\delta,\,T-\alpha]$ we obtain
\[
V(T-\alpha) - V(\delta)
\le
-\,c_0 \int_\delta^{T-\alpha} \|M_1(t)\|^2\,dt
+
C''_{\widetilde R,\delta,T}\,\rho\,(T - \alpha-\delta).
\]
Since $V\ge 0$, we can rearrange to get
\[
\int_\delta^{T-\alpha} \|M_1(t)\|^2\,dt
\le
\frac{V(\delta)}{c_0}
+
\frac{C''_{\widetilde R,\delta,T}}{c_0}\,\rho\,(T-\alpha -\delta).
\]
Therefore, for every fixed $T>\delta+\alpha$ we obtain the finite-horizon averaged bound
\[
\frac{1}{T-\alpha-\delta}\int_\delta^{T-\alpha} \|M_1(t)\|^2\,dt 
\;\le\; \frac{V(\delta)}{c_0(T-\alpha-\delta)} + \frac{C''_{\widetilde R,\delta,T}}{c_0}\,\rho,
\]
which shows that the dissipation estimate of the first-order model is preserved up to a residual $\mathcal{O}(\rho)$ term on $[\delta,\,T-\alpha]$.

\medskip
\textbf{PL case — proof of item (2).}

Assume now that $f$ satisfies the Polyak-Łojasiewicz (PL) inequality \cite{PL} on $S_0$. Whenever $\theta(t)\in S_0$ for all $t\ge 0$ implies
\[
\Phi(t)\;\le\; \frac{1}{2\mu_{\mathrm{PL}}}\,\|g(t)\|^2.
\]
As recalled above, in the absence of the perturbation $\widetilde R_\rho$ the Lyapunov functional $V_1$ associated with the first–order nonlocal dynamics satisfies $\dot V_1(t) \le - \gamma\,V_1(t)$ for some $\gamma>0$, and there exists a constant $C>0$ such that $
\|M_1(t)\|^2 \le C\,V(t)\,.$ In the presence of the perturbation $\widetilde R_\rho$, the same computations as before lead to
\[
\dot V(t)
\le -\gamma\,V(t) + C'_{\widetilde R,\delta,T}\,\rho\,V(t)^{1/2}, \qquad t\in[\delta,T-\alpha]\,,
\]
for some constant $C'_{\widetilde R,\delta,T}>0$ independent of $\rho$. Using the elementary inequality $ab\le \epsilon a^2 + b^2/(4\epsilon)$ with $a=V^{1/2}$ and $b=C'_{\widetilde R,\delta,T}\rho$, and choosing $\epsilon>0$ small enough, we can absorb the perturbative term $C'_{\widetilde R,\delta,T}\,\rho\,V^{1/2}$ into $-\gamma\,V$. This yields for $t\in[\delta,T-\alpha]$
\[
\dot V(t)
\le -(\gamma-\varepsilon)\,V(t) + \frac{C_{\widetilde R,\delta,T}^{'2} \rho^2}{4\varepsilon},
\]
for any $\varepsilon\in(0,\gamma)$. Moreover, if we fix, for instance, $\varepsilon=\gamma/2$, this simplifies to
\begin{equation}\label{eq:v-ineq}
 \dot V(t)
\;\le\;
-\frac{\gamma}{2}\,V(t) + C'''_{\widetilde R,\delta,T}\,\rho^2,   
\end{equation}
for some constant $C'''_{\widetilde R,\delta,T}>0$ independent of $\rho$. 

To make explicit the competition between the dissipative term and the perturbation, we introduce an $\rho$–dependent threshold $\delta_V(\rho)$ by equating:
\[
\frac{\gamma}{2}\,\delta_V(\rho) = 2\,C'''_{\widetilde R,\delta,T}\,\rho^2
\qquad\Longrightarrow\qquad
\delta_V(\rho) = \frac{4}{\gamma}C'''_{\widetilde R,\delta,T}\,\rho^2.
\]
Then, whenever $V(t)\ge\delta_V(\rho)$, we have
\[
\frac{\gamma}{2}\,V(t)
\;\ge\;
\frac{\gamma}{2}\,\delta_V(\rho)
= 2\,C'''_{\widetilde R,\delta,T}\,\rho^2,
\]
and the previous inequality yields
\[
\dot V(t)
\;\le\;
-\frac{\gamma}{2}\,V(t) + C'''_{\widetilde R,\delta,T}\,\rho^2
\;\le\;
-\frac{\gamma}{2}\,V(t) + \frac{\gamma}{4}\,V(t)
= -\frac{\gamma}{4}\,V(t).
\]
In other words, as long as $V(t)\ge\delta_V(\rho)$, one has
\[
\dot V(t)\;\le\;-\frac{\gamma}{4}\,V(t),
\]
so $V(t)$ decays exponentially fast in this region. Once $V(t)$ enters the strip $\{V\le\delta_V(\rho)\}$, the inequality \ref{eq:v-ineq} shows that $V(t)$ remains bounded by a plateau of order $O(\rho^2)$. Since $\Phi(t)\le V(t)$, this implies a plateau of order $O(\rho^2)$ for the gap:
\[
\Phi(t)\;\leq V(\delta) e^{-\frac{\gamma}{2}(t-\delta)} + \frac{2}{\gamma}C'''_{\widetilde R,\delta,T}\rho^2
\]
Moreover, if $f$ is $\mu$–strongly convex on $S_0$ with unique minimizer $\theta^\ast$, then $\Phi(t) \;\ge\; \frac{\mu}{2}\,\|\theta(t)-\theta^\ast\|^2$ implies
\[
\|\theta(t)-\theta^\ast\| \;\leq\; C^{\prime\prime} e^{-\frac{\gamma}{4}(t-\delta)} + C^\prime\rho, \qquad \text{for} \qquad t\in[\delta, T-\alpha]\,,
\]
where we have used the elementary inequality $\sqrt{a+b} \leq \sqrt{a} + \sqrt{b}\,,$ so the distance to $\theta^\ast$ decays exponentially fast down to an $O(\rho)$-neighborhood.

\medskip
\textbf{KL case — proof of item (3).}

Finally, suppose that $f$ satisfies the Kurdyka-Łojasiewicz inequality on $S_0$, and the trajectory $\theta(t)$ remains in $S_0$ for all $t\ge0$.  As shown above, the dynamics can be written in the perturbed form $\dot\theta(t) \;=\; F(t) + \widetilde R_\rho(t)$. Therefore, for $t\in[\delta,T-\alpha]$
\[
\dot\Phi(t) = \langle g(t),F(t)\rangle + \langle g(t),\widetilde R_\rho(t)\rangle.
\]
For the first term, we have 
\[
\langle g(t),F(t)\rangle  \;=\; -\mu(t)\langle g(t), M_1(t) \rangle  \;\le\; -\frac{1}{2}\mu (t)\,\|g(t)\|^2 + \frac{1}{2}\mu(t) \| M_1-g \|^2\,,
\]
where we have applied the Cauchy-Schwarz and Young's inequalities for last step. In Appendix \ref{app:tracking-error} is shown that, for any fixed $\delta>0$, there exists
$C_{\delta,T}>0$ independent of $\alpha$ such that, for all $t\in[\delta,T-\alpha]$ and all $\alpha$ small enough,
\begin{equation}\label{eq:M1-tracking-alpha}
\|M_1(t)-g(t)\|\le C_{\delta,T}\,\frac{\alpha}{1-\beta_1}.
\end{equation}
Therefore, using also the uniform bound $\mu(t)\le \mu_{\max}$ on $[\delta,T-\alpha]$, we obtain
\begin{equation}\label{eq:M1-tracking-alpha2}
\frac{1}{2}\mu(t)\|M_1(t)-g(t)\|^2
\le \frac{1}{2}\mu_{\max} C_{\delta,T}^2\,\frac{\alpha^2}{(1-\beta_1)^2} \leq C^{IV}_{\delta,T} \rho,
\qquad t\in[\delta,T-\alpha].
\end{equation}
for $C^{IV}_{\delta,T}>0$ independent of $\rho$; therefore, this term can be absorbed into $|\langle g(t),\widetilde R_\rho\rangle|$ since, whenever $\theta(t)\in S_0$ for all $t\ge0$, we have $\|g(t)\|\le G_{\max}$ on the trajectory, and $|\langle g(t),\widetilde R_\rho\rangle| \leq C''_{\widetilde R,\delta,T}\,\rho$ for $t\in[\delta,T-\alpha]$. Therefore, using the KL inequality $\|g(t)\|\ge C_{\mathrm{KL}}\Phi(t)^{\sigma}$, $\mu(t) \geq \mu_0$, and combining these bounds, we obtain
\[
\dot\Phi(t)
\;\le\;
-\frac{\mu_0}{2} C_{\mathrm{KL}}^2\,\Phi(t)^{2\sigma}
+ C_{2,T}\,\rho
= -k_1\,\Phi(t)^{2\sigma} + k_2\,\rho,
\]
for $t\in[\delta,T-\alpha]$ and some constants $C_{2,T}, k_1,k_2>0$ independent of $\rho$. 

We now compare the dissipative term $k_1\Phi^{2\sigma}$ with the perturbation $k_2\rho$. Define
\[
\Lambda(\rho)
:= \left(\frac{2k_2}{k_1}\right)^{1/(2\sigma)}\rho^{1/(2\sigma)}.
\]
If $\Phi(t)\ge \Lambda(\rho)$, then $k_1\Phi(t)^{2\sigma}\ge k_1\Lambda(\rho)^{2\sigma}=2k_2\rho$, and the previous inequality gives
\[
\dot\Phi(t)
\;\le\;
-k_1\,\Phi(t)^{2\sigma} + k_2\rho
\;\le\;
-\tfrac12 k_1\,\Phi(t)^{2\sigma}\,, \qquad \text{with} \qquad t\in[\delta,T-\alpha]\,.
\]
Thus, as long as $\Phi(t)\ge\Lambda(\rho)$, the perturbed dynamics satisfies the same differential inequality as the first–order nonlocal model, up to the harmless factor $1/2$ in front of $k_1$ \cite{heredia2025modelingadagradrmspropadam}. The Kurdyka-Łojasiewicz argument used in \cite{heredia2025modelingadagradrmspropadam} (stratification of the trajectory into sublevel sets and integration of the resulting scalar inequality) therefore yields the same decay profile $p_\sigma(t)$ determined by the exponent $\sigma$: there exists a rate function $p_\sigma:[0,\infty)\to[0,\infty)$, depending only on $\sigma$, such that for every $T-\alpha>\delta$ and for all $t\in[\delta,T-\alpha]$ satisfying $\Phi(t)\ge\Lambda(\rho)$,
\[
\Phi(t)\le p_\sigma(t-\delta).
\]
Once $\Phi(t)$ enters the strip $\{\Phi\le\Lambda(\rho)\}$ at time $t_\rho$, the previous construction shows that
\[
\Phi(t)\;\le\;\Lambda(\rho)
= C\,\rho^{1/(2\sigma)}, \qquad t\ge t_\rho,
\]
for some $C>0$ independent of $\alpha$. In other words, for each fixed $\alpha>0$ small enough the trajectory $\theta(t)$ is attracted, with the same Kurdyka-Łojasiewicz rate $p_\sigma(t)$ as in the first–order nonlocal model, to an $O(\rho^{1/(2\sigma)})$-neighbourhood in function values of the limiting critical level. In the vanishing–stepsize regime $\alpha\to 0$ one has $\Lambda(\rho)\to 0$ and the perturbation term disappears, so that one recovers the exact KL convergence results of \cite{heredia2025modelingadagradrmspropadam}.
\end{proof}

We conclude this section by summarizing the main implications of Theorem~\ref{thm:robust-inertial-adam}:

Item~(1) yields an integral Lyapunov dissipation estimate: for a fixed stepsize $\alpha>0$, the functional $V$ may not be strictly decreasing, yet its time-averaged dissipation over any window $[\delta,\,T-\alpha]$ remains controlled up to an $\mathcal{O}(\rho)$ supply term. In turn, since this $\mathcal{O}(\rho)$ contribution does not generally vanish when $\alpha$ is held constant, one should not expect convergence to an exact critical point in full generality; rather, the appropriate conclusion is practical convergence to a $\rho$-dependent neighbourhood.

Items~(2) and~(3) make this heuristic precise under additional geometric structure. In the PL case one recovers exponential decay of the objective residual, up to a plateau of order $O(\rho^2)$; under strong convexity one additionally obtains convergence in parameter space up to an $O(\rho)$ neighbourhood of the minimizer. In the general KL setting one obtains the same rate profile $p_\sigma$ as for the first-order nonlocal model, up to a plateau of order $O(\rho^{1/(2\sigma)})$ determined by the KL exponent $\sigma$. In both regimes, letting $\rho\to 0$ (in particular, $\alpha\to 0$ for fixed $(\beta_1,\beta_2)$) removes the perturbation and recovers the corresponding first-order behaviour.

A further interpretation of item~(1) is that it provides an averaged near-stationarity property. Indeed,
\begin{corollary}
Fix $\delta>0$ and $T>\delta+\alpha$. Under the assumptions of Theorem~\ref{thm:robust-inertial-adam}, there exists $t^*\in[\delta,\,T-\alpha]$ such that
\begin{equation}\label{eq:cor-near-stationary-time}
\|M_1(t^*)\|^2 \;\le\; \frac{1}{T-\alpha-\delta}\int_{\delta}^{T-\alpha}\|M_1(t)\|^2\,dt
\;\le\; \frac{V(\delta)}{c_0\,(T-\alpha-\delta)}+\frac{C_0}{c_0}\,\rho.
\end{equation}
Equivalently,
\begin{equation}\label{eq:cor-near-stationary-time-sqrt}
\|M_1(t^*)\|\;\le\;\left(\frac{V(\delta)}{c_0\,(T-\alpha-\delta)}+\frac{C_0}{c_0}\,\rho\right)^{1/2}.
\end{equation}
\end{corollary}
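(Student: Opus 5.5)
The corollary follows from item~(1) of Theorem~\ref{thm:robust-inertial-adam} together with a mean-value argument for integrals. No new dynamical estimate is needed; everything reduces to the averaged dissipation bound already proved.

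\textbf{Step 1: continuity of the integrand.} We will check that $t\mapsto \|M_1(t)\|^2$ is continuous on $[\delta,T-\alpha]$. This holds because $M_1$ solves the linear ODE $\dot M_1+\lambda_1 M_1=\lambda_1 g(\theta(t))$, whose forcing is continuous since $g\circ\theta\in C^3$. Hence $M_1\in C^1$, and so $\|M_1\|^2$ is continuous.

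\textbf{Step 2: existence of $t^*$.} Set $a:=\delta$ and $b:=T-\alpha$, with $b>a$ by the hypothesis $T>\delta+\alpha$. We will use the mean value theorem for integrals, which gives $t^*\in[a,b]$ with
\[
\|M_1(t^*)\|^2=\frac{1}{b-a}\int_a^b\|M_1(t)\|^2\,dt .
\]
Alternatively, and without needing the exact equality, one argues by contradiction. If $\|M_1(t)\|^2$ exceeded its average at every $t\in[a,b]$, integrating would give the strict inequality $\int_a^b\|M_1\|^2>\int_a^b\|M_1\|^2$, which is absurd. Either route yields the first inequality in \eqref{eq:cor-near-stationary-time}.

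\textbf{Step 3: the second inequality.} This is exactly the averaged bound in item~(1) of Theorem~\ref{thm:robust-inertial-adam}. That bound was obtained by integrating the key differential inequality over $[\delta,T-\alpha]$, discarding $V(T-\alpha)\ge 0$, and dividing by $T-\alpha-\delta$, with the same constants $c_0,C_0$.

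\textbf{Step 4: the square-root form.} Equation \eqref{eq:cor-near-stationary-time-sqrt} follows by taking square roots of both sides, since $x\mapsto\sqrt{x}$ is monotone on $[0,\infty)$ and both sides are nonnegative.

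There is no real obstacle. The only points to watch are that the window is nondegenerate ($T-\alpha-\delta>0$) and that $V(\delta)\ge 0$, so that the right-hand side is meaningful. Both are guaranteed by the standing assumptions: $\Phi\ge 0$ and $\mu>0$ on $[\delta,T]$.
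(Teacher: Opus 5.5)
Your proposal is correct and follows essentially the same route as the paper. The paper takes $t^*$ to be a minimizer of $\|M_1\|^2$ on $[\delta,T-\alpha]$ (using min $\le$ average), then invokes the averaged bound of item~(1) of Theorem~\ref{thm:robust-inertial-adam} and takes square roots; you obtain $t^*$ from the mean value theorem for integrals or a contradiction argument instead. Your explicit continuity check is a small improvement, since the paper's choice of an attained minimum tacitly needs it, and your contradiction variant needs only integrability.
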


\begin{proof}
Since $t\mapsto \|M_1(t)\|^2$ is nonnegative on $[\delta,\,T-\alpha]$, we have
\[
\min_{t\in[\delta,\,T-\alpha]}\|M_1(t)\|^2 \;\le\; \frac{1}{T-\alpha-\delta}\int_{\delta}^{T-\alpha}\|M_1(t)\|^2\,dt.
\]
Choose $t^*\in[\delta,\,T-\alpha]$ attaining the minimum. Combining the above inequality with the integrated estimate in Theorem~\ref{thm:robust-inertial-adam} yields \eqref{eq:cor-near-stationary-time}, and \eqref{eq:cor-near-stationary-time-sqrt} follows by taking square roots.
\end{proof}

Since $M_1(t)$ is the exponentially weighted moving average of past gradients, this estimate shows that, for small stepsizes (hence small $\rho$) and over long time horizons, the time-averaged squared norm of $M_1(t)$ is small (of order $O(\rho)$, up to a transient $O(1/T)$). Nevertheless, a persistent $O(\rho)$ forcing may prevent exact convergence when the stepsize is kept constant. It is therefore natural to expect that adopting a decaying stepsize schedule $\alpha(t)\to 0$ would progressively remove this forcing and restore exact convergence, but we do not pursue this direction here.

\section{Nonlocal Lagrangian Formulation}\label{sec:Lagrangian}
In Section~\ref{Sec:Adam-dynamics}, we have derived a continuous-time formulation of the Adam optimization algorithm, in the form of an inertial integro–differential equation driven by exponentially weighted gradient moments (cf. Proposition~\ref{prop:Adam2}). We now adopt a more structural viewpoint and ask whether this dynamics can be embedded into a nonlocal variational framework.

To this end, rather than working directly with a fixed closed-form expression, we introduce an abstract nonlocal force field $F^i$ capturing the qualitative architecture of Adam---gradient information filtered through temporal kernels and adaptive rescaling---and we construct a nonlocal Adam-type Lagrangian whose Euler-Lagrange equations generate inertial flows of the form:
\[
\frac{\alpha}{2}\ddot\theta^i(t)+\dot\theta^i(t)+a(t)\,F^i(\theta,t)=0.
\]
For readers familiar with Lagrangian and Hamiltonian formalisms, it may already be apparent that the strictly causal nature of the Adam kernels is expected to be incompatible with an exact nonlocal Lagrangian description in the usual sense; to the best of the author's knowledge, a fully Lagrangian formulation of strict Adam is therefore (a priori) not available, at least as we are going to propose it. The variational framework developed below should thus be understood as defining an ideal, Adam-inspired, nonlocal Lagrangian class, which is conceptually linked to, but not identical with, the continuous-time Adam dynamics. This perspective is interesting both conceptually, as it connects Adam-like learning dynamics with nonlocal field-theoretic models, and algorithmically, as it provides a principled template for designing new optimization schemes inspired by Adam. The precise relation between this Adam-type Lagrangian class and the concrete continuous-time Adam dynamics will be clarified below.

Throughout this section, we use index notation and the Einstein summation convention: repeated indices (one upper and one lower) are implicitly summed over $i=1,\dots,n$.  We endow $\mathbb{R}^n$ with the Euclidean metric $\delta_{ij}$ (with inverse $\delta^{ij}$), so we freely raise/lower indices via $x_i=\delta_{ij}x^j$ and $\partial^i f=\delta^{ij}\partial_j f$. In particular, inner products and norms are written as $x\cdot y=\delta_{ij}x^i y^j$ and $\|x\|^2=\delta_{ij}x^i x^j$.

\subsection{Nonlocal theory in a nutshell}
We briefly recall the functional variational framework for time-nonlocal Lagrangians, following \cite{Heredia2021, heredia2025nonlocalmechanics}.

Let $\theta\in \mathcal{C}^\infty(\mathbb{R},\mathbb{R}^n)$ be a smooth trajectory. A convenient way to encode temporal nonlocality is to use time translations acting on trajectories. More precisely, we define the translation operator $T_t$ by
\begin{equation}\label{eq:translation-operator}
(T_t\theta)(\tau):=\theta(t+\tau),\qquad t,\tau\in\mathbb{R}.
\end{equation}
From this perspective, the map $t\mapsto T_t\theta$ should be viewed as an evolution of the entire curve within the space of trajectories; see Figure~\ref{fig:translation-operator}. This viewpoint is natural in time-nonlocal models, where the ``state at time $t$'' is not determined solely by the instantaneous value $\theta(t)$ (nor by finitely many of its derivatives), but rather by the full temporal profile $\tau\mapsto \theta(t+\tau)$.

\begin{figure}[h!]
\centering
\includegraphics[width=0.6\linewidth]{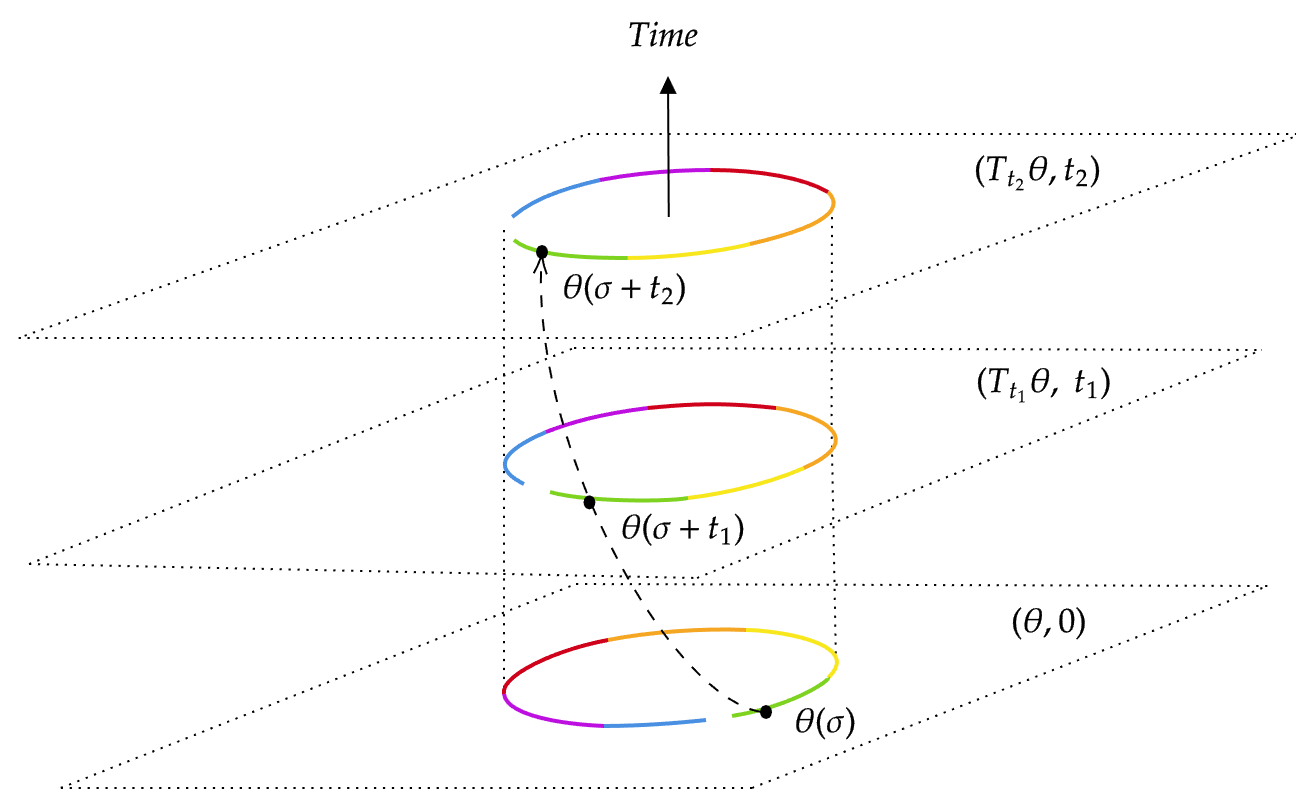}
\caption{\textbf{Geometric meaning of the translation operator.}
For a trajectory $\theta:\mathbb{R}\to\mathbb{R}^n$, the map $t\mapsto T_t\theta$ defines an orbit in the space of trajectories, and time-nonlocal models may depend on the full temporal profile $\tau\mapsto \theta(t+\tau)$ rather than only on the local value $\theta(t)$.}
\label{fig:translation-operator}
\end{figure}

This notation is also natural for describing memory terms defined by integral operators. For instance, convolution-type expressions can be written in terms of translated trajectories as
\[
(K\ast \theta)(t)=\int_{\mathbb{R}}K(\tau)\,\theta(t-\tau)\,d\tau
=\int_{\mathbb{R}}K(\tau)\,(T_t \theta)(-\tau)\,d\tau,
\]
so the value at time $t$ is obtained by applying the kernel to the entire translated path $T_t\theta$. This is exactly the mechanism underlying Adam: in continuous time, its first and second moments are exponentially weighted averages of past gradients, i.e.\ causal convolutions. Consequently, the resulting effective force can be viewed abstractly as a functional $F(T_t\theta,t)$ acting on the translated trajectory $T_t\theta$. Indeed, one has
\[
m^i(T_t\theta, t)\sim\int_{0}^{t}K_{\beta_1}(\tau)\,\partial^i f\!\bigl(T_t\theta(-\tau)\bigr)\,d\tau,\qquad 
v(T_t\theta, t)\sim\int_{0}^{t}K_{\beta_2}(\tau)\,\bigl\|\nabla f\!\bigl(T_t\theta(-\tau)\bigr)\bigr\|^2\,d\tau\,,
\]
so that both $m(T_t\theta,t)$ and $v(T_t\theta,t)$ depend on the past segment of the translated trajectory through causal kernels $K_{\beta_1},K_{\beta_2}$.

We now describe the nonlocal variational setup. Consider a dynamical system described by the nonlocal action integral
\begin{equation}\label{chap31-IA}
S([T_t\theta])=\int_{\mathbb{R}} dt\,L([T_t\theta],t),
\end{equation}
where the Lagrangian $L$ is allowed to depend on the whole translated path $T_t\theta$ (and possibly explicitly on $t$). From now on, to simplify notation, we will omit the functional brackets $[\cdot]$; however, it should be understood that $L(T_t\theta,t)$ is a functional of the full trajectory through $T_t\theta$.

\begin{definition}
The kinematic space is the function space $\mathcal{K}:=\mathcal{C}^\infty(\mathbb{R},\mathbb{R}^n)$. For time-dependent nonlocal Lagrangians, we introduce the concept of the extended kinematic space, denoted by \( \mathcal{K}' = \mathcal{K} \times \mathbb{R} \)
\end{definition}

\begin{definition}
A (time-dependent) nonlocal Lagrangian is a real-valued functional on $\mathcal{K}'$,
\[
L:\mathcal{K}'\to\mathbb{R},\qquad (\theta,t)\mapsto L(T_t\theta,t).
\]
\end{definition}

Since $L$ depends on the entire trajectory, the action over an unbounded time domain may fail to converge. We therefore introduce a one-parameter family of finite nonlocal action integrals, defined as:
\begin{equation}\label{chap31-SR}
S(\theta;R)=\int_{|t|\le R}dt\,L(T_t\theta,t),\qquad R>0,
\end{equation}
which is well-defined for every fixed $R$. 

\begin{proposition}
The nonlocal Euler-Lagrange equations associated with the nonlocal actions \eqref{chap31-SR} are
\begin{equation}\label{chap31-EOMS}
\psi_i(\theta,\sigma)=0,
\qquad
\psi_i(\theta,\sigma):=\int_{\mathbb{R}}dt\,\lambda_i(\theta,t,\sigma),
\end{equation}
where
\begin{equation}\label{chap31-Lambda}
\lambda_i(\theta,t,\sigma):=\frac{\delta L(T_t\theta,t)}{\delta\theta^i(\sigma)}.
\end{equation}
Here $\delta/\delta\theta^i$ denotes the functional derivative with respect to the $i$-th component of $\theta$.
\end{proposition}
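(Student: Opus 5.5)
We will obtain the nonlocal Euler--Lagrange equations by computing the first variation of the finite actions $S(\theta;R)$ in \eqref{chap31-SR} along compactly supported perturbations, and then letting $R\to\infty$. Concretely, fix $\theta\in\mathcal{K}$ and a variation $\theta_\epsilon=\theta+\epsilon\,h$ with $h\in\mathcal{C}_c^\infty(\mathbb{R},\mathbb{R}^n)$. Since each $L(T_t\theta,t)$ is a functional of the whole path, its Gateaux derivative at fixed $t$ is
\[
\frac{d}{d\epsilon}\Big|_{\epsilon=0}L(T_t\theta_\epsilon,t)=\int_{\mathbb{R}}d\sigma\,\frac{\delta L(T_t\theta,t)}{\delta\theta^i(\sigma)}\,h^i(\sigma)=\int_{\mathbb{R}}d\sigma\,\lambda_i(\theta,t,\sigma)\,h^i(\sigma),
\]
which is just the definition \eqref{chap31-Lambda} of $\lambda_i$. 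The translation structure is used here only through the fact that $L$ depends on $\theta$ via $T_t\theta$. The functional derivative is therefore taken with respect to $\theta^i(\sigma)$ at the absolute time $\sigma$, and not at the relative time $\sigma-t$. In computations one would use $\delta(T_t\theta)^j(\tau)/\delta\theta^i(\sigma)=\delta^j_i\,\delta(t+\tau-\sigma)$.

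Next we differentiate $S(\theta_\epsilon;R)$ under the $t$-integral, which is legitimate on the compact range $|t|\le R$ under mild regularity of $L$. Exchanging the $t$ and $\sigma$ integrals by Fubini gives
\[
\frac{d}{d\epsilon}\Big|_{\epsilon=0}S(\theta_\epsilon;R)=\int_{\mathbb{R}}d\sigma\,h^i(\sigma)\,\psi_i^R(\theta,\sigma),\qquad \psi_i^R(\theta,\sigma):=\int_{|t|\le R}dt\,\lambda_i(\theta,t,\sigma).
\]
Stationarity is understood in the sense of \cite{Heredia2021, heredia2025nonlocalmechanics}: the variation must vanish for the family of actions, that is, in the limit $R\to\infty$ for every fixed compactly supported $h$. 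Under the assumption that $\lambda_i(\theta,\cdot,\sigma)$ is integrable in $t$, locally uniformly in $\sigma$, dominated convergence gives $\psi^R_i\to\psi_i$ locally uniformly. Hence the condition becomes $\int h^i\psi_i\,d\sigma=0$ for all $h\in\mathcal{C}^\infty_c$. The fundamental lemma of the calculus of variations, applied componentwise using continuity of $\psi_i$ in $\sigma$, then yields $\psi_i(\theta,\sigma)=0$ for all $\sigma$, which is \eqref{chap31-EOMS}.

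The main obstacle is the passage $R\to\infty$. In a local theory, boundary terms at $|t|=R$ vanish once $\operatorname{supp}h\subset(-R,R)$. In the nonlocal theory, however, $\lambda_i(\theta,t,\sigma)$ need not vanish for $|t|>R$ even when $\sigma\in\operatorname{supp}h$, because $L$ at time $t$ sees the whole trajectory. So $\psi_i^R\neq\psi_i$ for every finite $R$, and one genuinely needs decay of $\lambda_i$ in $|t-\sigma|$ to define the limit. I would impose this as a standing hypothesis, namely integrability of $t\mapsto\lambda_i(\theta,t,\sigma)$. This is natural for the exponentially decaying kernels of Proposition~\ref{prop:Adam2}, where $\lambda_i$ inherits the factor $e^{-|t-\sigma|/\alpha}$. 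I would also state explicitly that the Euler--Lagrange equations are \emph{defined} as the limit of the first variations of $S(\cdot;R)$, following the cited framework.
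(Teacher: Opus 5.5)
Your proposal is correct and takes essentially the same route as the works the paper cites in place of a proof. In both, one varies the truncated actions $S(\theta;R)$ along compactly supported $h$, exchanges the $t$- and $\sigma$-integrals, imposes stationarity in the limit $R\to\infty$, and concludes with the fundamental lemma.

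The only refinements concern your regularity hypotheses, and neither affects the conclusion. First, $\lambda_i(\theta,t,\cdot)$ is in general a distribution: the local terms of the Lagrangian in Proposition~\ref{prop:nonlocal-lagrangian} contribute $\delta(t-\sigma)$ and $\dot\delta(t-\sigma)$. Your integrability and dominated-convergence assumptions should therefore be stated after pairing with $h$, which is harmless for such diagonal terms once $\operatorname{supp}h\subset(-R,R)$. Second, the decay you attribute to the Adam kernels is not automatic in the paper's variational setting, because that Lagrangian carries the growing weight $e^{2t/\alpha}$.
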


\begin{proof}
The proof is in \cite{Heredia2021, heredia2025nonlocalmechanics}.
\end{proof}

Up to this point, the dynamics in the extended kinematic space $\mathcal{K}'$ have been described in terms of trajectories of the form $(T_t\theta,t)\in\mathcal{K}'$ starting from $(\theta,0)$. For time-explicitly dependent nonlocal Lagrangians it is often more convenient to express the equations in a form that is compatible with an arbitrary choice of reference time. This is achieved as follows:

\begin{proposition}
Define
\begin{equation}\label{chap-EOMmc}
\Psi_i(\theta,t,\sigma):=\psi_i(\theta,t+\sigma).
\end{equation}
Then the nonlocal Euler-Lagrange equations can be equivalently written as $\Psi_i(\theta,t,\sigma)=0$ for all $(t,\sigma)\in\mathbb{R}^2$.
In particular, for trajectories starting at $t=0$ one recovers $\psi_i(\theta,\sigma)=\Psi_i(\theta,0,\sigma)$.
\end{proposition}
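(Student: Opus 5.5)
The claim is that $\Psi_i(\theta,t,\sigma)=0$ for all $(t,\sigma)\in\mathbb{R}^2$ is equivalent to $\psi_i(\theta,\sigma)=0$ for all $\sigma\in\mathbb{R}$. The first observation is that $\Psi_i$ is defined by the pure relabelling $\Psi_i(\theta,t,\sigma)=\psi_i(\theta,t+\sigma)$. I will first check that this relabelling is meaningful, meaning that $\psi_i(\theta,\cdot)$ is defined as a function on all of $\mathbb{R}$. By \eqref{chap31-EOMS} and \eqref{chap31-Lambda}, $\psi_i(\theta,s)=\int_{\mathbb{R}}dt\,\delta L(T_t\theta,t)/\delta\theta^i(s)$ for every $s\in\mathbb{R}$. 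The functional derivative is taken at an arbitrary instant $s$, and the outer integral runs over all $t$. So $\psi_i(\theta,\cdot)$ is a well-defined function of one real variable, understood in the same sense as in the preceding proposition (as a limit of the finite actions \eqref{chap31-SR} when $R\to\infty$). Evaluating it at $s=t+\sigma$ introduces no new analytic issue.

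Next comes the logical equivalence, which I will prove in both directions. Suppose $\psi_i(\theta,s)=0$ for all $s$. Then for any $(t,\sigma)$ one has $\Psi_i(\theta,t,\sigma)=\psi_i(\theta,t+\sigma)=0$. Conversely, suppose $\Psi_i(\theta,t,\sigma)=0$ for all $(t,\sigma)$. Given $s\in\mathbb{R}$, set $t=0$ and $\sigma=s$, or any pair with $t+\sigma=s$; this gives $\psi_i(\theta,s)=0$. The map $(t,\sigma)\mapsto t+\sigma$ from $\mathbb{R}^2$ onto $\mathbb{R}$ is surjective, so the two systems of equations have the same solution set in $\mathcal{K}$. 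The particular case stated in the proposition is simply the evaluation $\Psi_i(\theta,0,\sigma)=\psi_i(\theta,0+\sigma)$.

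Beyond the formal identity, it is worth justifying the interpretation of $t$ as an arbitrary reference time, since this is what makes the statement useful for time-explicit Lagrangians. To do so, I will rewrite $\psi_i(\theta,t+\sigma)$ using the translation structure. The functional derivative satisfies $\delta/\delta\theta^i(t+\sigma)=\delta/\delta(T_t\theta)^i(\sigma)$, because $(T_t\theta)(\sigma)=\theta(t+\sigma)$. Then shift the integration variable $t'\mapsto t'+t$ in \eqref{chap31-EOMS}, using $T_{t'+t}\theta=T_{t'}(T_t\theta)$. This gives
\[
\Psi_i(\theta,t,\sigma)=\int_{\mathbb{R}}dt'\,\frac{\delta L(T_{t'}(T_t\theta),t'+t)}{\delta (T_t\theta)^i(\sigma)}.
\]
In words, these are the Euler–Lagrange equations written for the translated trajectory $T_t\theta$, with the Lagrangian's explicit time argument shifted by $t$.

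I expect the only delicate step to be this change of variables. Translation invariance of Lebesgue measure makes the shift itself trivial, but the integral is only defined as the $R\to\infty$ limit coming from \eqref{chap31-SR}. Shifting $t'$ turns $\{|t'|\le R\}$ into $\{|t'-t|\le R\}$, so I will need the limit to be independent of how the window is centred. I plan to assume this, as implicitly done in \cite{Heredia2021, heredia2025nonlocalmechanics}, whenever $\lambda_i(\theta,\cdot,\sigma)$ is integrable on $\mathbb{R}$. The core equivalence does not depend on this point.
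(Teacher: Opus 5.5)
Your argument is correct. The paper gives no in-text proof of this proposition; it only defers to \cite{Heredia2021, heredia2025nonlocalmechanics}, so your proposal is self-contained where the paper is not.

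As the statement is literally written, $\Psi_i$ is just a reparametrization of $\psi_i$. Your observation that $(t,\sigma)\mapsto t+\sigma$ maps $\mathbb{R}^2$ onto $\mathbb{R}$ is therefore the whole proof, and the case $t=0$ is immediate.

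Your second step goes beyond anything the paper states. You use $\delta/\delta\theta^i(t+\sigma)=\delta/\delta(T_t\theta)^i(\sigma)$ together with $T_{t'+t}\theta=T_{t'}T_t\theta$. This shows that $\Psi_i(\theta,t,\cdot)$ is the Euler--Lagrange expression evaluated at the translated trajectory $T_t\theta$, for the time-shifted Lagrangian $L(\cdot,\cdot+t)$. That computation is correct. It is also what actually justifies the paper's phrase ``an arbitrary choice of reference time'' for time-explicit Lagrangians: it characterizes admissibility of the state $(T_t\theta,t)\in\mathcal{K}'$ rather than only $(\theta,0)$. The paper leaves this content implicit in the citation.

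One small correction to your caveat about recentring the integration window. It is not an extra assumption when $\lambda_i(\theta,\cdot,\sigma)\in L^1(\mathbb{R})$. In that case dominated convergence gives $\int_{|t'-t|\le R}\lambda_i\,dt'\to\int_{\mathbb{R}}\lambda_i\,dt'$ for every fixed $t$. The issue only arises if $\psi_i$ is understood as a merely conditionally convergent symmetric limit.
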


\begin{proof}
The proof is in \cite{Heredia2021, heredia2025nonlocalmechanics}.
\end{proof}

\subsection{Bridge to Adam}\label{sec:Adam}
In this subsection we introduce an Adam-type nonlocal Lagrangian and show that its nonlocal Euler-Lagrange equations generate a broad family of inertial Adam-type flows. More precisely, we work with an abstract nonlocal force field $F^i(T_t\theta,t)$ whose architecture is inspired by the force terms $T^i(T_t\theta,t)$ appearing in Proposition~\ref{prop:Adam2}. Under a suitable weighted reciprocity assumption on the functional derivatives of $F^i$ (cf.~Proposition~\ref{prop:nonlocal-lagrangian}), we prove that the corresponding inertial dynamics
\[
\frac{\alpha}{2}\ddot\theta^i(t)+\dot\theta^i(t)+a(t)\,F^i(T_t\theta,t)=0
\]
is variational and can be obtained from an explicit nonlocal Lagrangian.

A key point is that this variational structure is not tied to a single closed-form realization of Adam; rather, it defines a whole variational Adam-type class parametrized by the choice of $F$ within the reciprocity hypothesis. Accordingly, we present:

\begin{proposition}\label{prop:nonlocal-lagrangian}
Let $\alpha>0$ denote the learning rate, let $a:\mathbb{R}\to\mathbb{R}$ be a given scalar weight function, and let $F_i(T_t\theta,t)$ denote a nonlocal force field. Assume the weighted reciprocity condition
\begin{equation}\label{eq:reciprocity}
W(\xi)\,\frac{\delta F_i(T_\xi\theta,\xi)}{\delta\theta^j(\sigma)}
=
W(\sigma)\,\frac{\delta F_j(T_\sigma\theta,\sigma)}{\delta\theta^i(\xi)},
\qquad
W(t):=e^{\frac{2t}{\alpha}}\,a(t),
\end{equation}
for all trajectories $\theta\in\mathcal{K}$ and all $\xi,\sigma\in\mathbb{R}$. Then the inertial Adam-type equation
\begin{equation}\label{eq:adamlike-inertial}
\frac{\alpha}{2} \ddot\theta^j(t)+\dot\theta^j(t)+a(t)\,F^j(T_t\theta,t)=0
\end{equation}
is generated by the nonlocal Lagrangian
\begin{equation}\label{eq:Lagrangian}
L(T_t\theta,t)
=
\frac{e^{\frac{2}{\alpha}t}}{4}
\left[
\alpha\,\dot \theta^2(t)
+ 2\,\theta^i(t)\dot\theta_i(t)
+ \frac{2}{\alpha}\theta^2(t)
- 4\,a(t)\,\theta^i(t)\int_0^1 F_i(\tau T_t\theta,t)\,d\tau
\right].
\end{equation}
\end{proposition}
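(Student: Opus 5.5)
The plan is to compute the nonlocal Euler--Lagrange expression $\psi_j(\theta,\sigma)=\int_{\mathbb{R}}dt\,\delta L(T_t\theta,t)/\delta\theta^j(\sigma)$ term by term and check that it equals $-e^{2\sigma/\alpha}$ times a nonzero multiple of the left-hand side of \eqref{eq:adamlike-inertial} at $t=\sigma$. We use the nonlocal Euler--Lagrange equations \eqref{chap31-EOMS}, in the time-translated form $\Psi_i=0$ where convenient. The Lagrangian splits into a local quadratic part
\[
L_{\rm loc}=\tfrac{e^{2t/\alpha}}{4}\big[\alpha\dot\theta^2+2\theta^i\dot\theta_i+\tfrac{2}{\alpha}\theta^2\big]
\]
and a nonlocal potential part $L_F=-e^{2t/\alpha}a(t)\,\theta^i(t)\int_0^1F_i(\tau T_t\theta,t)\,d\tau$. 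For the local part, the functional derivative of $\int dt\,L_{\rm loc}$ is the classical Euler--Lagrange expression $\partial L/\partial\theta^j-\tfrac{d}{dt}\partial L/\partial\dot\theta^j$ evaluated at $\sigma$.

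For the local part this is a short computation. The cross term gives $\tfrac{e^{2t/\alpha}}{2}\dot\theta_j-\tfrac{d}{dt}(\tfrac{e^{2t/\alpha}}{2}\theta_j)=-\tfrac{1}{\alpha}e^{2t/\alpha}\theta_j$, which cancels exactly against the contribution $\tfrac{1}{\alpha}e^{2t/\alpha}\theta_j$ from the $\tfrac{2}{\alpha}\theta^2$ term. The kinetic term gives $-\tfrac{d}{dt}(\tfrac{\alpha}{2}e^{2t/\alpha}\dot\theta_j)=-e^{2t/\alpha}(\tfrac{\alpha}{2}\ddot\theta_j+\dot\theta_j)$. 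The exponential weight is therefore chosen precisely so that the factor $e^{2t/\alpha}$ generates the linear friction (the Bateman/Caldirola--Kanai mechanism), and the $\theta\dot\theta$ and $\theta^2$ pieces are a null combination. Hence the local part contributes $-e^{2\sigma/\alpha}\big(\tfrac{\alpha}{2}\ddot\theta_j(\sigma)+\dot\theta_j(\sigma)\big)$.

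The nonlocal part is the main obstacle, and the plan is to mimic the homotopy (Volterra/Vainberg) construction of a potential for a symmetric operator. Write $G_i(\theta,t):=W(t)F_i(T_t\theta,t)$ and consider the functional
\[
\mathcal U(\theta)=\int dt\,\theta^i(t)\int_0^1 G_i(\tau\theta,t)\,d\tau ,
\]
using that $T_t$ is linear, so $\tau T_t\theta=T_t(\tau\theta)$. Differentiating in $\theta^j(\sigma)$ yields two pieces. The first is $\int_0^1G_j(\tau\theta,\sigma)\,d\tau$. The second is $\int dt\,\theta^i(t)\int_0^1\tau\,\delta G_i(\tau\theta,t)/\delta\theta^j(\sigma)\,d\tau$. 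The reciprocity condition \eqref{eq:reciprocity} states exactly that the kernel $\delta G_i(\cdot,t)/\delta\theta^j(\sigma)$ is symmetric under $(i,t)\leftrightarrow(j,\sigma)$. Applying it converts the second piece into $\int_0^1\tau\,\tfrac{d}{d\tau}G_j(\tau\theta,\sigma)\,d\tau$, by the chain rule along the ray $\tau\mapsto\tau\theta$. Integrating by parts in $\tau$ and adding the first piece gives $G_j(\theta,\sigma)$, since the boundary term at $\tau=0$ vanishes. Thus $\delta\mathcal U/\delta\theta^j(\sigma)=W(\sigma)F_j(T_\sigma\theta,\sigma)$. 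The technical points to check are the interchange of functional differentiation with the $t$- and $\tau$-integrals, which is justified by working with the finite actions $S(\theta;R)$ and admissible variations supported in $|t|<R$, and the requirement that reciprocity hold along the whole ray $\tau\theta$. The latter is granted because the hypothesis is imposed for all trajectories in $\mathcal K$.

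Adding the two contributions, and noting that $L_F$ integrates to $-\mathcal U$, gives
\[
\psi_j(\theta,\sigma)=-e^{2\sigma/\alpha}\Big[\tfrac{\alpha}{2}\ddot\theta_j(\sigma)+\dot\theta_j(\sigma)+a(\sigma)F_j(T_\sigma\theta,\sigma)\Big].
\]
Since $e^{2\sigma/\alpha}\neq0$, the condition $\psi_j=0$ for all $\sigma$ is equivalent to \eqref{eq:adamlike-inertial}, with indices raised by $\delta^{ij}$. The version with an arbitrary reference time follows by the relation $\Psi_i(\theta,t,\sigma)=\psi_i(\theta,t+\sigma)$.
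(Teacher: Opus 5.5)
Your proposal is correct and follows essentially the same route as the paper. Both split $L$ into a local quadratic part and a nonlocal part; the paper computes the local Euler--Lagrange contribution via $\delta$ and $\dot\delta$ terms, which is equivalent to your classical computation. The nonlocal part is handled by the same homotopy argument: reciprocity applied on the rescaled trajectory $\tau\theta$, the chain rule along the ray producing $\tau\,\frac{d}{d\tau}F_j$, and collapse of $\int_0^1\frac{d}{d\tau}\big[\tau F_j\big]\,d\tau$. Absorbing the weight into $G_i=W F_i$ and working with $\psi_j(\theta,\sigma)$ rather than the shifted $\Psi_j(\theta,t,\sigma)$ are purely notational differences.
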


\begin{proof}
As previously mentioned, the dynamical equations described by a time-explicitly
dependent nonlocal Lagrangian are given by \eqref{chap-EOMmc}. For this
demonstration, we split them into the local and nonlocal parts,
$\psi_j = \psi_{j,L} +\psi_{j,NL}$, to make the computation more transparent.

\noindent\textbf{Local part.}
We first consider the local contribution
\[
\psi_{j,L}(\theta, t+\sigma)
=
\int_\mathbb{R}\mathrm{d}\xi\;
\frac{\delta L_L(T_\xi \theta, \xi)}{\delta \theta^j(\sigma+t)}\,,
\]
where the local Lagrangian is
\[
L_L(T_\xi\theta,\xi)
=
\frac{e^{\frac{2\xi}{\alpha}}}{4}
\left[
\alpha\,\dot \theta^2(\xi)
+ 2\,\theta^i(\xi)\dot\theta_i(\xi)
+ \frac{2}{\alpha}\theta^2(\xi)
\right].
\]
Using the properties of the Dirac delta \cite{Vladimirov_GF}, one obtains
\[
\frac{\delta L_L(T_\xi \theta, \xi)}{\delta \theta^j(\sigma+t)}
=
\frac{e^{\frac{2\xi}{\alpha}}}{2}
\Big[
\big(\alpha\,\dot\theta_j(\xi) + \theta_j(\xi)\big)\,\dot\delta(\xi-(\sigma+t))
+
\big(\dot\theta_j(\xi) + \tfrac{2}{\alpha} \theta_j(\xi)\big)\,\delta(\xi-(\sigma+t))
\Big].
\]
Integrating over $\xi$ we get
\[
\psi_{j,L}(\theta,t+\sigma)
=
-e^{\frac{2(\sigma+t)}{\alpha}}
\left[
\frac{\alpha}{2}\ddot \theta_j(\sigma+t)
+ \dot \theta_j(\sigma+t)
\right].
\]

\noindent\textbf{Nonlocal part.}
For the nonlocal contribution we take
\[
L_{NL}(T_\xi \theta, \xi)
=- e^{\frac{2\xi}{\alpha}}\,
a(\xi)\,\theta^i(\xi)
\int^1_0 \mathrm{d}\tau\,F_i(\tau T_{\xi}\theta,\xi),
\]
so that $L_{NL}(T_t\theta,t)$ reproduces the nonlocal term in \eqref{eq:inertial}. A straightforward functional derivative yields
\begin{equation*}
\begin{split}
\int_\mathbb{R}\mathrm{d}\xi\,
\frac{\delta L_{NL}(T_\xi \theta, \xi)}{\delta \theta^j(\sigma+t)}
&=
-\int^1_0\mathrm{d}\tau\int_\mathbb{R}\mathrm{d}\xi\;
e^{\frac{2\xi}{\alpha}}\,a(\xi)
\\ &\qquad\times
\left[
F_j(\tau T_\xi\theta,\xi)\,\delta(\xi-(\sigma+t))
+
\tau\,\theta^i(\xi)\,
\frac{\delta F_i(\tau T_\xi\theta,\xi)}{\delta\theta^j(\sigma+t)}
\right].
\end{split}
\end{equation*}
Using the symmetry assumption (which holds for all trajectories and, in particular, for $\tau T_\xi\theta$) we can rewrite the
second term as\footnote{A more detailed description of this step is provided in Appendix~\ref{App:Prop5}.}
\begin{align*}
 \tau \int_{\mathbb{R}} \mathrm{d}\xi \,
 \theta^i(\xi)\,e^{\frac{2\xi}{\alpha}}\,a(\xi)\,
 \frac{\delta F_i(\tau T_\xi\theta, \xi)}{\delta \theta^j(\sigma + t)}
 &=
 \tau\,e^{\frac{2(\sigma + t)}{\alpha}} a(\sigma + t)
 \int_{\mathbb{R}} \mathrm{d}\xi \,
 \theta^i(\xi)\,
 \frac{\delta F_j(\tau T_{\sigma + t} \theta,\sigma+t)}{\delta \theta^i(\xi)} \\
 &= \tau\,e^{\frac{2(\sigma + t)}{\alpha}} a(\sigma + t)\,
 \frac{\mathrm{d}}{\mathrm{d}\tau}
 \Bigl[F_j(\tau T_{\sigma + t}\theta, \sigma + t)\Bigr],
\end{align*}
where in the last equality we used the functional chain rule along the curve
$\tau\mapsto \tau T_{\sigma+t}\theta$ in kinematic space. Therefore, the nonlocal part becomes
\begin{align*}
    \int_{\mathbb{R}} \mathrm{d}\xi\,
    \frac{\delta L_{\text{NL}}(T_\xi \theta, \xi)}{\delta \theta^j(\sigma + t)}
    &=
    -e^{\frac{2(\sigma + t)}{\alpha}} a(\sigma + t)
    \int_0^1 \mathrm{d}\tau\,
    \frac{\mathrm{d}}{\mathrm{d}\tau}\Bigl[\tau F_j(\tau T_{\sigma + t} \theta, \sigma + t)\Bigr] \\
    &=
    -e^{\frac{2(\sigma + t)}{\alpha}} a(\sigma + t)\,
    F_j(T_{\sigma + t} \theta, \sigma + t),
\end{align*}
since the contribution at $\tau=0$ vanishes.

\medskip
\noindent\textbf{Nonlocal EL equations.}
Summing the local and nonlocal contributions we obtain
\[
 \psi_j(\theta, \sigma + t)
 =
 -e^{\frac{2(\sigma + t)}{\alpha}}
 \left[
 \frac{\alpha}{2}\ddot \theta_j(\sigma+t)
 + \dot \theta_j(\sigma+t)
 +  a(\sigma + t)\,F_j(T_{\sigma + t} \theta, \sigma + t)
 \right] = 0.
\]
Since the exponential prefactor is never zero, this is equivalent to
\[
    \frac{\alpha}{2}\ddot \theta_j(\sigma+t)
    + \dot \theta_j(\sigma+t)
    +  a(\sigma + t)\,F_j(T_{\sigma + t} \theta, \sigma + t) = 0.
\]
Redefining $\sigma + t$ as $t$ we obtain precisely the inertial Adam-type equation~\ref{eq:adamlike-inertial}.
\end{proof}

Although the auxiliary parameter $\tau\in[0,1]$ appearing in the nonlocal Lagrangian of Proposition~\ref{prop:nonlocal-lagrangian} may at first sight seem to be introduced in an ad hoc manner, its origin has a clear geometric interpretation. In fact, $\tau$ should be understood as a homotopy parameter \cite{Hatcher2002AlgebraicTopology} in kinematic space, in direct analogy with the construction underlying the Poincar\'e lemma \cite{ChoquetBruhatDeWittMorette2004}. Recall that, a closed $1$-form $\Lambda=\Lambda_i(x)\,\mathrm{d}x^i$ admits the explicit potential
\begin{equation}\label{eq:poincare}
 f(x)=\int_0^1 x^i\,\Lambda_i(\tau x)\,\mathrm{d}\tau,   
\end{equation}
which is the $p=1$ case of the homotopy operator in the Poincar\'e lemma. Here $\tau\mapsto\tau x$ is the straight-line homotopy joining $0$ and $x$, and $\tau$ is only a path parameter. Differentiating $f$ yields
\begin{equation}\label{eq:diff_poincare}
 \partial_j f(x)
=
\int_0^1\Lambda_j(\tau x)\,\mathrm{d}\tau
+
\int_0^1\tau x^i\,\partial_j\Lambda_i(\tau x)\,\mathrm{d}\tau,   
\end{equation}
so a factor $\tau$ appears only because of the chain rule applied to $\Lambda_i(\tau x)$. Using the closedness condition $\partial_j\Lambda_i=\partial_i\Lambda_j$, the second term becomes a total $\tau$-derivative and the integral collapses to
\[
\partial_j f(x)=\Lambda_j(x).
\]
In other words, we can view $f$ as a potential for the $1$-form $\Lambda$: this is precisely the standard analogy
\[
\Lambda \ \text{closed} \ \Longrightarrow\ \exists\, f \ \text{such that } \mathrm{d}f=\Lambda.
\]

In our nonlocal setting, the object we want to integrate is the nonlocal $1$-form $F_j\!\bigl(T_{\xi}\theta,\xi\bigr)$ on kinematic space  and $L_{\mathrm{NL}}$ plays the role of its potential, in the same sense that $f$ is a potential for $\Lambda$ in the finite-dimensional Poincar\'e lemma. Accordingly, the straight-line homotopy $\tau\mapsto \tau x$ is replaced by the straight-line homotopy in kinematic space, $\tau \mapsto \tau\,T_\xi\theta\,,$ and the ansatz for $L_{\mathrm{NL}}$ is the functional analogue of~\eqref{eq:poincare}. Concretely, the factor $\theta^i(\xi)$ corresponds to $x^i$, and $F_i(\tau T_\xi\theta,\xi)$ corresponds to
$\Lambda_i(\tau x)\,.$

When we compute the functional derivative
\[
\int_\mathbb{R}\mathrm{d}\xi\,\frac{\delta L_{\mathrm{NL}}(T_\xi \theta, \xi)}{\delta\theta^j(\sigma+t)},
\]
the variation of the explicit factor $\theta^i(\xi)$ produces the term $\int_0^1 F_j(\tau T_\xi\theta,\xi)\,\mathrm{d}\tau$, while the variation of $F_i(\tau T_\xi\theta,\xi)$ through its dependence on the scaled trajectory $\tau T_\xi\theta$ produces a second contribution
\[
\int_0^1 \tau\,\theta^i(\xi)\,
\frac{\delta F_i(\tau T_\xi\theta,\xi)}{\delta\theta^j(\sigma+t)}\,\mathrm{d}\tau.
\]
This is the exact analogue of the $\tau x^i\partial_j\Lambda_i(\tau x)$ term of~\eqref{eq:diff_poincare}: the factor $\tau$ arises solely from the functional chain rule applied to the map $\theta\mapsto F_i(\tau T_\xi\theta,\xi)$ along the homotopy $\tau\mapsto\tau T_\xi\theta$.

Within this homotopy-based viewpoint, the role of the reciprocity condition becomes transparent. The nonlocal Lagrangian formulation provided by Proposition~\ref{prop:nonlocal-lagrangian} relies crucially on this assumption, which plays the role of a nonlocal closedness condition  $\partial_j\Lambda_i=\partial_i\Lambda_j$: it is the functional analogue of the symmetry of mixed partial derivatives in the finite-dimensional gradient case, or, equivalently, of $\mathrm{d}\Lambda=0$ for a $1$-form $\Lambda$. In particular, it must hold for all pairs of times $(\xi,\sigma)\in\mathbb{R}^2$. Under this reciprocity condition, the $\tau$–weighted term above can be rewritten as a total derivative,
\[
\int_\mathbb{R}\mathrm{d}\xi\,e^{\frac{2\xi}{\alpha}}\,a(\xi)\,\theta^i(\xi)\,
\frac{\delta F_i(\tau T_\xi\theta,\xi)}{\delta\theta^j(\sigma+t)}
=e^{\frac{2(\sigma + t)}{\alpha}} a(\sigma + t)\,\frac{\mathrm{d}}{\mathrm{d}\tau}
\bigl[F_j(\tau T_{\sigma+t}\theta,\sigma+t)\bigr],
\]
so that the $\tau$–integral reduces to $e^{\frac{2(\sigma + t)}{\alpha}}\,a(\sigma + t)\,F_j(T_t\theta,t)$. Thus the parameter $\tau$ that appears in the intermediate formulas is not an additional physical variable nor a genuine rescaling of the trajectory; it is merely the parameter of the homotopy in kinematic space.

On the other hand, the Adam-type forces introduced in Proposition~\ref{prop:Adam2} are built from causal convolutions of the form
\[
m_i(T_t\theta,t) \sim  \int_0^t K_{\beta_1}(t-\tau)\,\partial_i f(\theta(\tau))\,\mathrm{d}\tau,
\qquad
v(T_t\theta,t)
\sim \int_0^t K_{\beta_2}(t-\tau)\,\|\nabla f(\theta(\tau))\|^2\,\mathrm{d}\tau,
\]
with kernels $K_{\beta_a}$ supported on $[0,\infty)$ only. As a consequence, for each fixed time $t$ the quantity $F_i(T_t\theta,t)$ depends solely on the past history
$\{\theta(\tau):\tau\le t\}$, and one has
\[
\frac{\delta F_i(T_t\theta,t)}{\delta\theta^j(\sigma)} = 0
\quad\text{whenever}\quad \sigma>t,
\]
whereas, for a generic trajectory and objective function $f$, the functional derivative
\[
\frac{\delta F_j(T_\sigma\theta,\sigma)}{\delta\theta^i(t)} \neq 0 \quad\text{whenever}\quad \sigma>t\,.
\]
Hence the reciprocity condition necessarily fails off the diagonal $t=\sigma$ because of the built-in arrow of time in the Adam-type memory terms. On the diagonal, the temporal incompatibility disappears and both functional derivatives may be nonzero simultaneously. In that case, the reciprocity condition reduces to a local-in-time requirement, namely the symmetry of the instantaneous Jacobian
\[
\frac{\delta F_i(T_t\theta,t)}{\delta\theta^j(t)}
=
\frac{\delta F_j(T_t\theta,t)}{\delta\theta^i(t)}.
\]
This condition is the analogue of the pointwise symmetry of mixed partial derivatives in finite dimensions and is therefore necessary for a local potential at each fixed time $t$, but it is not sufficient to guarantee the existence of a global nonlocal potential in kinematic space. In particular, the Adam-type forces retain an essential dependence on the entire past history $\{\theta(\tau):\tau\le t\}$ through the memory kernels in $m$ and $v$, so that the functional derivatives off the diagonal $t\neq\sigma$ encode how perturbations at one time affect the force at another time. For the strictly causal convolutions underlying Adam, these off-diagonal derivatives have asymmetric support in time (they vanish whenever the perturbation lies in the future but not conversely), and thus the full reciprocity condition cannot hold. 

Hence, the continuous-time Adam dynamics should not be viewed as a direct instance of Proposition~\ref{prop:nonlocal-lagrangian}. Rather, Proposition~\ref{prop:nonlocal-lagrangian} characterizes an ideal, time-symmetric, variational Adam-type class. In fact, at the level of the integro-differential equation of motion one may formally match Adam to this template by identifying the nonlocal forcing via
\[
a(t)\,F(T_t\theta,t)\;\equiv\;\eta(t+\alpha)\,T(T_{t+\alpha}\theta,t+\alpha),
\]
thereby recovering an Adam-like evolution within the same structural ansatz.

This variational, time-symmetric core can be read as a principled design blueprint: new Adam-type methods may be constructed by starting from the reciprocity-preserving template and then introducing controlled, explicitly causal symmetry-breaking to recover the desired one-sided memory structure.

Beyond optimization, this perspective suggests that nonlocal variational tools developed in physics may become available for deep learning dynamics. In particular, one may leverage Noether-type arguments to derive conserved or approximately conserved quantities from symmetries of the nonlocal Lagrangian---for instance, scale invariance or other rescaling symmetries relevant to modern networks. Now, a further interesting direction is to investigate gauge-like symmetries and the corresponding constraints furnished by the second Noether theorem, which could impose nontrivial structural restrictions. In this sense, the present framework opens the door to a systematic study of nonlocal Lagrangian formalisms, symmetry principles, and their consequences in deep learning.

\section{Numerical illustration of nonlocal dynamics}\label{sec:Numerical_Simulations}

In this section we provide numerical illustrations of the nonlocal dynamics derived in Section~\ref{Sec:Adam-dynamics} and of its first-order reduction. The goal is twofold: first, to benchmark the proposed second-order inertial continuous-time model against the standard discrete Adam optimizer; and second, to isolate the effect of the inertial correction by comparing it with the corresponding first-order nonlocal flow.  As shown, the simulations mirror the behavior of the discrete algorithm, confirming that the second-order integro-differential equation serves as an accurate and reliable model for analyzing the dynamics of these optimization methods.

Throughout, we compare discrete Adam iterates $\{\theta_k\}_{k\ge 0}$ with the continuous-time trajectories evaluated at matched times $t_k := k\alpha$. This is consistent with the scaling $t \approx k\alpha$ used in the derivation of the nonlocal dynamics (Section~\ref{Sec:Adam-dynamics}).

Furthermore, we consider the one-dimensional Rosenbrock-type loss \cite{Rosenbrock1960AutomaticMethod}
\begin{equation}\label{eq:rosenbrock_1d}
f(\theta) \;=\; (1-\theta)^2 \;+\; c\,(\theta^2-1)^2, \qquad c  \geq 0\,,
\end{equation}
with gradient
\begin{equation}\label{eq:grad_rosenbrock_1d}
\nabla f(\theta) \;=\; 2(\theta-1) \;+\; 4\,c\,\theta(\theta^2-1).
\end{equation}
This objective function provides a simple nonlinear setting whose degree of nonconvexity is controlled by the parameter \(c\). More precisely, \(c\) tunes the strength of the quartic term \(c(\theta^2-1)^2\), which promotes two preferred regions near \(\theta=\pm 1\). 
For small \(c\), the quadratic component \((1-\theta)^2\) dominates and the landscape is effectively single-well, with a unique minimizer at \(\theta=1\).  For larger \(c\), a double-well geometry emerges, with two basins separated by a barrier near the origin (Figure~\ref{fig:f-theta}). 

In our study we focus on two representative values, $c=1.5$ and $c=4$. The derivative of~\eqref{eq:grad_rosenbrock_1d} factorizes as
\[
f'(\theta)=2(\theta-1)\bigl(2\,c\,\theta^2+2\,c\,\theta+1\bigr).
\]
Hence, for $c<2$ the only critical point is the global minimizer $\theta^*=1$, while for $c>2$ two additional critical points appear on the negative side,
\[
\theta_\pm=\frac{-1\pm\sqrt{1-2/c}}{2}<0,
\]
corresponding to a local maximum and a secondary local minimum. We use $c=1.5$ as a mildly nonconvex but unimodal case, and $c=4$ as a genuinely bistable (double-well) landscape with a metastable negative basin.

\begin{figure}[h!]
\centering
\includegraphics[width=0.6\linewidth]{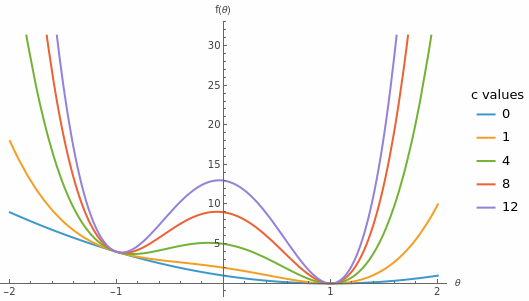}
\caption{Rosenbrock-type loss for several values of \(c\).}
\label{fig:f-theta}
\end{figure}

Before presenting the results, we provide an overview of the numerical method employed to solve the first- and second-order integro-differential equations.

\subsection{Numerical Method}

To generate trajectories for the causal integro-differential dynamics derived in Section~\ref{Sec:Adam-dynamics}, we solve the resulting IDE using a fixed-point (Picard-type) iteration~\cite{Tricomi1985IntegralEquations} based on the \textsc{IDESolver} strategy of~\cite{GELMI2014}. We implement the method from scratch and adapt it to our setting (in particular, to the strictly causal memory terms and to the discretization choices used throughout this work), rather than relying on the available Python module. Pseudocode is provided in Algorithm~\ref{alg:IDESolver}.

At a high level, the method alternates between (i) freezing the nonlocal term using the previous iterate and (ii) solving the resulting local-in-time ODE to update the trajectory. Concretely, we first construct an initial guess $y_{\text{guess},0}$ by dropping the integral (nonlocal) contribution and integrating the remaining ODE. Given an iterate $y_{\text{guess},n}$, we then evaluate the nonlocal term (the memory/convolution contribution) using $y_{\text{guess},n}$ and integrate the resulting ODE to obtain a provisional update $y_{\text{current}, n}$. Convergence is monitored on the discretization grid by the squared $\ell^2$-difference between successive iterates:
\[
\text{error} = \sum_{n=1}^{M} \left( y_{\text{current},n} - y_{\text{guess},n} \right)^2,
\]
where \( M \) denotes the total number of discretization points,  \( y_{\text{current},n} \) represents the current estimate of \( y \),  and \( y_{\text{guess},n} \) is the guess, both evaluated at the \( n \)-th step.

To improve robustness, we use an update:
\[
y_{\text{new}} = a\,y_{\text{current}} + (1 - a) y_{\text{guess}}.
\]
with an adaptive relaxation parameter $a\in[0.5,0.9999]$ chosen based on the observed error decrease. If the error increases from one iteration to the next, we increment $a$ by $5\times 10^{-4}$ to promote convergence. We declare convergence when $\text{error} \le 10^{-4}$ and use a very large iteration cap (set to $10^{3}$) as a safety fallback.

To evaluate the memory integrals, we use Gaussian quadrature with $n=10^3$ nodes\footnote{We use such a high number to be accurate with the numerical simulation.} (instead of the \texttt{quad} routine in~\cite{GELMI2014}), which significantly reduces runtime while preserving accuracy for the kernels considered here. For the local ODE update, we initially employed an explicit Euler scheme in order to remain consistent with the uniform-grid Euler discretization assumed in our derivation,
\[
t \in \{t_0, t_0+\alpha, t_0+2\alpha, \dots\}, \qquad t < t_{\text{end}},
\]
and the discrete-to-continuous matching $t_k = k\alpha$ used when comparing continuous trajectories to discrete Adam iterates.

In the second-order formulation, however, the velocity component $u^i(t):=\dot \theta^i(t)$ satisfies a fast linear relaxation of the form \(\dot u^i(t)=-(2/\alpha)\bigl(u^i(t)+F^i(t)\bigr)\), where $F^i(t)$ collects the nonlocal forcing terms induced by the Adam moments. Under explicit Euler, the homogeneous part updates as \(u^i_{k+1}=(1-2h/\alpha)\,u^i_k\); thus, taking $h=\alpha$ places the method at the stability boundary with amplification factor $-1$, which excites a false step-to-step sign alternation (a ``zig-zag'' mode) that appears as persistent high-frequency oscillations in the velocity and, even more prominently, in the acceleration (which inherits an additional factor $2/\alpha$). To eliminate this numerical artifact while preserving the model dynamics, we refine the integration step and use $h=\alpha/5$, for which \(1-2h/\alpha=0.6\), yielding a contractive update of the fast mode and producing smooth, physically consistent decay in the velocity and acceleration trajectories. Moreover, we set the numerical stabilizer to $\epsilon = 10^{-8}$.

When the dynamics requires evaluating delayed/shifted terms (e.g.\ the ``future'' increment parameter $\alpha$ in Proposition~\ref{prop:Adam2}), we obtain the necessary off-grid values via cubic interpolation using
\texttt{scipy.interpolate.interp1d}.

The above loop in Algorithm~\ref{alg:IDESolver} is repeated until the stopping (tolerance) criterion is met. Implementation details (including parameters and
reproducibility scripts) are provided in the code repository\footnote{Code available at \href{https://github.com/carlosherediapimienta/nonlocal-adam}{GitHub}.}.

\begin{algorithm}[t!]
\caption{Iterative Modified IDESolver Method}
\label{alg:IDESolver}
\begin{algorithmic}[1]
\State Initialize the iteration counter $k \gets 0$
\State Compute the initial solution $y_{\text{guess}}$ using the original differential equation
\State Compute the initial guess $y_{\text{current}}$ including the integral part with $y_{\text{guess}}$
\State Calculate the initial global error: $\text{error} \gets \| y_{\text{current}} - y_{\text{guess}} \|$
\While{$\text{error} > \text{tolerance}$}
    \State Compute new solution $y_{\text{new}}$ using a smoothing factor with $y_{\text{current}}$ and $y_{\text{guess}}$
    \State Update $y_{\text{guess}}$ solving the ODE including the integral part with $y_{\text{new}}$
    \State Calculate the current global error: $\text{new\_error} \gets \| y_{\text{new}} - y_{\text{guess}} \|$
    \If{$\text{new\_error} > \text{error}$}
        \If{maximum smoothing factor reached}
            \State Exit the loop without achieving the desired tolerance
        \Else
            \State Update the smoothing factor to the next value
        \EndIf
    \EndIf
    \State Update $y_{\text{current}} \gets y_{\text{new}}$
    \State Increment the iteration counter $k \gets k + 1$
    \If{$k > k_{\text{max}}$}
        \State Exit the loop
    \EndIf
    \State Update $\text{error} \gets \text{new\_error}$
\EndWhile
\State Set the final solution $y \gets y_{\text{guess}}$
\State \Return time values and the corresponding solution $y$
\end{algorithmic}
\end{algorithm}

\subsection{Discrete-to-continuous approximation accuracy}
To quantify how well the continuous-time models reproduce the discrete Adam iterates, we compare the discrete trajectory $\{\theta_k\}_{k\ge 0}$ with the continuous trajectories sampled at matched times $t_k := k\alpha$. Fixing a final physical time $T>0$ and setting $K=\lfloor T/\alpha\rfloor$, we consider the final-time discrepancies
\[
E_T(\alpha) := \bigl|\theta_K - \theta(T)\bigr|,
\qquad
E_{f,T}(\alpha) := \bigl|f(\theta_K) - f(\theta(T))\bigr|.
\]
We report $E_T(\alpha)$ and $E_{f,T}(\alpha)$ for both the first-order and second-order continuous-time models over a range of step sizes $\alpha$, while keeping $(\beta_1,\beta_2)$, $\theta_0$, and the horizon $T$ fixed. Figure~\ref{fig:error_scaling} shows the resulting log-log trends for $c=1.5$.

Across the tested step sizes, the performance of the inertial (second-order) model depends on the chosen initial velocity $u_0$ at wide discretizations. For relatively large $\alpha$, we observe that positive initial velocities yield a more accurate final-time match than the first-order model, whereas $u_0=0$ or negative values can lead to a slightly worse final discrepancy. As the learning rate is decreased and enters the small-step regime (here, for $\alpha\lesssim 10^{-3}$), the second-order model becomes consistently more accurate than the first-order surrogate across the tested velocities. This transition is consistent with the theoretical picture: for fixed $(\beta_1,\beta_2) = (0.99,0.999)$, the inertial model constitutes an $\alpha$-refinement of the first-order one, so improved agreement with the discrete iterates is expected once $\alpha$ is sufficiently small. In addition, since the continuous-time dynamics are numerically integrated with a step $h=\alpha/5$, decreasing $\alpha$ also reduces the ODE solver error, making the model-refinement effect more clearly visible in this regime.

To facilitate a compact comparison of how errors decay, we annotate each curve with an empirical log-log slope $p$ obtained from a least-squares fit of the form $E(\alpha)\approx C\,\alpha^{p}$ over the tested $\alpha$-range (reported in the legend for readability). We interpret $p$ as an effective slope on this finite range (rather than a formal asymptotic order), since final-time errors may be dominated by pre-asymptotic behavior and by the combined effects of model mismatch and the numerical solver used to approximate the continuous-time dynamics. Importantly, the figure exhibits a clear separation between state and objective discrepancies: the slopes for $E_{f,T}$ are approximately twice those for $E_T$. This behavior is expected near a nondegenerate minimizer $\theta^*$, where $f'(\theta^*)=0$ and a Taylor expansion gives
\[
f(\theta_K)-f(\theta(T)) \;=\; \tfrac12 f''(\xi)\,(\theta_K-\theta(T))^2,
\]
for some $\xi$ between $\theta_K$ and $\theta(T)$. Hence, once both trajectories are close to $\theta^*$, objective discrepancies are generically quadratic in the state discrepancy, i.e., $E_{f,T}(\alpha)\sim \mathcal{O}(E_T(\alpha)^2)$, which explains the observed relation $p_f\approx 2p$.

Finally, these observations align with the theoretical picture developed earlier: with $(\beta_1,\beta_2)$ fixed, the inertial second-order model constitutes an $\alpha$-refinement of the first-order dynamics. In this sense, the second-order correction captures higher-fidelity features of the discrete-time Adam dynamics at finite step sizes, consistent with the systematically smaller final discrepancies reported in Figure~\ref{fig:error_scaling}.

\begin{figure}[h!]
\centering
\includegraphics[width=0.85\linewidth]{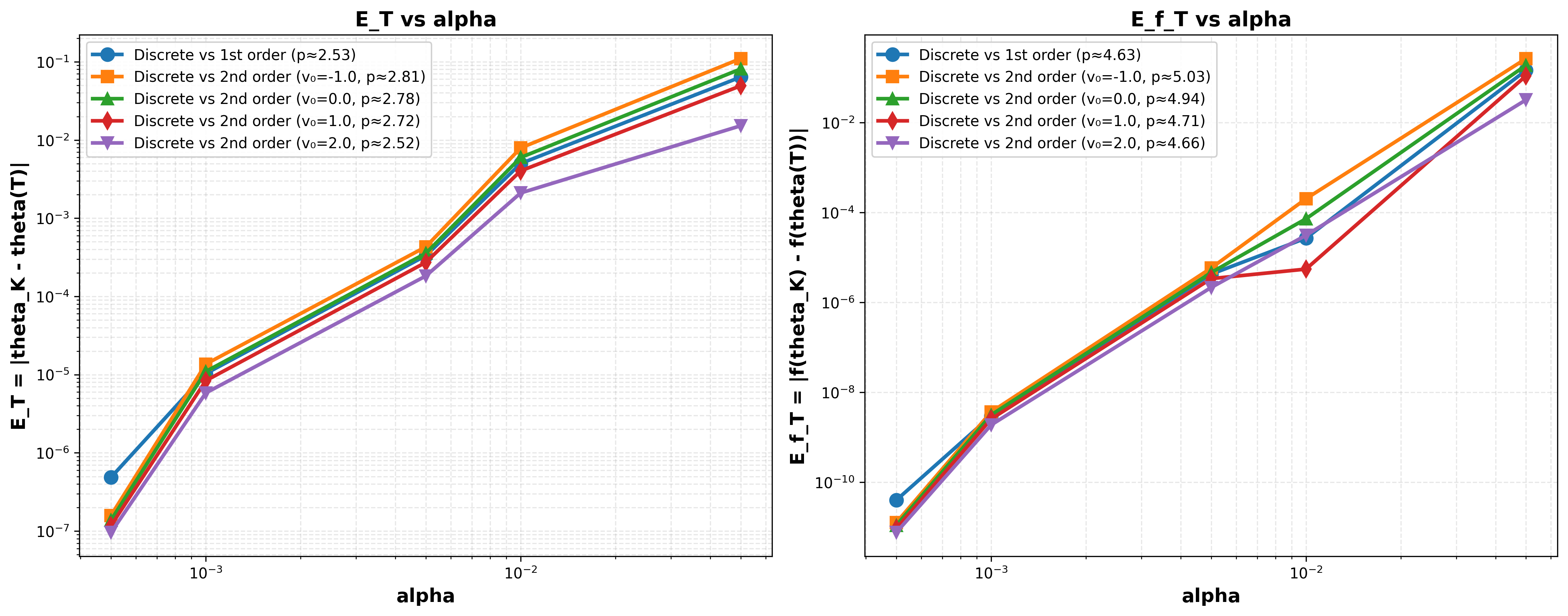}
\caption{
Final-time errors between discrete Adam $\theta_K$ (with $K=\lfloor T/\alpha\rfloor$) and the continuous-time model sampled at $t_k=k\alpha$: $E_T(\alpha)=|\theta_K-\theta(T)|$ (left) and $E_{f,T}(\alpha)=|f(\theta_K)-f(\theta(T))|$ (right) for $\beta_1=0.99$ and $\beta_2=0.999$. The legend reports an effective log-log slope $p$ from a fit $E(\alpha)\approx C\alpha^p$.
Across step sizes, the inertial second-order model is consistently closer to the discrete iterates, in line with the theory; near the minimizer, $E_{f,T}=\mathcal{O}(E_T^2)$ explains the larger slopes in the right panel.
}
\label{fig:error_scaling}
\end{figure}

\subsection{Numerical Dynamics Results: case $c=4$}\label{subsec:Numerical_3}

We now turn to the genuinely nonconvex (bistable) regime $c=4$, where the objective landscape presents two competing basins of attraction and the dynamics can exhibit stronger transient effects. Unless stated otherwise, we fix the learning rate $\alpha=10^{-3}$ and the momentum parameters $(\beta_1,\beta_2)=(0.99,0.999)$, and we initialize the discrete and continuous-time models with the same $\theta_0$. 

Figure~\ref{fig:complete_optimization} highlights the inertial nature of the second-order continuous-time dynamics through the evolution of position, velocity, and acceleration. In this experiment we initialize $\theta(0)=2$ and a positive initial velocity $\dot\theta(0)=u_0=1$, so the trajectory starts with nontrivial kinetic energy and a pronounced early-time acceleration (an initial transient) as the momentum variables rapidly adjust. Consistent with the mechanical interpretation of the model as a damped inertial system driven by an adaptive force, the velocity quickly relaxes and then decays toward zero, while the acceleration becomes small once the trajectory enters the long-time relaxation regime. As a result, the position $\theta(t)$ is steered toward the stable minimizer at $\theta=1$ (the expected basin for this initialization), and the phase portrait $(\theta,\dot\theta)$ together with the decay of the kinetic energy $\tfrac12|\dot\theta|^2$ illustrates the dissipative approach to equilibrium. In parallel, the moments $(m,v)$ build up in response to the large initial gradient and subsequently relax as $\nabla f(\theta)$ vanishes, causing the effective adaptive update scale $|m|/(\sqrt{v}+\varepsilon)$ to decrease by several orders of magnitude, which explains the progressive slowdown of the dynamics near the minimizer.

\begin{figure}[h!]
\centering
\includegraphics[width=0.85\linewidth]{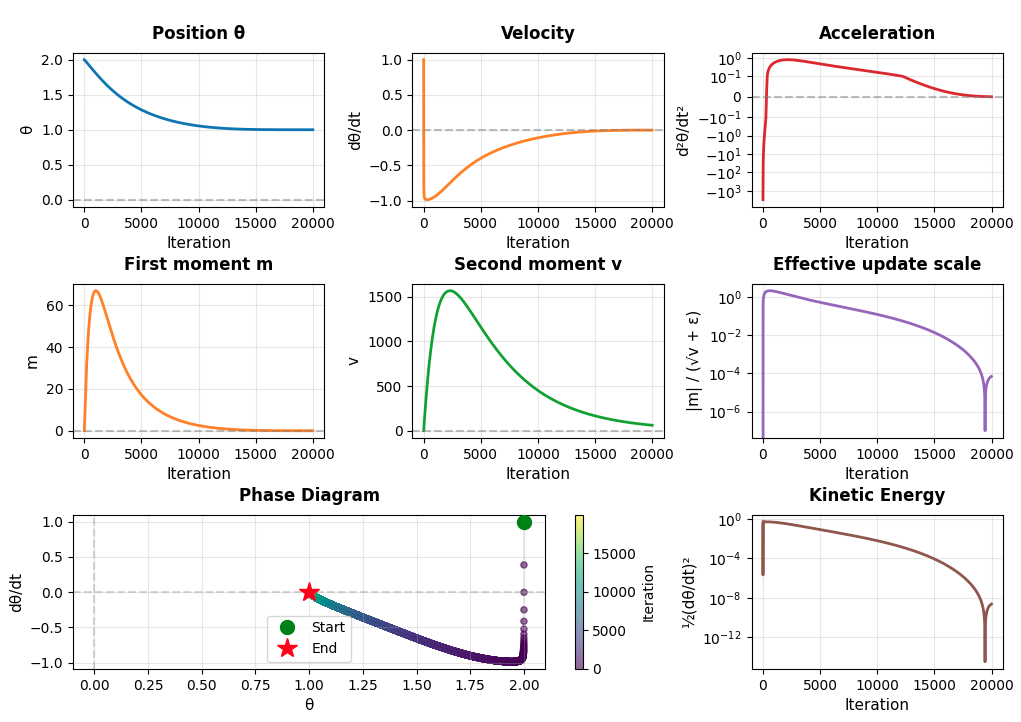}
\caption{
Complete optimization dynamics for $c=4$ with $\alpha=10^{-3}$ and $(\beta_1,\beta_2)=(0.99,0.999)$, initialized at $\theta(0)=2$ and $\dot\theta(0)=u_0=1$. Top row: position $\theta$, velocity $\dot\theta$, and acceleration $\ddot\theta$, illustrating an early inertial transient followed by dissipative relaxation. Middle row: first and second moments $(m,v)$ and the effective adaptive update scale $|m|/(\sqrt{v}+\varepsilon)$.
Bottom row: phase portrait $(\theta,\dot\theta)$ (colored by iteration) and kinetic energy $\tfrac12|\dot\theta|^2$. The trajectory converges toward the minimizer at $\theta=1$, consistent with the expected basin of attraction for this initialization.
}
\label{fig:complete_optimization}
\end{figure}

Figure~\ref{fig:transition_c4} illustrates a basin-selection transition in the bistable case $c=4$ as the stepsize $\alpha$ varies. We fix $(\beta_1,\beta_2)=(0.99,0.999)$ and initialize at $\theta_0=-1.5$; for the inertial second-order model we set $u_0=1$ (and observe the same qualitative behavior for the other tested choices of $u_0$). For relatively large learning rates, both the discrete Adam iterates and the second-order continuous-time model converge to the global minimizer at $\theta=1$, whereas as $\alpha$ is decreased the dynamics remains trapped in the basin of the local minimizer (here $\theta\approx -0.853$). Importantly, the second-order model reproduces the same direction of the transition and the same threshold-like jump in the final state, indicating that it captures the finite-stepsize mechanism responsible for basin switching in this nonconvex landscape.

\begin{figure}[h!]
\centering
\includegraphics[width=0.75\linewidth]{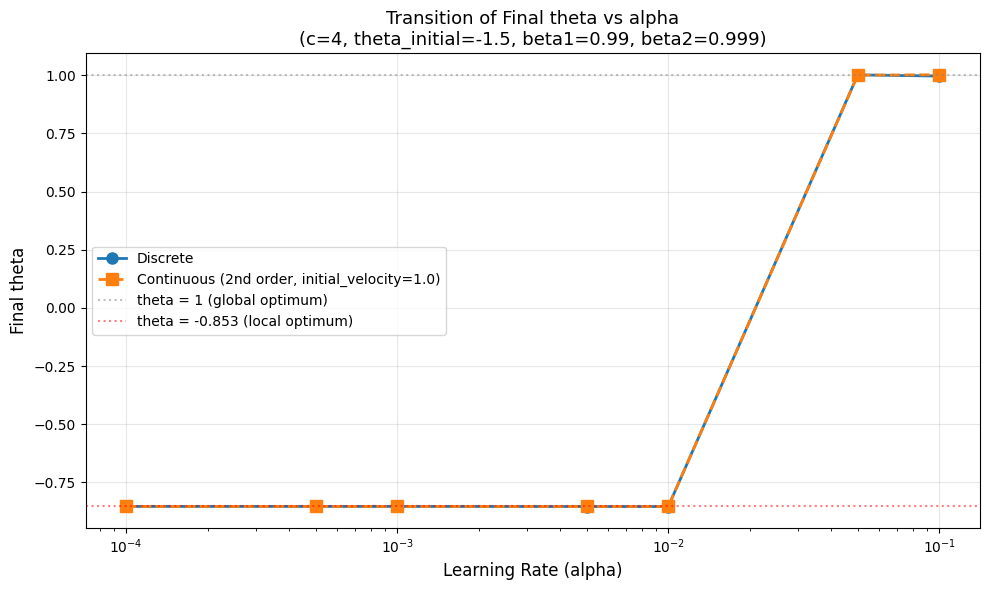}
\caption{Basin-selection transition in the bistable case $c=4$ as the stepsize $\alpha$ varies. We fix $(\beta_1,\beta_2)=(0.99,0.999)$ and initialize at $\theta_0=-1.5$; for the inertial second-order model we set $u_0=1$. The plot reports the final value of $\theta$ as a function of $\alpha$ for discrete Adam and for the second-order continuous-time model. Both exhibit the same threshold-like basin switch: for relatively large $\alpha$ they converge to the global minimizer $\theta=1$, whereas for smaller $\alpha$ the dynamics remains trapped near the local minimizer (here $\theta \approx -0.853$).
}
\label{fig:transition_c4}
\end{figure}

\subsection{Convex case: case $c=0$}
We complement the experiments with the convex baseline $c=0$, for which~\eqref{eq:rosenbrock_1d} reduces to the quadratic loss $f(\theta)=(1-\theta)^2$, with unique minimizer $\theta^\star=1$ and $f_\star=0$. Beyond serving as a simple reference landscape, the purpose of this case is methodological: it provides a controlled regime where the long-time behavior is unambiguous and the dynamics are not affected by basin selection. This makes $c=0$ an ideal test bench to validate, at the numerical level, the exponential-plus-offset bound predicted by the Lyapunov analysis of the nonlocal dynamics, namely a decay of the form $A e^{-\omega(t-\delta)}+B$ after an initial time, Theorem~\ref{thm:robust-inertial-adam}.

We fix $(\beta_1,\beta_2)=(0.99,0.999)$ and run discrete Adam with stepsize $\alpha=10^{-3}$ from $\theta_0=-1.5$. We compare it against the inertial second-order continuous-time model initialized with the same initial position and velocity $u_0=1$, and we sample the continuous trajectory at matched times $t_k=k\alpha$. As in the rest of the section, the second-order IDE is integrated with a refined step $h=\alpha/5$ to suppress fast-mode oscillations in the velocity and acceleration components.

To robustly evaluate the decay rate, we study the objective residual $\Phi(t):=f(\theta(t))-f_\star\ge 0$ through its running upper envelope $\mathrm{env}(\Phi)$ computed on a short physical-time window. We introduce this envelope in order to factor out the small-amplitude oscillations produced by the inertial correction and by discretization effects, and to focus on the monotone decay trend that is relevant for the exponential$+$offset rate profile. Here, the envelope is defined as a backward running maximum over a fixed window of length $w>0$ in physical time:
\[
\mathrm{env}(\Phi)(t)\;:=\;\max\bigl\{\Phi(s): s\in[t-w,\,t]\bigr\}.
\]
In practice, we evaluate $\mathrm{env}(\Phi)$ on the discrete time grid $\{t_n\}$ by taking, at each $t_n$, the maximum of $\Phi(t_m)$ over all indices $m\le n$ such that $t_m\ge t_n-w$. We then test whether $\mathrm{env}(\Phi)$ admits an upper bound of the form
\begin{equation}\label{eq:exp_offset_bound}
\mathrm{env}(\Phi)(t)\ \le\ A\,e^{-\omega (t-\delta)} + B,
\end{equation}
where $\delta>0$ denotes an initial-time cutoff. Concretely, for each trajectory (discrete and second-order continuous) we: (i) fix $\delta$ and compute $\mathrm{env}(\Phi)$; (ii) estimate an offset $B$ from a final tail window; (iii) fit an exponential rate $\omega$ over the intermediate regime $[\delta,\,T_{\mathrm{end}}-\text{(tail)}]$; and (iv) choose an amplitude $A$ so that~\eqref{eq:exp_offset_bound} upper-bounds $\mathrm{env}(\Phi)$ over the full time range\footnote{For more details, see the notebook: notebooks/error\_phi.ipynb at \href{https://github.com/carlosherediapimienta/nonlocal-adam}{GitHub}}.

Figure~\ref{fig:bound_c0} reports the resulting envelope curves and the fitted exponential$+$offset bounds for both the discrete Adam trajectory and the inertial second-order model. In the left panel, the two envelopes essentially coincide across the full time horizon, indicating that the second-order nonlocal model faithfully reproduces the decay profile of the discrete iterates in this convex setting. Moreover, the bound~\eqref{eq:exp_offset_bound} captures the expected ``fast initial relaxation + slower long-time regime'' pattern: after the initial layer $t\approx\delta$, $\mathrm{env}(\Phi)$ decays approximately exponentially until it becomes small compared to the offset term.

The right panel of Figure~\ref{fig:bound_c0} shows the bound-violation diagnostic $\mathrm{env}(\Phi)(t)-\big(Ae^{-\omega(t-\delta)}+B\big)$. No positive values are observed, confirming that~\eqref{eq:exp_offset_bound} provides a valid upper envelope bound on the simulated trajectories. The inset zoom highlights that the curve approaches zero only in a narrow intermediate time range (where the fitted exponential is closest to the empirical decay), while it remains strictly negative elsewhere, reflecting the conservative nature of the global upper-bound construction.

\begin{figure}[!htb]
\centering
\includegraphics[width=0.85\linewidth]{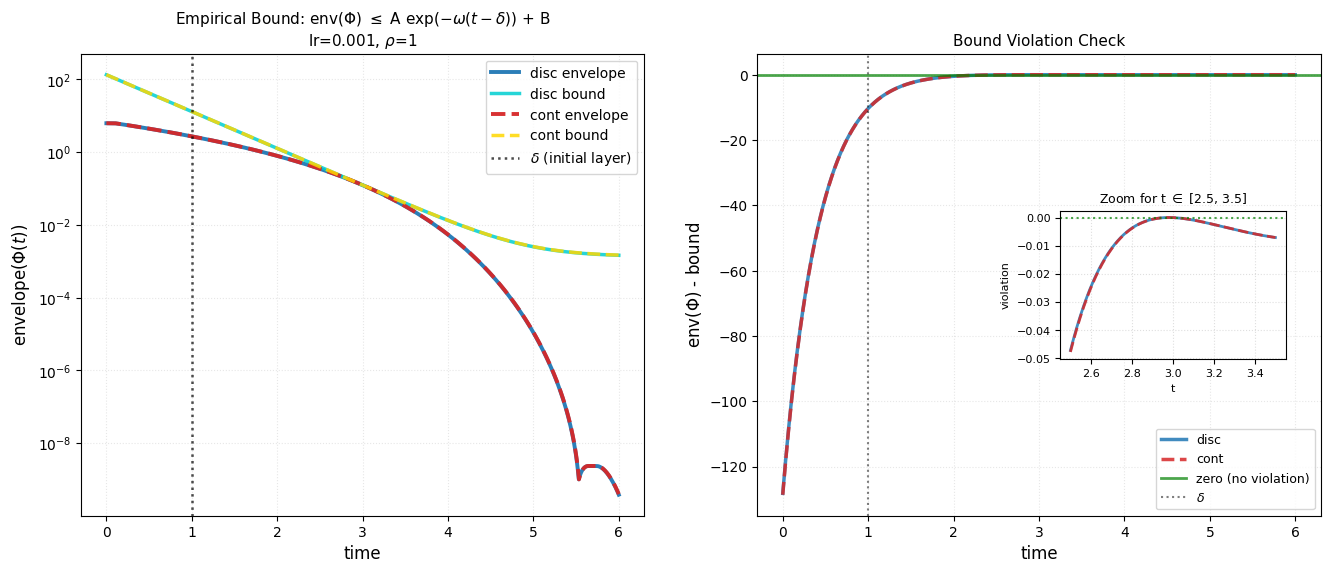}
\caption{
Convex baseline ($c=0$): validation of an exponential-plus-offset envelope bound for $\alpha=10^{-3}$ and $(\beta_1,\beta_2)=(0.99,0.999)$.
Fits: discrete Adam $(A,\omega,B)=(13.048,2.333,1.352\times10^{-3})$ and inertial second-order nonlocal model $(13.097,2.332,1.370\times10^{-3})$.
Left: $\mathrm{env}(\Phi)$ and fitted bounds (continuous model sampled at $t_k=k\alpha$).
Right: residual $\mathrm{env}(\Phi)-\text{bound}$ (inset zoom), with no positive violations.
}
\label{fig:bound_c0}
\end{figure}

\subsection{Initial velocity}\label{subsec:initial_velocity}

Because the inertial model is second order, it requires prescribing an initial velocity $u_0=\dot\theta(0)$, which has no discrete-time analogue in Adam. To quantify the impact of this additional initial condition on the relaxation regime, we compare trajectories on a post-initial-time window $t\in[\delta,T]$. In the experiments below we fix $c=1.5$, $(\beta_1,\beta_2)=(0.99,0.999)$, and $\theta_0=-1.5$, and we consider several choices $u_0\in\{-1,1,2\}$ for the inertial model.

We evaluate the second-order dynamics at matched times $t_k=k\alpha$ and set $\delta =1$. Taking a reference velocity $u_0^{\rm ref}=0$, we define the post-transient sup-norm discrepancies
\[
\begin{aligned}
\Delta_\theta(u_0)\;&:=\;\sup_{t\in[\delta,\,\min\{T,t_{\rm end}\}]}\bigl|\theta_{u_0}(t)-\theta_{u_0^{\rm ref}}(t)\bigr|,\\
\Delta_\Phi(u_0)\;&:=\;\sup_{t\in[\delta,\,\min\{T,t_{\rm end}\}]}\bigl|f(\theta_{u_0}(t))-f(\theta_{u_0^{\rm ref}}(t))\bigr|.
\end{aligned}
\]

Figure~\ref{fig:velocity} reports $\Delta_\theta(u_0)$ and $\Delta_\Phi(u_0)$ as functions of the stepsize $\alpha$. We observe that the dependence on $u_0$ is mild in the refined (small-$\alpha$) regime: as $\alpha$ decreases, both discrepancies drop rapidly, indicating that after the initial time the trajectories synchronize and the long-time relaxation profile is largely insensitive to the prescribed velocity. Sensitivity increases at larger discretizations, consistent with a stronger influence of inertial initialization outside the small-step regime. Finally, the effect is predominantly controlled by the magnitude of the velocity perturbation: the curves for $u_0=-1$ and $u_0=1$ are nearly indistinguishable, while the larger choice $u_0=2$ yields systematically higher deviations.

\begin{figure}[!htb]
\centering
\includegraphics[width=0.85\linewidth]{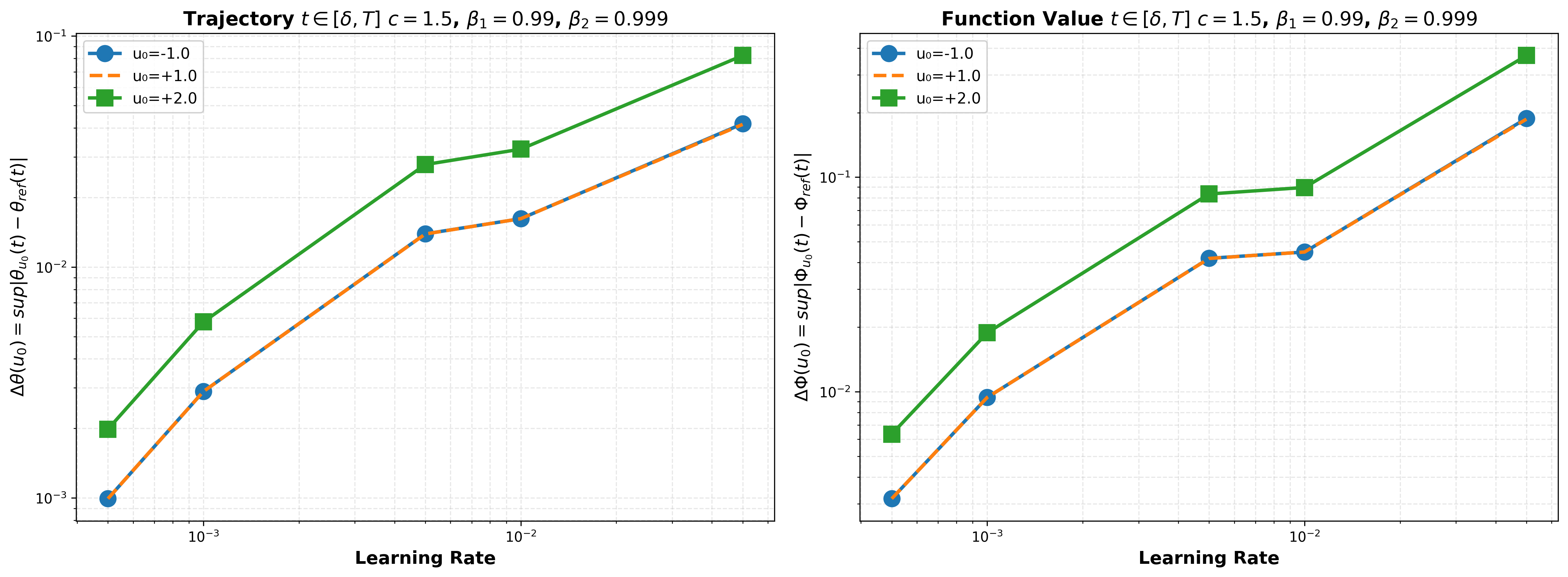}
\caption{
Dependence on the initial velocity $u_0$ in the inertial second-order model (case $c=1.5$) measured on the post-transient window $t\in[\delta,T]$. Left: $\Delta_\theta(u_0)=\sup_{t\in[\delta,\min\{T,t_{\rm end}\}]}|\theta_{u_0}(t)-\theta_{u_0^{\rm ref}}(t)|$. Right: $\Delta_\Phi(u_0)=\sup_{t\in[\delta,\min\{T,t_{\rm end}\}]}|f(\theta_{u_0}(t))-f(\theta_{u_0^{\rm ref}}(t))|$.
}
\label{fig:velocity}
\end{figure}

\subsection{Critical beta regimes and moment-positivity diagnostics}\label{subsec:betas_critical}

In addition to the standard high-memory setting $(\beta_1,\beta_2)=(0.99,0.999)$, we also explore more delicate choices of $(\beta_1,\beta_2)$, such as $\beta_1=0.5$ and $\beta_2<0.5$, in order to stress-test the numerical procedure and the discrete-to-continuous agreement outside the usual parameter regime. A key numerical subtlety in these runs is the structural constraint $v(t)\ge 0$ for the second-moment variable: during the Picard fixed-point iterations used to solve the IDE, intermediate iterates may produce a provisional profile with $v(t)<0$ at isolated times. When this occurs, we enforce the constraint by projecting onto the nonnegative $v(t)\ \leftarrow\ \max\{v(t),0\}$ before evaluating the subsequent update. Empirically, this simple projection does not prevent convergence of the Picard scheme; rather, it stabilizes the iteration while preserving the expected qualitative behavior. In the converged solution used for the comparisons below, the second moment is nonnegative, consistent with the intended interpretation of $v$ as a squared-gradient accumulator.

Figure~\ref{fig:betas_dynamics} reports a complete dynamical decomposition for a representative critical setting (here $\alpha=10^{-2}$, $\beta_1=0.5$, $\beta_2=0.2$, $c=1.5$, and $\theta_0=-1.5$), including the evolution of $\theta$, $\dot\theta$, and $\ddot\theta$, the moments $(m,v)$, and the effective update scale $|m|/(\sqrt{v}+\varepsilon)$. In this short-memory regime, the moments build up and relax on a time scale comparable to the post-initial-time dynamics, yielding a pronounced transient in the velocity and acceleration. Importantly, as can be seen in the simulations, once the solver has converged the resulting second moment remains strictly nonnegative throughout the trajectory, i.e., $v(t)\ge 0$ for all reported times. 

In addition, these diagnostics provide a direct stability check of the numerical dynamics in a challenging beta regime. Here we intentionally work in a critical configuration with $\beta_1 \nleq \sqrt{\beta_2}$. As can be seen in the plots, the instability of this regime is directly visible through sustained oscillations in the velocity $\dot\theta(t)$ and, even more clearly, in the acceleration $\ddot\theta(t)$; the kinetic energy $\tfrac12|\dot\theta(t)|^2$ mirrors the same oscillatory, non-monotone pattern instead of a clean dissipative decay.

Despite these more critical choices of $(\beta_1,\beta_2)$, the resulting continuous-time trajectories remain fully consistent with the discrete Adam dynamics: as we show next, both the qualitative relaxation pattern and the quantitative evolution of $\theta(t)$ and $\Phi(t)$ stay closely aligned with the corresponding discrete iterates.

\begin{figure}[!htb]
\centering
\includegraphics[width=0.85\linewidth]{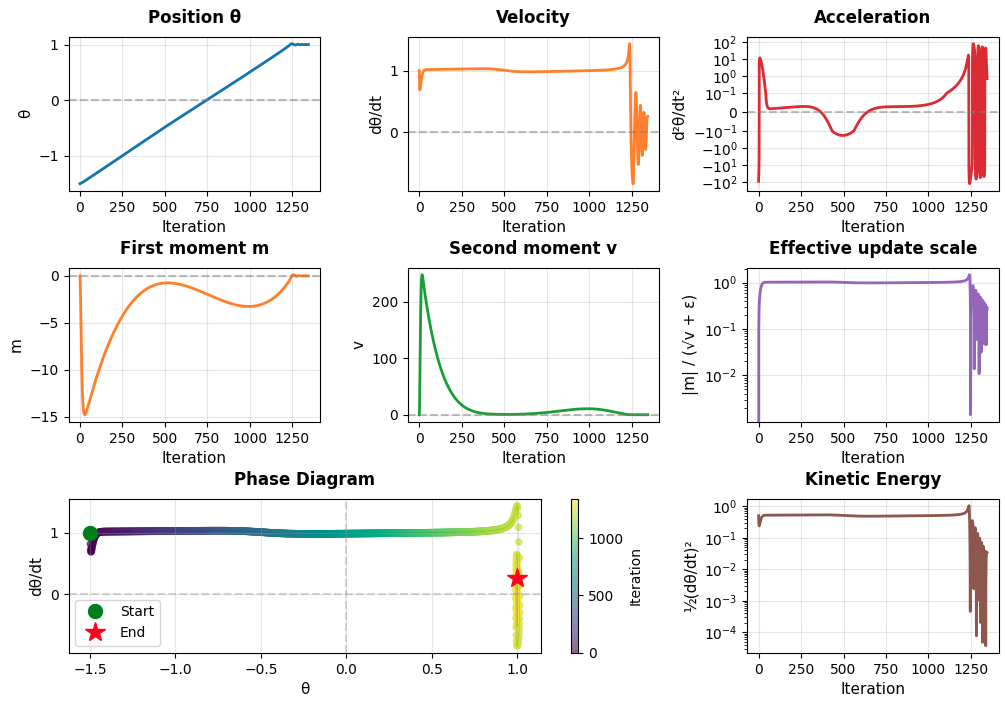}
\caption{
Critical beta regime: evolution of the second-moment variable $v(t)$ in the inertial continuous-time model. The reported (converged) trajectory satisfies the structural constraint $v(t)\ge 0$ for all times; in the implementation we enforce this by clipping, $v\leftarrow \max\{v,0\}$, whenever needed during the numerical iterations.
}
\label{fig:betas_dynamics}
\end{figure}

Figures~\ref{fig:betas_convergence}-\ref{fig:betas_theta} compare the discrete Adam iterates with the first-order and second-order continuous-time models at matched times $t_k=k\alpha$. Despite the stronger transients induced by the critical choices of $\beta_1$ and $\beta_2$, the continuous-time trajectories remain quantitatively close to the discrete optimizer throughout the run. Moreover, the late-time regime (highlighted by the zoomed insets) shows that all three dynamics converge to the same minimizer and that the second-order model continues to provide an accurate refinement of the first-order approximation. Taken together, these results indicate that even in short-memory, more delicate parameter regimes, the proposed second-order nonlocal model remains well behaved and reproduces the correct terminal behavior.

\begin{figure}[!htb]
\centering
\includegraphics[width=0.75\linewidth]{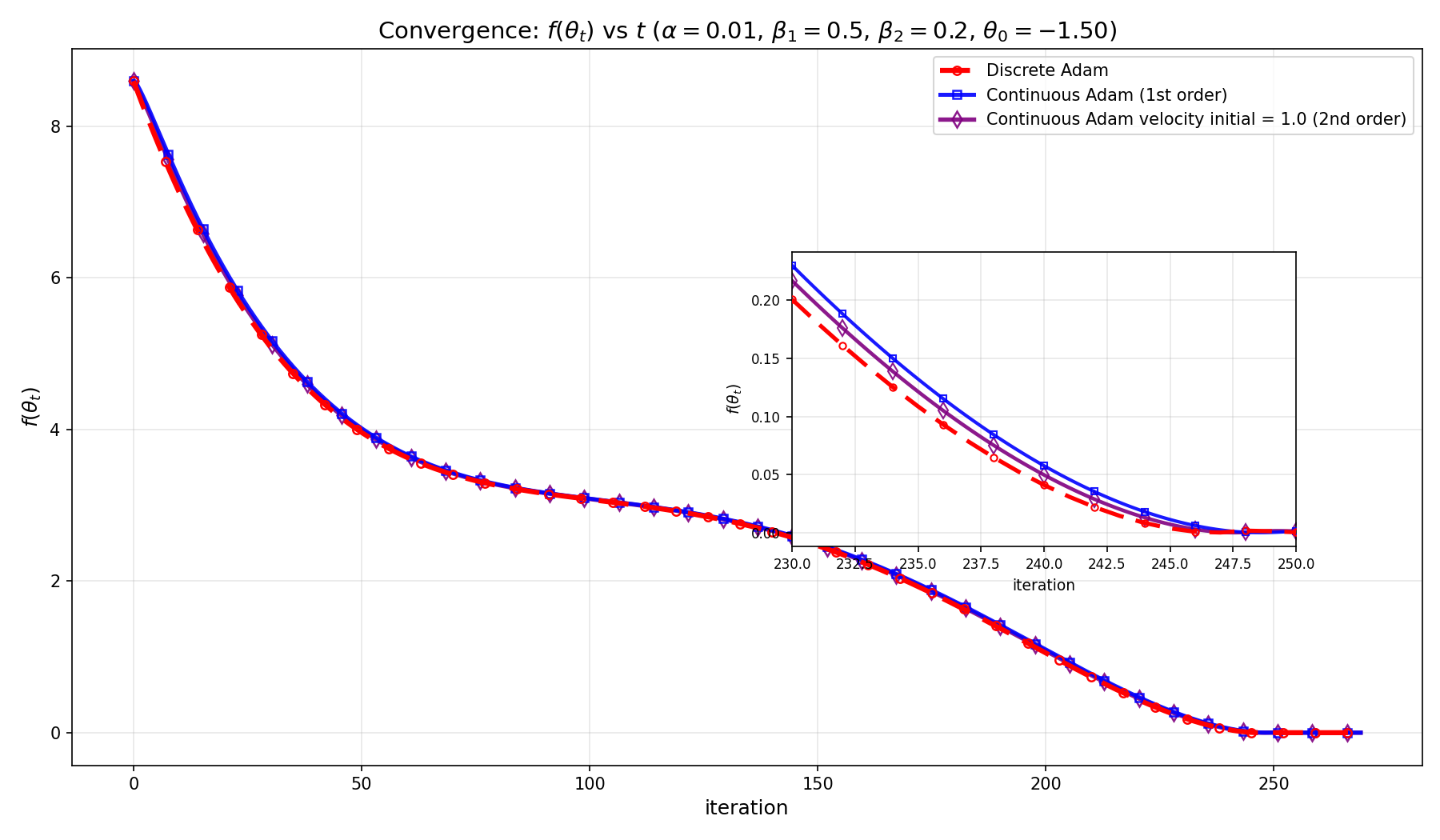}
\caption{
Critical beta regime ($\alpha=10^{-2}$, $\beta_1=0.5$, $\beta_2=0.2$, $c=1.5$, $\theta_0=-1.5$). Despite the challenging choice $\beta_1 \nleq \sqrt{\beta_2}$, the continuous-time models remain closely aligned with the discrete Adam iterates. This figure compares objective convergence (discrete vs. continuous-time surrogates).
}
\label{fig:betas_convergence}
\end{figure}

\begin{figure}[!htb]
\centering
\includegraphics[width=0.75\linewidth]{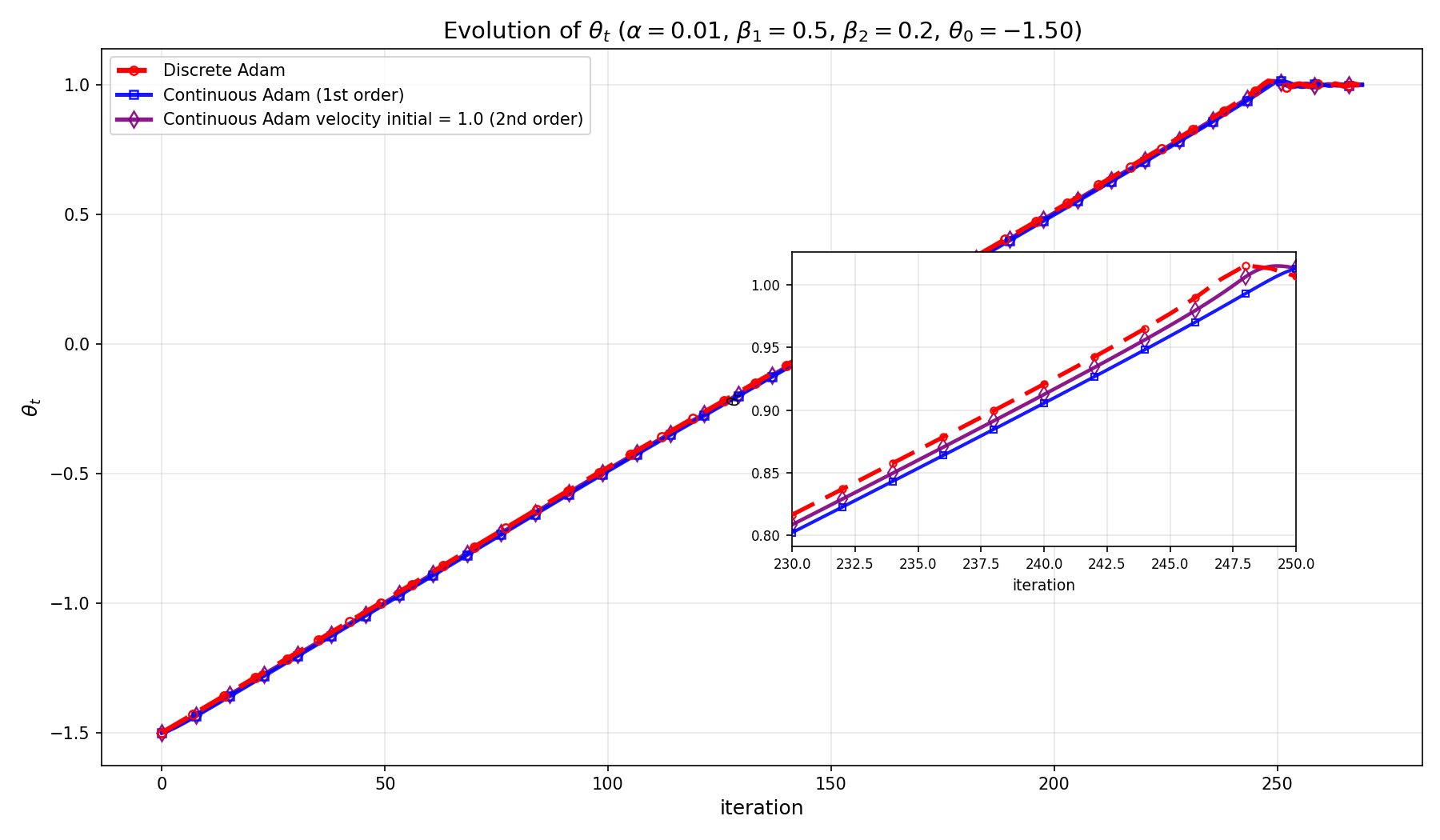}
\caption{
Critical beta regime ($\alpha=10^{-2}$, $\beta_1=0.5$, $\beta_2=0.2$, $c=1.5$, $\theta_0=-1.5$). Despite the challenging choice $\beta_1 \nleq \sqrt{\beta_2}$, the continuous-time models remain closely aligned with the discrete Adam iterates. This figure shows the state evolution $\theta(t)$ at matched physical times $t_k=k\alpha$ in the same regime.
}
\label{fig:betas_theta}
\end{figure}

\section{Conclusions}
This work develops a second-order, continuous-time dynamical interpretation of Adam. Starting from the discrete recursion---Algorithm~\ref{alg:Adam}---and performing a second-order expansion in the stepsize, we derive an inertial equation with linear friction whose driving term is nonlocal: it depends on the past trajectory through causal convolutions of the gradient and squared-gradient. This integro-differential formulation exposes the stepsize $\alpha$ as a memory scale and reveals regime-dependent kernel structures (critical/over/underdamped) induced by the exponential averaging parameters $(\beta_1,\beta_2)$.

A central theoretical outcome is that the accelerated second-order model is an $\alpha$-refinement of the first-order nonlocal Adam flow \cite{heredia2025modelingadagradrmspropadam}. Away from the initial time, the second-order moments and the resulting normalized force converge quantitatively to their first-order counterparts as $\alpha\to 0$, so inertia acts as a controlled perturbation rather than a new drift on fixed finite horizons $T$. The perturbation size is governed by
\[
\rho=\max\Bigl\{\alpha,\ \bigl(\alpha/(1-\beta_1)\bigr)^2,\ \alpha/(1-\beta_2)\Bigr\}.
\]
In addition, we address a structural constraint inherited from discrete Adam---nonnegativity of the second-moment variable---by providing a sufficient condition that guarantees positivity even in oscillatory (underdamped) regimes.

Building on the perturbative link to the first-order model, we establish stability and convergence guarantees for the inertial nonlocal dynamics via Lyapunov method \cite{DaSilva2020}. Under standard smoothness and compact-sublevel assumptions, and assuming the compatibility condition\footnote{which implies $\beta_1^2 \le \sqrt{\beta_2}$ that was also found in the original Adam  analysis \cite{kingma2014}.}
\[
\beta_1 \le \sqrt{\beta_2},
\]
the second-order flow inherits dissipation properties up to an $O(\rho)$ remainder. With additional geometric structure, this can be strengthened into quantitative convergence statements.  In the Polyak-\L{}ojasiewicz (PL) setting (including strong convexity), the objective residual admits an exponential decay estimate with an $O(\rho^2)$ residual term; likewise, in the Kurdyka-\L{}ojasiewicz (KL) setting, one obtains the corresponding KL rate bound up to a $\rho$-dependent remainder. Importantly, these $\rho$-terms arise as upper bounds in the Lyapunov analysis and should be interpreted as stability guarantees: they prevent the dynamics from destabilizing and certify convergence to a controlled neighborhood. 
They do not preclude the possibility that the residual actually converges to zero in specific problems or parameter regimes, as also suggested by our numerical simulations.

From a structural viewpoint, the inertial nonlocal equation resembles a forced mechanical system (inertia + friction + forcing), motivating a variational perspective. We introduce an Adam-inspired class of nonlocal Lagrangians whose Euler-Lagrange equations generate inertial, memory-dependent flows under a weighted reciprocity condition on the force functional. At the same time, we identify an obstruction to an exact Lagrangian formulation of strict Adam: strictly causal (one-sided) memory breaks the time-symmetry required for global reciprocity in kinematic space. This positions the proposed Lagrangian class as an ``ideal,'' reciprocity-preserving blueprint that can guide the design of new Adam-like algorithms via controlled symmetry breaking \cite{tanaka2021noether}.

Beyond the modeling insight, this viewpoint opens the door to importing the broader nonlocal Lagrangian and Hamiltonian formalism for this class of adaptive optimization dynamics, including nonlocal extensions of Noether-type results \cite{heredia2025nonlocalmechanics}. Such tools can, in principle, associate continuous symmetries with conserved (or quasi-conserved) quantities in the resulting nonlocal flows, providing structurally meaningful invariants. These objects can then be leveraged to better understand the long-time behavior of the dynamics and to inform principled modifications of the algorithm---e.g., by enforcing or approximating desirable invariances, tuning memory/friction mechanisms, or designing regularizations that promote stability and efficiency---thereby offering a systematic route toward improved adaptive methods.

Finally, numerical experiments on Rosenbrock-type families support the theory and validate the modeling choices. The second-order nonlocal surrogate tracks discrete Adam closely when trajectories are compared at matched physical times, and it becomes systematically more accurate than the first-order surrogate in the small-step regime, consistent with the $\alpha$-refinement interpretation. 
Moreover, the simulations provide a clear phenomenology of the stability condition $\beta_1 \le \sqrt{\beta_2}$: when this compatibility fails (i.e., $\beta_1 \nleq \sqrt{\beta_2}$), the accelerated dynamics exhibit pronounced oscillations in the velocity (and hence in the kinetic-energy-like quantity), signaling the onset of instability. In contrast, within the stable regime, we observe that for the representative example studied the long-time behavior is essentially invariant with respect to the additional initial condition introduced by the second-order model, namely the choice of initial velocity; different initial velocities lead to trajectories that rapidly synchronize after a short transient.  Finally, the experiments corroborate the structural constraints of the model by confirming that the second-moment variable remains positive along the flow, i.e., $v(t)>0$ throughout the evolution.

Overall, the paper provides:
\begin{itemize}
  \item a refined second-order continuous-time representation of Adam as an inertial nonlocal flow with explicit causal kernels;
  \item a quantitative bridge to first-order nonlocal Adam dynamics;
  \item Lyapunov-based stability and PL/KL convergence guarantees up to $\rho$-controlled neighborhoods;
  \item a variational Adam-like framework;
  \item numerical evidence that the model faithfully reproduces discrete behavior, including qualitative phenomena such as basin transitions.
\end{itemize}

Natural directions for future work follow the variational perspective developed here. 
A particularly promising route is to leverage nonlocal Noether-type principles to identify symmetry-induced conserved quantities. Such structural objects could provide principled design constraints and diagnostics, ultimately enabling the construction of new optimization algorithms that are Adam-inspired yet derived from symmetry and invariance considerations rather than heuristic modifications. To conclude, while we focused on Adam due to its widespread use in modern machine learning, the same dynamical and variational program can be carried out for other adaptive methods---including AdaGrad \cite{AdaGra} and RMSProp \cite{hinton_lecture6}---suggesting a broader framework for understanding and designing adaptive optimizers.

\section*{Acknowledgments}
To my darling Daniela: Thank you for being my constant inspiration and my greatest motivation.

\section*{Author Contributions}
Carlos Heredia was solely responsible for the conception, development, implementation, and writing of this manuscript.

\section*{Funding}
The author declares that no funding was received for this work.

\appendix

\section{Appendix: A uniform shift estimate for the first-order drift}\label{app:shift-estimate}

In the proof of Theorem~\ref{thm:robust-inertial-adam} we use that, away from the initial time, the first-order nonlocal drift is Lipschitz in time with a constant independent of the stepsize~$\alpha$. This yields the shift estimate
\[
\|F(t+\alpha)-F(t)\|\le C_{\delta,T}\,\alpha
\qquad \text{for } t\in[\delta,T-\alpha],
\]
which allows us to replace $F(t+\alpha)$ by $F(t)$ at the cost of an $O(\alpha)$ perturbation. For completeness we provide the argument.

\paragraph{Settings:} Fix $0<\delta<T$ and assume $0<\alpha<\delta$.
Let $g(t):=\nabla f(\theta(t))$ and assume
\[
\|g(t)\|\le G_{\max},\qquad 
\|\dot g(t)\|\le \dot G_{\max}\quad \forall t\in[0,T].
\]
Let $M_1$ be the first-order moment solving
\begin{equation}\label{eq:M1_eq_diff}
\dot M_1(t)+\lambda_1 M_1(t)=\lambda_1 g(t),\qquad M_1(0)=0,\qquad 
\lambda_1:=\frac{1-\beta_1}{\alpha}.    
\end{equation}
Let $M_2$ solve
\[
\dot M_2(t)+\lambda_2 M_2(t)=\lambda_2 \|g(t)\|^2,\qquad M_2(0)=0,\qquad
\lambda_2:=\frac{1-\beta_2}{\alpha}.
\]
We assume that for some constants $\varepsilon_0,\varepsilon_1>0$, and $\forall t\in[\delta,T]$,
\[
\varepsilon\in C^1([\delta,T]),\qquad \varepsilon(t)\ge \varepsilon_0,\qquad |\dot\varepsilon(t)|\le \varepsilon_1,
\qquad M_2(t)\ge 0\,.
\]
We further assume the standard Adam-type condition
\begin{equation}\label{eq:beta_condition_sqrt}
\beta_1\le \sqrt{\beta_2}
\qquad\text{(or, }\,\beta_1^2\leq \beta_2\text{)}\,.
\end{equation}
We define the continuous bias-correction factor
\[
\eta(t):=\frac{\sqrt{1-\beta_2^{t/\alpha}}}{1-\beta_1^{t/\alpha}},
\qquad
\mu(t):=\frac{\eta(t)}{\sqrt{M_2(t)}+\varepsilon(t)},
\qquad 
F(t):=-\mu(t)\,M_1(t).
\]

\paragraph{Uniform bounds on $M_1$ and $\dot M_1$ on $[0,T]$:}
The variation-of-constants formula of equation \eqref{eq:M1_eq_diff} gives
\[
M_1(t)=\int_0^t \lambda_1 e^{-\lambda_1(t-s)}\,g(s)\,ds,
\]
hence $\|M_1(t)\|\le G_{\max}$ for all $t\in[0,T]$.
Moreover,
\[
\dot M_1(t)=\lambda_1\big(g(t)-M_1(t)\big).
\]
Using the identity
\[
g(t)-M_1(t)=e^{-\lambda_1 t}g(0)+\int_0^t e^{-\lambda_1(t-s)}\,\dot g(s)\,ds,
\]
we obtain for $t\ge \delta$,
\[
\|g(t)-M_1(t)\|
\le e^{-\lambda_1\delta}\|g(0)\|+\frac{1}{\lambda_1}\dot G_{\max}.
\]
Therefore,
\[
\|\dot M_1(t)\|
\le \lambda_1 e^{-\lambda_1\delta}\|g(0)\|+\dot G_{\max}.
\]
Since $\sup_{x\ge 0} x e^{-x\delta}=1/(e\delta)$, we have
\[
\lambda_1 e^{-\lambda_1\delta}\le \frac{1}{e\delta},
\]
and consequently,
\begin{equation}\label{eq:M1dot-bound-app}
\sup_{t\in[\delta,T]}\|\dot M_1(t)\|
\le \frac{\|g(0)\|}{e\delta}+\dot G_{\max}
=:C_{M,\delta,T},
\end{equation}
which is independent of $\alpha$.

\paragraph{Uniform bounds on $M_2$ and $\dot M_2$ on $[\delta,T]$:}
Let $r(t):=\|g(t)\|^2$. Since $g\in C^1([0,T])$ and $\|g\|\le G_{\max}$, we have
$r\in C^1([0,T])$ and
\[
|r(t)|\le G_{\max}^2,\qquad |\dot r(t)|\le 2G_{\max}\dot G_{\max}=: \dot R_{\max}.
\]
Applying the same argument as above to the scalar ODE for $M_2$ with source $r$
yields, for $t\ge\delta$,
\[
|\dot M_2(t)|
=\lambda_2|r(t)-M_2(t)|
\le \lambda_2 e^{-\lambda_2\delta}\,|r(0)|+\dot R_{\max}.
\]
Using again $\lambda_2 e^{-\lambda_2\delta}\le 1/(e\delta)$, we obtain
\begin{equation}\label{eq:vdot-bound-app}
\sup_{t\in[\delta,T]}|\dot M_2(t)|
\le \frac{|r(0)|}{e\delta}+\dot R_{\max}
=:C_{v,\delta,T},
\end{equation}
independent of $\alpha$.

\paragraph{Uniform bounds on $\eta$ and $\dot\eta$ on $[\delta,T]$:}
Write $\beta_a^{t/\alpha}=e^{-(|\ln\beta_a|/\alpha)t}$ and denote $c_a:=|\ln\beta_a|>0$. For $t\ge\delta$ we have $e^{-c_a t/\alpha}\le e^{-c_a\delta/\alpha}$, hence $\eta(t)$ is uniformly bounded on $[\delta,T]$, namely:
\[
\sup_{t\in[\delta,T]}|\eta(t)|\le C_{\eta,0}\,.
\]
For the derivative, direct differentiation gives a linear combination of terms of the form
\[
\frac{1}{\alpha}\,\beta_a^{t/\alpha}
=\frac{1}{tc_a}\Big(\frac{t c_a}{\alpha}e^{-t c_a/\alpha}\Big).
\]
Since $\sup_{x\ge 0}x e^{-x}=\frac{1}{e}$ and $t\ge\delta$, we obtain the uniform bound
\[
\frac{1}{\alpha}\,\beta_a^{t/\alpha}\le \frac{1}{e\,\delta\,c_a}\,,\qquad t\in[\delta,T],
\]
and therefore
\[
\sup_{t\in[\delta,T]}|\dot\eta(t)|\le C_{\eta,1},
\]
for some constant $C_{\eta,1}$ depending on $(\beta_1,\beta_2,\delta)$ but not on $\alpha$.

\paragraph{Uniform bound on $\mu$ on $[\delta,T]$:}
By $M_2(t)\ge 0$ and $\varepsilon(t)\ge \varepsilon_0$ we have
\[
\sqrt{M_2(t)}+\varepsilon(t)\ge \varepsilon_0>0.
\]
Thus $\mu$ is uniformly bounded:
\[
\sup_{t\in[\delta,T]}|\mu(t)|\le \frac{C_{\eta,0}}{\varepsilon_0}=:C_{\mu,0}.
\]

\paragraph{A moment-comparison estimate on finite horizons:}
Let $H^{(1)}_a(\tau):=\lambda_a e^{-\lambda_a\tau}$, $a\in\{1,2\}$. Then
\[
M_1(t)=\int_0^t H^{(1)}_1(\tau)\,g(t-\tau)\,d\tau,\qquad
M_2(t)=\int_0^t H^{(1)}_2(\tau)\,\|g(t-\tau)\|^2\,d\tau.
\]
For a fixed $T>0$ define
\[
C_{12}(T):=\sup_{t\in[0,T]}\int_0^t \frac{H^{(1)}_1(\tau)^2}{H^{(1)}_2(\tau)}\,d\tau.
\]
Then for all $t\in[0,T]$ one has
\begin{equation}\label{eq:M1_M2_comparison}
\|M_1(t)\|^2 \le C_{12}(T)\, M_2(t),
\qquad\text{and hence}\qquad
\frac{\|M_1(t)\|}{\sqrt{M_2(t)}}\le \sqrt{C_{12}(T)}\,.
\end{equation}
Indeed, by weighted Cauchy-Schwarz,
\[
\|M_1(t)\|^2
=\Big\|\int_0^t \frac{H^{(1)}_1(\tau)}{\sqrt{H^{(1)}_2(\tau)}}\Big(\sqrt{H^{(1)}_2(\tau)}\,g(t-\tau)\Big)\,d\tau\Big\|^2
\le \Big(\int_0^t \frac{H^{(1)}_1(\tau)^2}{H^{(1)}_2(\tau)}\,d\tau\Big)\,M_2(t),
\]
and taking the supremum over $t\in[0,T]$ yields \eqref{eq:M1_M2_comparison}. Moreover, since
\[
\frac{H^{(1)}_1(\tau)^2}{H^{(1)}_2(\tau)}
=\frac{\lambda_1^2}{\lambda_2}\,e^{-(2\lambda_1-\lambda_2)\tau},
\]
we obtain
\[
C_{12}(T)=
\begin{cases}
\frac{\lambda_1^2}{\lambda_2}\,\frac{1-e^{-(2\lambda_1-\lambda_2)T}}{2\lambda_1-\lambda_2}, & 2\lambda_1\neq\lambda_2,\\[6pt]
\frac{\lambda_1}{2}\,T, & 2\lambda_1=\lambda_2.
\end{cases}
\]
Under the standing assumption---\eqref{eq:beta_condition_sqrt}---we have
\[
1-\beta_2 \le 1-\beta_1^2 = (1-\beta_1)(1+\beta_1) < 2(1-\beta_1),
\]
hence $2\lambda_1-\lambda_2>0$ and therefore
\[
C_{12}(T)\le \int_0^\infty \frac{\lambda_1^2}{\lambda_2}e^{-(2\lambda_1-\lambda_2)\tau}\,d\tau
=\frac{\lambda_1^2}{\lambda_2(2\lambda_1-\lambda_2)}
=\frac{(1-\beta_1)^2}{(1-\beta_2)\,[\,1+\beta_2-2\beta_1\,]}
=:C_{12}^\star,
\]
which depends only on $(\beta_1,\beta_2)$ and is independent of $\alpha$.

\paragraph{Lipschitz bound for $F$ and the shift estimate:}
Write
\[
F(t)=-\frac{\eta(t)\,M_1(t)}{D(t)},
\qquad D(t):=\sqrt{M_2(t)}+\varepsilon(t)\ge \varepsilon_0.
\]
On $\{t:\,M_2(t)>0\}$ we can differentiate:
\[
\dot F(t)
= -\frac{\dot\eta\,M_1+\eta\,\dot M_1}{D}
+\eta\,M_1\;\frac{\dot D}{D^2},
\qquad
\dot D=\frac{\dot M_2}{2\sqrt{M_2}}+\dot\varepsilon.
\]
Notice that if $M_2(t)=0$, then \eqref{eq:M1_M2_comparison} implies $M_1(t)=0$, hence $F(t)=0$ and the bound is trivial at such times. Therefore, we bound each term on $[\delta,T]$ using $D\ge\varepsilon_0$, $\|M_1\|\le G_{\max}$, \eqref{eq:M1dot-bound-app}, \eqref{eq:vdot-bound-app},
and \eqref{eq:M1_M2_comparison} together with $C_{12}(T)\le C_{12}^\star$:
\begin{align*}
\Big\|\frac{\dot\eta\,M_1+\eta\,\dot M_1}{D}\Big\|
&\le \frac{C_{\eta,1}G_{\max}+C_{\eta,0}C_{M,\delta,T}}{\varepsilon_0},\\
\Big\|\eta\,M_1\frac{\dot\varepsilon}{D^2}\Big\|
&\le \frac{C_{\eta,0}G_{\max}\varepsilon_1}{\varepsilon_0^2},\\
\Big\|\eta\,M_1\frac{1}{D^2}\frac{\dot M_2}{2\sqrt{M_2}}\Big\|
&\le \frac{C_{\eta,0}}{2\varepsilon_0^2}
\Big(\sup_{t\in[\delta,T]}\frac{\|M_1(t)\|}{\sqrt{M_2(t)}}\Big)
\Big(\sup_{t\in[\delta,T]}|\dot M_2(t)|\Big)\\
&\le \frac{C_{\eta,0}}{2\varepsilon_0^2}\,\sqrt{C_{12}^\star}\,C_{v,\delta,T}.
\end{align*}
Therefore,
\[
\sup_{t\in[\delta,T]}\|\dot F(t)\|\le C_{\delta,T},
\]
where one may take
\[
C_{\delta,T}:=
\frac{C_{\eta,1}G_{\max}+C_{\eta,0}C_{M,\delta,T}}{\varepsilon_0}
+\frac{C_{\eta,0}G_{\max}\varepsilon_1}{\varepsilon_0^2}
+\frac{C_{\eta,0}}{2\varepsilon_0^2}\,\sqrt{C_{12}^\star}\,C_{v,\delta,T},
\]
which is independent of $\alpha$.

Finally, since $F$ is absolutely continuous on $[\delta,T]$ and $\|\dot F\|$ is bounded a.e., for any $t\in[\delta,T-\alpha]$ we have the shift estimate
\[
\|F(t+\alpha)-F(t)\|
\le \int_t^{t+\alpha}\|\dot F(s)\|\,ds
\le C_{\delta,T}\,\alpha.
\]

If the dynamics is written as
\[
\dot\theta(t)=F(t+\alpha)+R_\rho(t),\qquad \|R_\rho(t)\|\le C_R\,\rho,
\]
then the shift estimate implies that, after redefining
\[
\widetilde R_\rho(t):=R_\rho(t)+\left[F(t+\alpha)-F(t)\right],
\]
one has
\[
\dot\theta(t)=F(t)+\widetilde R_\rho(t),
\qquad 
\|\widetilde R_\rho(t)\|\le C_R\,\rho+C_{\delta,T}\,\alpha,
\]
uniformly for $t\in[\delta,T-\alpha]$.

\section{Appendix: Control of the tracking error $e(t):=g(t)-M_1(t)$}\label{app:tracking-error}
Let $M_1$ be defined by
\[
\dot M_1(t)+\lambda_1 M_1(t)=\lambda_1 g(t),
\qquad M_1(0)=0,
\]
and define the tracking error
\[
e(t):=g(t)-M_1(t).
\]
Then
\[
\dot e(t)=\dot g(t)-\dot M_1(t)
=\dot g(t)-\lambda_1\big(g(t)-M_1(t)\big)
=\dot g(t)-\lambda_1 e(t),
\]
hence $e$ satisfies the linear ODE
\[
\dot e(t)+\lambda_1 e(t)=\dot g(t),
\qquad e(0)=g(0)-M_1(0)=g(0).
\]
By variation of constants,
\[
e(t)=e^{-\lambda_1 t}\,g(0)+\int_0^t e^{-\lambda_1 (t-s)}\,\dot g(s)\,ds.
\]
Therefore, for any $t\ge 0$,
\[
\begin{aligned}
\|g(t)-M_1(t)\|
&
\le e^{-\lambda_1 t}\,\|g(0)\|
+\int_0^t e^{-\lambda_1 (t-s)}\,\|\dot g(s)\|\,ds\\
&\le e^{-\lambda_1 t}\,\|g(0)\|
+\frac{1}{\lambda_1}\sup_{s\in[0,t]}\|\dot g(s)\|. 
\end{aligned}
\]
Since $\lambda_1=(1-\beta_1)/\alpha$, we have
\[
\frac{1}{\lambda_1}=\frac{\alpha}{1-\beta_1}=O(\alpha).
\]
In particular, for any fixed $\delta>0$ and all $t\in[\delta,T-\alpha]$,
\[
\|g(t)-M_1(t)\|
\le e^{-\lambda_1\delta}\,\|g(0)\|
+\frac{\alpha}{1-\beta_1}\,\|\dot g\|_{L^\infty([\delta,T-\alpha])}
\;\le\; e^{-\lambda_1\delta}\,\|g(0)\|+C_\delta\,\frac{\alpha}{1-\beta_1},
\]
provided $\|\dot g\|_{L^\infty([\delta,T-\alpha])}<\infty$. In particular, for fixed $\delta>0$ one has $e^{-(1-\beta_1)\delta/\alpha}=o(\alpha)$ as $\alpha\to 0$, so $\sup_{t\in[\delta,T-\alpha]}\|g(t)-M_1(t)\|=O(\alpha/(1-\beta_1))$.

\section{Appendix: Justification of the key step in Proposition~5}\label{App:Prop5}

In this appendix we justify in detail the step
\begin{align*}
 \tau \int_{\mathbb{R}} \mathrm{d}\xi \,
 \theta^i(\xi)\,e^{\frac{2\xi}{\alpha}}\,a(\xi)\,
 \frac{\delta F_i(\tau T_\xi\theta, \xi)}{\delta \theta^j(\sigma + t)}
 &=
 \tau\,e^{\frac{2(\sigma + t)}{\alpha}}\,a(\sigma + t)
 \int_{\mathbb{R}} \mathrm{d}\xi \,
 \theta^i(\xi)\,
 \frac{\delta F_j(\tau T_{\sigma + t} \theta,\sigma+t)}{\delta \theta^i(\xi)} \\
 &= \tau\,e^{\frac{2(\sigma + t)}{\alpha}} a(\sigma + t)\,
 \frac{\mathrm{d}}{\mathrm{d}\tau}
 \Bigl[F_j(\tau T_{\sigma + t}\theta, \sigma + t)\Bigr],
\end{align*}
used in the proof of Proposition~\ref{prop:nonlocal-lagrangian}.

\subsection*{1. Use of the reciprocity condition}
Recall the reciprocity condition in Proposition~\ref{prop:nonlocal-lagrangian}: for all trajectories
$\theta \in K$ and all $\xi,\sigma \in \mathbb{R}$,
\[
e^{\frac{2\xi}{\alpha}}\,a(\xi)
\frac{\delta F_i(T_\xi\theta, \xi)}{\delta \theta^j(\sigma)}
=
e^{\frac{2\sigma}{\alpha}}\,a(\sigma)
\frac{\delta F_j(T_\sigma\theta, \sigma)}{\delta \theta^i(\xi)}\,.
\]
This identity is assumed to hold for every trajectory in the kinematic space. In particular, it holds for the rescaled trajectory $\varphi = \tau \theta\,.$ Applying the last equation with this rescaled trajectory and the time arguments
$\xi$ and $\sigma + t$, we obtain
\begin{equation}
e^{\frac{2\xi}{\alpha}}\,a(\xi)
\frac{\delta F_i(T_\xi(\tau \theta), \xi)}{\delta (\tau\theta)^j(\sigma + t)}
=
e^{\frac{2(\sigma + t)}{\alpha}}\,a(\sigma + t)
\frac{\delta F_j(T_{\sigma + t}(\tau \theta), \sigma + t)}{\delta (\tau\theta)^i(\xi)}.
\label{eq:appendix-pre-chain}
\end{equation}
Using the linearity of the translation operator $T_\xi$,
\[
T_\xi(\tau\theta) = \tau\,T_\xi\theta, \qquad
T_{\sigma+t}(\tau\theta) = \tau\,T_{\sigma+t}\theta,
\]
\eqref{eq:appendix-pre-chain} becomes
\begin{equation}
e^{\frac{2\xi}{\alpha}} \alpha(\xi)
\frac{\delta T_i(\tau T_\xi\theta, \xi)}{\delta (\tau\theta)^j(\sigma + t)}
=
e^{\frac{2(\sigma + t)}{\alpha}} \alpha(\sigma + t)
\frac{\delta T_j(\tau T_{\sigma + t}\theta, \sigma + t)}{\delta (\tau\theta)^i(\xi)}.
\label{eq:appendix-pre-chain-2}
\end{equation}
We now apply the chain rule for the linear rescaling $\varphi^j(s) = \tau\,\theta^j(s)$. A variation $\delta\theta$ induces the variation $\delta\varphi^j(s) = \tau\,\delta\theta^j(s)$, and therefore
\[
\frac{\delta}{\delta(\tau\theta)^j(s)}
= \frac{1}{\tau}\,\frac{\delta}{\delta\theta^j(s)}.
\]
Using this in \eqref{eq:appendix-pre-chain-2}, we obtain
\begin{equation}
e^{\frac{2\xi}{\alpha}}\,a(\xi)
\frac{\delta F_i(\tau T_\xi\theta, \xi)}{\delta \theta^j(\sigma + t)}
=
e^{\frac{2(\sigma + t)}{\alpha}}\,a(\sigma + t)
\frac{\delta F_j(\tau T_{\sigma + t}\theta, \sigma + t)}{\delta \theta^i(\xi)}.
\label{eq:appendix-sym-applied}
\end{equation}
Multiplying both sides of \eqref{eq:appendix-sym-applied} by
$\tau\,\theta^i(\xi)$ and integrating over $\xi \in \mathbb{R}$ yields
\[
\begin{aligned}
 \tau \int_{\mathbb{R}} \mathrm{d}\xi \,
 \theta^i(\xi)\,e^{\frac{2\xi}{\alpha}}\,a(\xi)\,
 \frac{\delta F_i(\tau T_\xi\theta, \xi)}{\delta \theta^j(\sigma + t)}
 &=
 \tau\,e^{\frac{2(\sigma + t)}{\alpha}}\,a(\sigma + t)
 \int_{\mathbb{R}} \mathrm{d}\xi \,
 \theta^i(\xi)\,
 \frac{\delta F_j(\tau T_{\sigma + t} \theta,\sigma+t)}{\delta \theta^i(\xi)}.
\end{aligned}
\]
This is precisely the first equality used in the main text.

\subsection*{2. From the integral to the derivative
$\frac{\mathrm{d}}{\mathrm{d}\tau}F_j(\tau T_{\sigma+t}\theta,\sigma+t)$}
We now justify the second equality
\[
\int_{\mathbb{R}} \mathrm{d}\xi \,
 \theta^i(\xi)\,
 \frac{\delta T_j(\tau T_{\sigma + t} \theta,\sigma+t)}{\delta \theta^i(\xi)}
 =
 \frac{\mathrm{d}}{\mathrm{d}\tau}
 \Bigl[T_j(\tau T_{\sigma + t}\theta, \sigma + t)\Bigr].
\]
Fix $t$ and $\sigma$, and define the functional $F_j[\theta] := F_j(T_{\sigma + t}\theta, \sigma + t).$ For each $\tau$, we evaluate $F$ along the rescaled trajectory
\[
\theta_\tau := \tau \theta, \qquad
F_j[\theta_\tau] = F_j(\tau T_{\sigma + t}\theta, \sigma + t).
\]
The derivative of $F_j[\theta_\tau]$ with respect to $\tau$ is
\begin{equation}
\frac{\mathrm{d}}{\mathrm{d}\tau} F_j(\tau T_{\sigma + t}\theta, \sigma + t)
=
\frac{\mathrm{d}}{\mathrm{d}\tau} F_j[\theta_\tau]
=
\int_{\mathbb{R}} \mathrm{d}\xi \,
\frac{\delta F_j}{\delta\theta^i(\xi)}\bigg|_{\theta=\theta_\tau}
\frac{\partial\theta_\tau^i(\xi)}{\partial\tau}.
\label{eq:appendix-gateaux}
\end{equation}
Since $\theta_\tau^i(\xi) = \tau\,\theta^i(\xi)$, we have
\[
\frac{\partial\theta_\tau^i(\xi)}{\partial\tau} = \theta^i(\xi),
\]
and by the definition of the functional derivative of $F$,
\[
\frac{\delta F_j}{\delta\theta^i(\xi)}\bigg|_{\theta=\theta_\tau}
=
\frac{\delta F_j(\tau T_{\sigma + t}\theta, \sigma + t)}{\delta\theta^i(\xi)}.
\]
Substituting these expressions into \eqref{eq:appendix-gateaux} gives
\begin{equation}
\frac{\mathrm{d}}{\mathrm{d}\tau} F_j(\tau T_{\sigma + t}\theta, \sigma + t)
=
\int_{\mathbb{R}} \mathrm{d}\xi \,
\theta^i(\xi)\,
\frac{\delta F_j(\tau T_{\sigma + t}\theta, \sigma + t)}{\delta\theta^i(\xi)}.
\label{eq:appendix-integral-derivative}
\end{equation}
This is precisely the second equality used in the main text.

Notice that the only regularity assumption needed is that $F_j$ be functionally differentiable with respect to the trajectory $\theta$, a natural requirement in this setting.

\bibliography{iclr_conference}
\bibliographystyle{iclr_conference}

\end{document}